\title{Zero-Shot Sharpness-Aware Quantization for\\Pre-trained Language Models}
\author{
\textbf{Miaoxi Zhu$^1$\thanks{~~Miaoxi Zhu and Qihuang Zhong contribute equally to this work.}, 
Qihuang Zhong$^{1*}$,  
Li Shen$^2$,  
Liang Ding$^3$, 
Juhua Liu$^1$\thanks{~~Corresponding Authors: Juhua Liu (e-mail: liujuhua@whu.edu.cn), Bo Du (e-mail: dubo@whu.edu.cn)}, 
Bo Du$^{1\dagger}$, 
Dacheng Tao$^3$} \\
\fontsize{9.0pt}{\baselineskip}\selectfont $^{1}$ School of Computer Science, National Engineering Research Center for Multimedia Software, Institute of Artificial Intelligence \\ 
  \fontsize{9.0pt}{\baselineskip}\selectfont  and Hubei Key Laboratory of Multimedia and Network Communication Engineering, Wuhan University, China \\
\fontsize{9.0pt}{\baselineskip}\selectfont $^{2}$ JD~Explore~Academy, China \quad $^{3}$ The University of Sydney, Australia\\
\fontsize{9.0pt}{\baselineskip}\selectfont \texttt{\{zhumx,zhongqihuang,liujuhua,dubo\}@whu.edu.cn}, \texttt{\{mathshenli,liangding.liam,dacheng.tao\}@gmail.com}
}
\begin{document}
\maketitle

\begin{abstract}
Quantization is a promising approach for reducing memory overhead and accelerating inference, especially in large pre-trained language model (PLM) scenarios. While having no access to original training data due to security and privacy concerns has emerged the demand for zero-shot quantization. Most of the cutting-edge zero-shot quantization methods primarily \ding{182} apply to computer vision tasks, and \ding{183} neglect of overfitting problem in the generative adversarial learning process, leading to sub-optimal performance. Motivated by this, we propose a novel zero-shot sharpness-aware quantization (ZSAQ) framework for the zero-shot quantization of various PLMs. The key algorithm in solving ZSAQ is the SAM-SGA optimization, which aims to improve the quantization accuracy and model generalization via optimizing a minimax problem. We theoretically prove the convergence rate for the minimax optimization problem and this result can be applied to other nonconvex-PL minimax optimization frameworks. Extensive experiments on 11 tasks demonstrate that our method brings consistent and significant performance gains on both discriminative and generative PLMs, \textit{i.e.}, up to +6.98 average score. Furthermore, we empirically validate that our method can effectively improve the model generalization.
\end{abstract}
\section{Introduction}
Pre-trained language models (PLMs), such as BERT~\cite{devlin2018bert} and GPT-3~\cite{brown2020language}, have achieved great success in a variety of NLP tasks~\cite{raffel2020exploring,zhong2022toward,zhong2022knowledge,liu2023unified}. However, with the scaling of model size, the inference of larger PLMs, \textit{e.g.}, OPT~\cite{zhang2022opt} and LLaMA~\cite{touvron2023llama}, becomes more computationally expensive and memory-intensive~\cite{shen2023efficient}. Hence, it is crucial and green to lighten the PLMs and reduce the memory footprint~\cite{schwartz2020green}.

To achieve this goal, various model compression methods have been developed, including knowledge distillation \cite{liu2019improving}, pruning \cite{liu2018efficient} and \textit{etc}. Among these methods, quantization has attracted great attention (especially in the large language model scenarios)~\cite{yao2022zeroquant,frantar2022gptq}, owing to its impressive ability to reduce memory footprint and accelerate inference while maintaining the network structure \cite{bai2021towards}. 
However, quantization-aware training (QAT) usually requires retraining using the original training data to mitigate performance degradation. While in some application scenarios, access to the original training data is oftentimes not available due to the protection of privacy and consideration of data security.

In response to this problem, post-training quantization (PTQ)~\cite{bai2021towards} is proposed to address the training-data access, as it generally requires no retraining. But most of those methods would lead to an accuracy drop in low-precision quantization since they are training-free and primarily rely on the analysis of weight distributions. More recently, zero-shot quantization (ZSQ)~\cite{nagel2019data,zhuang2022data,zhang2021data} shows promising results in the computer vision community. Specifically, ZSQ performs the quantization process with the synthetic data generated from a generator. To alleviate the side effect of inaccurate synthetic data, \citet{choi2020data} further improve the ZSQ by adaptively tuning the generator with the supervision of adversarial learning. However, it is non-trivial to adopt such an adversarial-based ZSQ in the NLP field, as \ding{182} backpropagation on discrete words is not reasonable, \textit{i.e.}, it seems ``unlikely'' to pass the gradient through the text to the generator; \ding{183} minimizing the difference between the teacher and (quantized) student models would be prone to over-fitting problem, leading to poor model generalization.

To address the above issues, we propose a novel Zero-shot Sharpness-Aware Quantization (namely ZSAQ) framework to improve the performance and generalization of the quantized model. Firstly, regarding the problem \ding{182}, we design a simple feature adaptation module to convert the output token representations of the generator into the target quantized model's representation space, thus avoiding the gradient propagation on the discrete words. Secondly, for the problem \ding{183}, we are inspired by the sharpness-aware minimization (SAM)~\cite{foret2020sharpness} and propose an alternating SAM-SGA algorithm to boost the ZSAQ and improve model generalization. Specifically, on the one hand, SAM-SGA uses SAM to robustly minimize the divergence of output distributions between teacher and quantized student models. On the other hand, it uses stochastic gradient ascent (SGA) to encourage the generator model to maximize the divergence at each iteration. By optimizing such a minimax problem, we can not only train the well-performed with generative adversarial learning but also improve the generalization of quantized models.

Theoretically, we provide rigorous  analysis for the SAM-SGA algorithm solving the minimax optimization (ZSAQ). We show that SAM-SGA achieves $\mathcal{O}(1/\sqrt{T})$ convergence rate. Furthermore, our theoretical results are not limited to the single case but are applicable to a wide range of minimax optimization problems in both nonconvex-strongly-concave and nonconvex-PL conditions.
Empirically, we adopt our ZSAQ framework to quantize both the discriminative and generative PLMs, and evaluate 8 language understanding tasks and 3 language modeling tasks. Extensive experiments demonstrate the effectiveness and versatility of our approach. More encouragingly, our ZSAQ brings +6.98 average performance gains in the low-precision quantization, compared to the baselines. Extensive analyses show that our framework has the potential to expand to more large language models and prove that SAM-SGA indeed brings better model generalization.



In summary, our contributions are three-fold: (1) We propose a novel method, zero-shot sharpness-aware quantization (ZSAQ), which realizes zero-shot quantization without much accuracy loss. (2) We provide a theoretical convergence guarantee for the SAM-SGA algorithm which aims to solve the minimax optimization problem (ZSAQ). (3) Extensive experiments show that our SAM-SGA can bring consistent and significant performance gains, up to +6.98 average score, on both discriminative and generative PLMs.


    
    


\section{Related Works}\label{sec:relatedwork}
\noindent\textbf{Compression method for language models.} Compression is efficient in reducing computation cost, memory overhead, and energy consumption as well as shortens inference time for large neural network models, which is in great need for pre-trained language models to be deployed in the mobile or resource-constrained device. Quantization works by transmitting the models with lower-bit parameters. QAT works by minimizing the rounding error on the training dataset. FullyQT \cite{prato2020fully} shows the fully quantized Transformer can avoid any accuracy loss in translation quality. I-BERT \cite{kim2021bert} eliminates floating point calculation in BERT inference and achieves similar accuracy to the full-precision baseline. BinaryBERT \cite{bai2021binarybert} achieves good performance by ternary weight splitting which avoids the complex and irregular loss landscape. 
 There are some other techniques, low-rank factorization \cite{tahaei2021kroneckerbert,edalati2021kronecker,chen2020data,reid2021subformer}, factorizing the weight matrices which are usually low-rank into several smaller matrices by means of singular value decomposition. Parameter sharing \cite{rothe2020leveraging,takase2021lessons,reid2021subformer}, reduces memory overhead by reusing the same parameters in multiple computations. Pruning \cite{mishra2021accelerating,guo2020parameter,chen2020lottery,xia2022structured,he2022sparseadapter,liu2023comprehensive}, deletes some parameters which are "useless" in network under the same model performance.

\noindent\textbf{Zero-shot quantization.} In contrast to QAT, PTQ relies less on the training data and does not require end-to-end training \cite{nagel2019data,nahshan2021loss,nagel2020up,li2020brecq,zhao2019improving,hubara2020improving}, which can be zero-shot. \citet{bai2021towards} propose module-wise quantization error minimization and perform close to QAT. SmoothQuant \cite{xiao2022smoothquant} transmits the activation quantization to weights to smooth the outliers in activation and makes a large language model implemented efficiently. \citet{tao2022compression} proposes an adaptive quantization for different modules and performs comparably with the full-precision generative pre-trained language models. GPTQ \cite{frantar2022gptq}, a one-shot weight quantization method with approximate second-order information, can efficiently quantize generative pre-trained transformers with large amounts of parameters to 3 or 4 bits with negligible accuracy degradation. \citet{nahshan2021loss} figures out that aggressive quantization can lead to the steep curvature of the loss landscape, while they propose a method combining layer-by-layer quantization and multivariate quadratic optimization that can improve accuracy over current PTQ. NuQmm \cite{park2022nuqmm} uses a non-uniform quantization method which allows for a trade-off between accuracy and compression ratio. Another way to achieve zero-shot is by means of the generative adversarial technique. \cite{cai2020zeroq,liu2021zero,choi2022s} has shown promising performance on small-scale computer vision tasks, while it has not been applicable to the NLP realm. To the best of our knowledge, we are the first to adopt the adversarial learning approach for quantizing both the discriminative and generative PLMs in a zero-shot manner and provide its rigorous convergence analysis.

\section{Methodology}\label{sec:method}
In this section, we will present our zero-shot sharpness-aware quantization method.

\noindent\textbf{Network quantization.} For a real-valued vector $\mathbf{x}$ which represents the parameter in the pre-trained full-precision model, we quantize it into $\hat{\mathbf{x}}$ by the signed symmetric quantization:
\begin{equation}\label{eq:quantization}
\hat{\mathbf{x}}=s\left[clamp\left(\lfloor\frac{\mathbf{x}}{s}\rceil;-2^{b-1};2^{b-1}-1\right)\right],
\end{equation}
where $s\in\mathbb{R}^+$ denotes scale factor and $b\in\mathbb{Z}$ represents the bit-width of the quantized version. So the parameters can be projected to the integer grid: $\{-2^{b-1},-2^{b-1}+1,...,0,...,2^{b-1}-1\}$. And the default experimental setting is presented in Sec.~\ref{sec:exp}.

\noindent\textbf{Generative adversarial learning.} In our paper we achieve zero-shot by generative adversarial learning. Specifically, the generator $\mathcal{G}$ takes i.i.d. token samples $z$ as input to produce synthetic data. Then the generated sentence segments will be fed into the pre-trained model $\mathcal{P}$ and the quantized model $\mathcal{Q}$. Intuitively we can get the output distributions, denoted as $\mathcal{P}(\mathcal{G}(z))$ and $\mathcal{Q}(\mathcal{G}(z))$ respectively.

During the training process of the quantized model, it aims to simulate the output distribution of the pre-trained model. So the objective for $\mathcal{Q}$ is to minimize the discrepancy between $\mathcal{P}(\mathcal{G}(z))$ and $\mathcal{Q}(\mathcal{G}(z))$, denoted as $\mathcal{D}(\mathcal{P}(\mathcal{G}(z)),\mathcal{Q}(\mathcal{G}(z)))$. While for the generator $\mathcal{G}$, it tries to maximize the divergence, so that the quantized model $\mathcal{Q}$ can be trained to confuse even the most strict generator. We try to balance the trade-off between the generator $\mathcal{G}$ and the quantized model $\mathcal{Q}$ so that we can get a well-performed quantized model without access to the original training data. The structure of our method is demonstrated in the Figure \ref{fig:flow}. And the objective function of our method is shown below:
\begin{equation}
    \min_{\mathcal{Q}}\max_{\mathcal{G}}\mathbb{E}_{z\sim\mathcal{T}}\mathcal{D}(\mathcal{P}(\mathcal{G}(z)),\mathcal{Q}(\mathcal{G}(z))),
\end{equation}
where $\mathcal{D}$ represents the divergence MSE.

\begin{figure}
    \centering
    \includegraphics[scale=0.36]{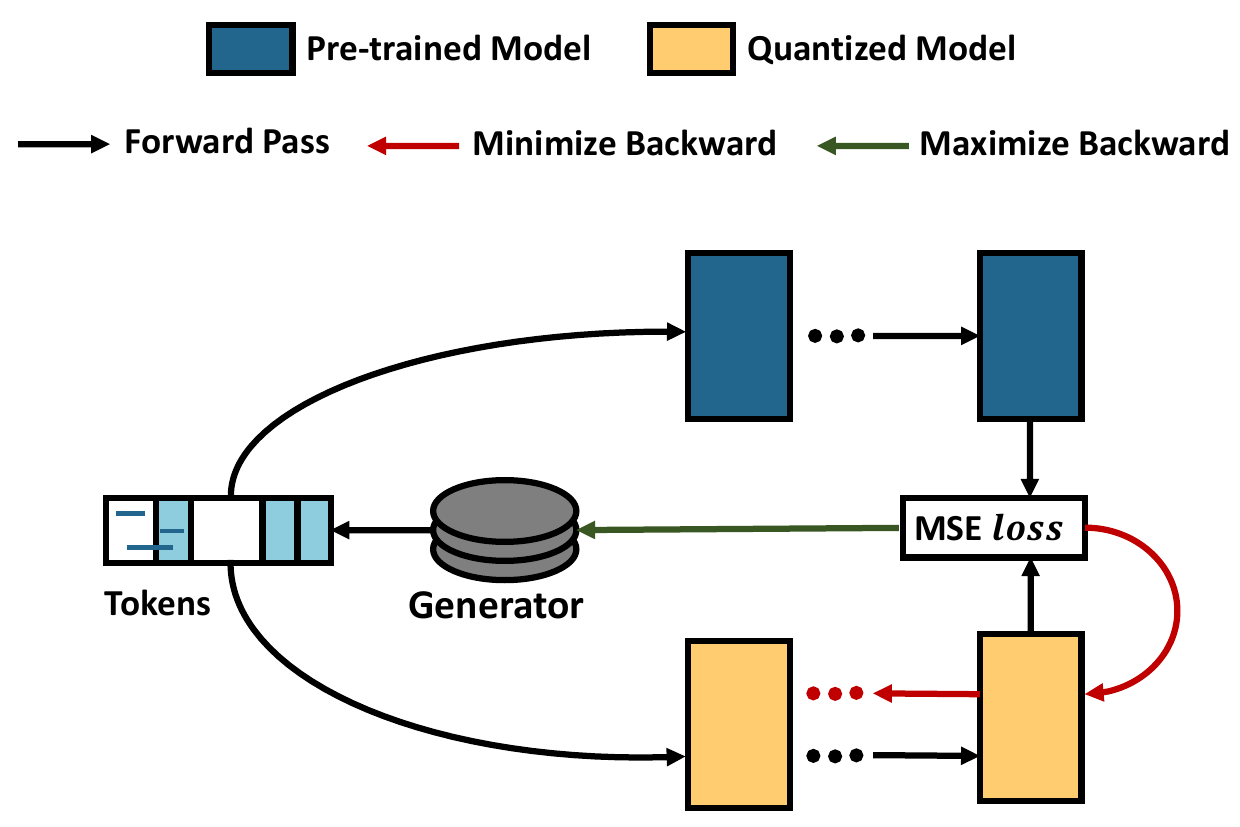}
    \caption{An overview of our ZSAQ framework.}\label{fig:flow}
\end{figure}

\noindent\textbf{ZSAQ: Zero-shot sharpness-aware quantization.} As shown in \citet{liu2021sharpness}, there is a sharper loss landscape in the low-precision model than in the full-precision model. To avoid overfitting problems caused by the training process, we should consider the generalization ability of our quantized models. Inspired by \citet{foret2020sharpness}, we pursue a flatter landscape during the training process of the quantized model which can guarantee a better generalization performance. So we can adjust our objective function as:
\begin{align}\label{eq:objective}
    \small
        \min_{\hat{\boldsymbol{\omega}}}
        &\{\max_{\mathcal{G}}\{\mathbb{E}_{z\sim\mathcal{T}}\mathcal{D}(\mathcal{P}(\mathcal{G}(\mathbf{z})), \mathcal{Q}(\hat{\boldsymbol{\omega}};\mathcal{G}(\mathbf{z})))\} \nonumber\\
        &\qquad +\mathcal{S}(\mathcal{P}; \mathcal{Q}(\hat{\boldsymbol{\omega}}))\},
    \end{align}
where we denote the sharpness function as:
\begin{align}\label{eq:SAQ}
        &\quad\mathcal{S}(\mathcal{P};\mathcal{Q}(\hat{\boldsymbol{\omega}}))\\       &\triangleq\max_{\|\mathbf{\epsilon}\|_2\leq\beta}\mathbb{E}_{z\sim\mathcal{T}}\mathcal{D}(\mathcal{P}(\mathcal{G}(\mathbf{z})),\mathcal{Q}(\hat{\boldsymbol{\omega}}+\mathbf{\epsilon};\mathcal{G}(\mathbf{z}))) \nonumber\\
        &\qquad        -\mathbb{E}_{\mathbf{z}\sim\mathcal{T}}\mathcal{D}(\mathcal{P}(\mathcal{G}(\mathbf{z})),\mathcal{Q}(\hat{\boldsymbol{\omega}};\mathcal{G}(\mathbf{z})))\}.\nonumber
    \end{align}

The sharpness function searches for the sharpest point among a neighbor centered at $\hat{\boldsymbol{\omega}}$ with a radius of $\beta$, which will represent the worst-case data. Then our optimization process not only minimizes the divergence but also the sharpness of the divergence, i.e., getting a quantized model with both lower divergence and a flatter landscape.

Solving the inner maximization problem to obtain the optimal:
\begin{equation}\label{eq:epsilon-optimal}
\small
    \mathbf{\epsilon}^*\!(\hat{\boldsymbol{\omega}}\!)\!\!=\!\!\beta\frac{\nabla_{\!\boldsymbol{\omega}}\mathbb{E}_{z\!\sim\!\mathcal{T}}\mathcal{D}(\mathcal{P}(\mathcal{G}(\mathbf{z})),\!\mathcal{Q}(\hat{\boldsymbol{\omega}};\!\mathcal{G}(\mathbf{z})))}{\|\nabla_{\!\boldsymbol{\omega}}\mathbb{E}_{z\!\sim\!\mathcal{T}}\mathcal{D}(\mathcal{P}(\mathcal{G}(\mathbf{z})),\!\mathcal{Q}(\hat{\boldsymbol{\omega}};\!\mathcal{G}(\mathbf{z})))\|_2}.
\end{equation}

\noindent\textbf{Remark.} Here we use $\hat{\boldsymbol{\omega}}+\mathbf{\epsilon}$ as the parameter in the sharpness function \eqref{eq:SAQ} instead of $\widehat{\boldsymbol{\omega}\!+\!\mathbf{\epsilon}}$, for the reason that $\mathbf{\epsilon}$ may be so small that there will be no difference adding the perturbation. And this issue is also illustrated in \citet{liu2021sharpness}.

\noindent\textbf{Algorithm.} For the objective function, we train the generator to maximize the discrepancy between the pre-trained model and the quantized model:
\begin{equation}\label{eq:max}
    \mathbb{E}_{z\sim\mathcal{T}}\mathcal{D}(\mathcal{P}(\mathcal{G}(\mathbf{z})),\mathcal{Q}(\hat{\boldsymbol{\omega}};\mathcal{G}(\mathbf{z})))\
\end{equation}

Then we train the quantized model to minimize both the discrepancy value and the sharpness of the discrepancy, i.e., the addition of $\mathcal{D}(\mathcal{P},\mathcal{Q})$ and $\mathcal{S}(\mathcal{P},\mathcal{Q})$, on the synthetic data produced by the generator $\mathcal{G}$. While the addition comes out to be:
\begin{equation}\label{eq:min}
\mathbb{E}_{z\sim\mathcal{T}}\mathcal{D}(\mathcal{P}(\mathcal{G}(\mathbf{z})),\mathcal{Q}(\hat{\boldsymbol{\omega}}+\mathbf{\epsilon}^*(\hat{\boldsymbol{\omega}});\mathcal{G}(\mathbf{z})))\},
\end{equation}
where the optimal $\epsilon^*(\hat{\boldsymbol{\omega}})$ is obtained from Eq.~\eqref{eq:epsilon-optimal}.

We obtain our final algorithm by applying stochastic gradient descent on the minimization objective \eqref{eq:min}, which turns out to be SAM, and applying stochastic gradient ascent on the maximization objective \eqref{eq:max}. Generator $\mathcal{G}$ is parameterized by $\boldsymbol{\theta}$, quantized model $\mathcal{Q}$ initially quantized from the pre-trained model $\mathcal{P}$ and the quantized parameter is denoted as $\hat{\boldsymbol{\omega}}$, while the parameters before quantization is denoted as $\boldsymbol{\omega}$. We alternatively update the parameters $\hat{\boldsymbol{\omega}}$ and $\boldsymbol{\theta}$ respectively. The pseudo code is shown in the Algorithm \ref{al:samsga}.

\begin{algorithm}
    \renewcommand{\algorithmicrequire}{\textbf{Input:}}
    \renewcommand{\algorithmicensure}{\textbf{Output:}}
	\caption{SAM-SGA for ZSAQ} 
	\label{al:samsga} 
	\begin{algorithmic}[1]
		\REQUIRE generator $\mathcal{G}_{\boldsymbol{\theta}}$, quantized model $\mathcal{Q}(\hat{\boldsymbol{\omega}})$, pre-trained model $\mathcal{P}$, learning rate $\eta_{\boldsymbol{\theta}}$ and $\eta_{\boldsymbol{\omega}}$, and the neighborhood size $\beta$ for the perturbation
            \STATE \textbf{Initialize:} $\mathcal{Q}(\hat{\boldsymbol{\omega}}_0)$ is quantized from $\mathcal{P}$ by Eq.\eqref{eq:quantization}
		\FOR{t=0,1,...,T}
            \STATE //training quantized model by SAM
            \STATE Compute the optimal perturbation $\mathbf{\epsilon}^*(\hat{\boldsymbol{\omega}})$ by Eq.\eqref{eq:epsilon-optimal};
            \STATE Calculate the gradient and update $\boldsymbol{\omega}_{t+1}$ via Eq.\eqref{eq:min};
            \STATE Quantize $\boldsymbol{\omega}_{t+1}$ to $\hat{\boldsymbol{\omega}}_{t+1}$ by Eq  .\eqref{eq:quantization};
            \STATE //training generator by SGA
            \STATE Update $\boldsymbol{\theta}_{t+1}$ by Eq.\eqref{eq:max};
		\ENDFOR
		\ENSURE  $\hat{\boldsymbol{\omega}}$ uniformly drawn from $\{\hat{\boldsymbol{\omega}}_1,...,\hat{\boldsymbol{\omega}}_T\}$.
	\end{algorithmic} 
\end{algorithm}

\section{Optimization}
\label{sec:opt}
In this part, we will analyse the convergence of our algorithm to solve this minimax optimization problem, which can provide a theoretical guarantee for our method. 

First in order to make our theoretical convergence result applicable to a broader variety of such problems, we replace the detailed loss function $\mathbb{E}_{\mathbf{z}\sim\mathcal{T}}\mathcal{D}(\mathcal{P}(\mathcal{G}_{\boldsymbol{\theta}}(\mathbf{z})),\mathcal{Q}(\hat{\boldsymbol{\omega}};\mathcal{G}_{\boldsymbol{\theta}}(\mathbf{z})))$ with a general representation $f(\hat{\boldsymbol{\omega}},\boldsymbol{\theta})=\mathbb{E}_{\mathbf{\xi}}[f(\hat{\boldsymbol{\omega}},\boldsymbol{\theta};\mathbf{\xi})]$, where $\xi$ follows i.i.d. Then our detailed objective function \eqref{eq:objective} can be turned into:
\begin{equation}
    \min_{\hat{\boldsymbol{\omega}}}\{\max_{\boldsymbol{\theta}}f(\hat{\boldsymbol{\omega}},\boldsymbol{\theta})+f^{sharp}(\hat{\boldsymbol{\omega}},\boldsymbol{\theta})\},
\end{equation}
where $f^{sharp}(\hat{\boldsymbol{\omega}},\boldsymbol{\theta})\triangleq f^{SAM}(\hat{\boldsymbol{\omega}},\boldsymbol{\theta})-f(\hat{\boldsymbol{\omega}},\boldsymbol{\theta})$ and $f^{SAM}(\hat{\boldsymbol{\omega}},\boldsymbol{\theta})\triangleq\max_{\|\mathbf{\epsilon}\|_2\leq\beta}f(\hat{\boldsymbol{\omega}}+\mathbf{\epsilon},\boldsymbol{\theta})$. We can solve the maximization problem that $\mathbf{\epsilon}^*(\hat{\boldsymbol{\omega}})=\beta\frac{\nabla_{\hat{\boldsymbol{\omega}}}f(\hat{\boldsymbol{\omega}},\boldsymbol{\theta})}{\|\nabla_{\hat{\boldsymbol{\omega}}}f(\hat{\boldsymbol{\omega}},\boldsymbol{\theta})\|_2}$. So that we can write $f^{SAM}(\hat{\boldsymbol{\omega}},\boldsymbol{\theta})$ as $f(\hat{\boldsymbol{\omega}}+\mathbf{\epsilon}^*(\hat{\boldsymbol{\omega}}),\boldsymbol{\theta})$.

Next, we restate the update rule of our Algorithm \ref{al:samsga} in our new denotation that:
\begin{equation*}
\left\{
\begin{array}{l}
   \boldsymbol{\omega}_{t+1}=\hat{\boldsymbol{\omega}}_t-\eta_{\boldsymbol{\omega}}g_{\boldsymbol{\omega}}(\hat{\boldsymbol{\omega}}_t+\beta\frac{g_{\boldsymbol{\omega}}(\hat{\boldsymbol{\omega}}_t,\boldsymbol{\theta}_t)}{\|g_{\boldsymbol{\omega}}(\hat{\boldsymbol{\omega}}_t,\boldsymbol{\theta}_t)\|_2},\boldsymbol{\theta}_t)\\
        \hat{\boldsymbol{\omega}}_{t+1}=Quantization(\boldsymbol{\omega}_{t+1})\\
        \boldsymbol{\theta}_{t+1}=\boldsymbol{\theta}_t+\eta_{\boldsymbol{\theta}}g_{\boldsymbol{\theta}}(\hat{\boldsymbol{\omega}}_t,\boldsymbol{\theta}_t)
\end{array}
\right. ,
\end{equation*}
where $g_{\boldsymbol{\omega}}$ and $g_{\boldsymbol{\theta}}$ are the approximated gradients calculated by $g_{\boldsymbol{\omega}}(\boldsymbol{\omega},\boldsymbol{\theta})=\frac{1}{M}\sum_{i=1}^MG_{\boldsymbol{\omega}}(\boldsymbol{\omega},\boldsymbol{\theta};\mathbf{\xi}_i)$ and $g_{\boldsymbol{\theta}}(\boldsymbol{\omega},\boldsymbol{\theta})=\frac{1}{M}\sum_{i=1}^MG_{\boldsymbol{\theta}}(\boldsymbol{\omega},\boldsymbol{\theta};\mathbf{\xi}_i)$ respectively. And $G\triangleq(G_{\boldsymbol{\omega}},G_{\boldsymbol{\theta}})$ is the unbiasd gradient of the function $f$, i.e., $\mathbb{E}[G(\boldsymbol{\omega},\boldsymbol{\theta};\mathbf{\xi})]\in\nabla f(\boldsymbol{\omega},\boldsymbol{\theta})$.

\noindent\textbf{Remark.} Actually function $f$ maps $\mathbb{R}^d\times\mathbb{R}^n$ to $\mathbb{R}$. The first parameter can be either a floating point or an integer. It is well-defined for us to represent $f(\hat{\boldsymbol{\omega}},\cdot)$ or $\nabla_{\boldsymbol{\omega}}f(\hat{\boldsymbol{\omega}},\cdot)$.

Before illustrating the theoretical results, we first introduce some necessary assumptions and definitions that are used in the analysis.

\newtheorem{assumption}{Assumption}[section]
\begin{assumption}\label{as:smooth}
$f(\boldsymbol{\omega},\boldsymbol{\theta})$ is differential and $l$-Lipschitz smooth:
\begin{equation*}
    \begin{aligned}
    \|\nabla_{\!\boldsymbol{\omega}}f(\boldsymbol{\omega}_1,\boldsymbol{\theta})-\nabla_{\!\boldsymbol{\omega}}f(\boldsymbol{\omega}_2,\boldsymbol{\theta})\|&\leq l\|\boldsymbol{\omega}_1-\boldsymbol{\omega}_2\|,\quad\forall \boldsymbol{\theta}\\
    \|\nabla_{\!\boldsymbol{\theta}}f(\boldsymbol{\omega},\boldsymbol{\theta}_1)-\nabla_{\!\boldsymbol{\theta}}f(\boldsymbol{\omega},\boldsymbol{\theta}_2)\|&\leq l\|\boldsymbol{\theta}_1-\boldsymbol{\theta}_2\|,\quad\forall \boldsymbol{\omega}
    \end{aligned}
\end{equation*}
\end{assumption}

\begin{assumption}\label{as:pl}
$f(\boldsymbol{\omega},\cdot)$ satisfies PL condition on every given $\boldsymbol{\omega}$, i.e. there exists $\mu>0$ such that $\|\nabla_{\boldsymbol{\theta}}f(\boldsymbol{\omega},\boldsymbol{\theta})\|^2\geq2\mu[\max_{\boldsymbol{\theta}}f(\boldsymbol{\omega},\boldsymbol{\theta})-f(\boldsymbol{\omega},\boldsymbol{\theta})]$.
\end{assumption}

\begin{assumption}\label{as:diameter}
Parameter $\boldsymbol{\theta}$ is restricted in a convex and bounded set with a diameter of $D$.
\end{assumption} 

\begin{assumption}\label{as:bv}
The approximated gradient is unbiased and has a bounded variance, i.e.,
    \begin{gather*}
       \mathbb{E}[G(\boldsymbol{\omega},\boldsymbol{\theta},\xi)]\in\nabla f(\boldsymbol{\omega},\boldsymbol{\theta});\\
    \mathbb{E}\|G(\boldsymbol{\omega},\boldsymbol{\theta},\xi)-\nabla f(\boldsymbol{\omega},\boldsymbol{\theta})\|^2\leq\sigma^2.
    \end{gather*}
\end{assumption}

\noindent\textbf{Remark.} These assumptions are common in the context of minimax optimization problems and can be realized in empirical practice. 

\newtheorem{lemma}{Lemma}[section]

\begin{lemma}\cite{zhu2020towards}\label{le:quantizationerror}
    The quantization error is bounded by the grain, where $d$ denotes the dimension of parameter $\boldsymbol{\omega}$:
    $$\|\hat{\boldsymbol{\omega}}-\boldsymbol{\omega}\|\leq\sqrt{d}\Delta.$$
\end{lemma}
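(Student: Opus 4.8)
The plan is to exploit the fact that the quantization map in Eq.~\eqref{eq:quantization} acts independently on each coordinate, so the global $\ell_2$ error decomposes into a sum of scalar rounding errors. First I would write $\|\hat{\boldsymbol{\omega}}-\boldsymbol{\omega}\|^2 = \sum_{i=1}^d (\hat{\omega}_i - \omega_i)^2$ and thereby reduce the lemma to a per-coordinate bound of the form $|\hat{\omega}_i - \omega_i| \le \Delta$, where $\Delta$ is the grain, i.e.\ the spacing $s$ between two adjacent points of the scaled integer grid $\{-2^{b-1}s, \dots, (2^{b-1}-1)s\}$.

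Next I would analyze the scalar operation $x \mapsto s\,[\mathrm{clamp}(\lfloor x/s \rceil; -2^{b-1}; 2^{b-1}-1)]$ induced on each entry. For any coordinate whose rescaled value $\omega_i/s$ lies inside the clamping interval $[-2^{b-1}, 2^{b-1}-1]$, the clamp is inactive and the map collapses to nearest-integer rounding followed by rescaling; since $|\lfloor y \rceil - y| \le 1/2$ for every $y$, multiplying through by $s$ gives $|\hat{\omega}_i - \omega_i| = s\,|\lfloor \omega_i/s \rceil - \omega_i/s| \le s/2 \le \Delta$. Substituting this coordinatewise bound into the decomposition yields $\|\hat{\boldsymbol{\omega}}-\boldsymbol{\omega}\|^2 \le \sum_{i=1}^d \Delta^2 = d\,\Delta^2$, and taking square roots delivers the claimed $\|\hat{\boldsymbol{\omega}}-\boldsymbol{\omega}\| \le \sqrt{d}\,\Delta$. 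The summation step is entirely routine once the scalar bound is in hand.

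The step I expect to be delicate is the clamping behaviour for out-of-range coordinates. When $\omega_i/s$ overshoots the grid boundaries the clamp saturates, and the per-coordinate error is then no longer controlled by the rounding bound $s/2$ but rather by how far $\omega_i$ lies beyond the extreme grid point, which in principle can be arbitrarily large. I would handle this either through the standing modelling assumption that the pre-trained parameters already lie within the representable range (so the clamp is never triggered and the clean $s/2$ bound applies), or by folding the worst-case saturated deviation into the definition of the grain $\Delta$ so that $|\hat{\omega}_i - \omega_i| \le \Delta$ holds uniformly over all coordinates. Under the conventions of \cite{zhu2020towards} the grain is defined precisely so that this uniform per-coordinate estimate is guaranteed, after which the $\sqrt{d}$ factor emerges solely from the dimension of the summation.
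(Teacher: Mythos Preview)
Your proposal is correct and is precisely the ``natural derivation from Eq.~\eqref{eq:quantization}'' that the paper's remark alludes to; the paper does not actually supply a proof of this lemma but simply cites \cite{zhu2020towards} and states that the bound follows from the coordinatewise quantization rule with the grain $\Delta$ fixed by the quantization setting. Your discussion of the clamping edge case is more careful than anything the paper offers, and your resolution (absorbing saturation into the definition of $\Delta$, or assuming the parameters lie in range) is exactly the standing convention.
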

\noindent\textbf{Remark.} The quantization error can be naturally derived from Eq.~\eqref{eq:quantization} and the grain $\Delta$ is decided by default quantization setting, which will not be discussed in our paper.

\newtheorem{definition}{Definition}[section]
\begin{definition}
$\Phi(\boldsymbol{\omega})=\max_{\boldsymbol{\theta}}f(\boldsymbol{\omega},\boldsymbol{\theta})$; $\Phi^*=\min_{\boldsymbol{\omega}}\Phi(\boldsymbol{\omega})$. And we define an $\epsilon$-stationary point when $\mathbb{E}\|\nabla\Phi(\boldsymbol{\omega})\|^2\leq\epsilon$.
\end{definition}
\noindent\textbf{Remark.} By defining the function $\Phi$, we can simplify our minimax optimization with two opposite variables into a minimization optimization problem with a single variable. When the gradient of $\Phi(\boldsymbol{\omega})$ diminishes to zero, we consider our algorithm converges.

\begin{table*}[h]
\centering
\scalebox{0.85}{
\begin{tabular}{lccccccccccc}
\toprule
& \#Bits & CoLA                 & MNLI                 & MRPC                 & QNLI                 & QQP                  & RTE                  & SST2                 & STSB   & \multicolumn{2}{c}{GLUE} \\ 
\cmidrule(lr){3-10} \cmidrule(lr){11-12}
\multirow{-2}{*}{Method} & (W-A) & \textit{Mcc.}        & \textit{Acc.}  & \textit{Acc.}   & \textit{Acc.}  & \textit{Pear.}   & \textit{Acc.}  & \textit{Acc.}  & \textit{Acc.}  & Avg. &$\Delta$ ($\uparrow$)        \\ \midrule
Full-precision & W32A32       & 60.82                & 83.12                & 85.05                & 90.52                & 89.91                & 65.34                & 92.43                & 88.84                & 82.00  &-  \\ \midrule \midrule
Baseline      & W8A8         & 60.07                & 83.16                & 84.31                & 90.41                & 89.86                & 64.98                        & 92.66                & 88.82                & 81.78                 & *             \\
QAT-GT      & W8A8         &  62.26                & \bf 83.58                & \bf 85.29                & \bf 90.57                & 89.87                & \bf 66.79                        & 92.66                & 88.85                & \bf 82.48                 &\textcolor[RGB]{0,176,80}{+0.70}           \\
QAT-Rand      & W8A8         & 60.06                & 83.37                & 84.81                & 90.52                & 89.92                & 65.14                        & \bf 93.00                   & \bf 88.98                & 81.98  &\textcolor[RGB]{0,176,80}{+0.20} \\
\hdashline
\textbf{SAM-SGA} & W8A8         & \bf 63.70                 & 82.93                & 85.05                & 90.52                & \bf 89.93                & 65.34 & 92.08                & 88.96                & 82.31     &\textcolor[RGB]{0,176,80}{\bf +0.81}                      \\ \midrule
Baseline      & W4A8 & 52.13                & 77.24                & 82.60                 & 85.10                 & 86.34                & 50.90                 & 89.11                & 84.41                & 75.98    &*            \\
QAT-GT         & W4A8 & \bf 57.30                 & \bf 78.22                & 83.09                & \bf 86.95                & \bf 86.41                & 55.86                & \bf 90.71                & \bf 87.34                & 78.24    &\textcolor[RGB]{0,176,80}{+2.26}             \\
QAT-Rand       & W4A8 & 53.37                & 76.98                & 77.94                & 86.61                & 86.23                & 48.38                & 90.48                & 86.48                & 75.81    &\textcolor[RGB]{255,0,0}{ -0.17}             \\
\hdashline
\textbf{SAM-SGA} & W4A8 & 53.01                & 76.64                & \bf 83.58                & 86.12                & 86.16                & \bf 64.26                & 89.68                & 86.94                & \bf 78.30    &\textcolor[RGB]{0,176,80}{\bf +2.32}             \\
\midrule
Baseline      & W2A8 & 0.00                    & 35.47                & 68.38                & 49.13                & 67.29                & 47.29                & 77.52                & 58.43                & 50.44     &*           \\
QAT-GT         & W2A8 & 0.00                    & \bf 39.86                & \bf 69.61                & 50.59                & 67.55                & 47.29                & \bf 77.64                & \bf 69.41                & 52.74    &\textcolor[RGB]{0,176,80}{+2.30}             \\
QAT-Rand       & W2A8 & 0.00                    & 39.26                & 31.62                & 49.52                & 67.93                & 47.29                & 76.83                & 63.11                & 46.95    &\textcolor[RGB]{255,0,0}{ -3.49}              \\
\hdashline
\textbf{SAM-SGA} & W2A8 & \bf 10.16                & 36.84                & 68.38                & \bf 53.41                & \bf 89.95                & \bf 57.76                & 76.26                & 66.61                & \bf 57.42   &\textcolor[RGB]{0,176,80}{\bf +6.98}         \\
\bottomrule
\end{tabular}
}
\caption{Results of BERT$\rm_{\texttt{base}}$ on the development set of GLUE~\cite{wang2018glue} benchmark. ``\#Bits (W-A)'' denotes the bit-width for weights of Transformer layers and activations.
}
\label{tab:bert_base}
\end{table*}

\newtheorem{theorem}{Theorem}[section]
\begin{theorem}\label{le:pl:phi}
Under Assumption \ref{as:smooth},\ref{as:pl},\ref{as:diameter},\ref{as:bv} and restrictions $\beta\leq\frac{\eta_{\theta}}{2l}$, $\eta_{\boldsymbol{\theta}}=64\kappa^2\eta_{\omega}$, $\eta_{\boldsymbol{\omega}}\leq\min\{\frac{1}{128\kappa^2l},\sqrt{\frac{M(\mathbb{E}[\Phi(\hat{\boldsymbol{\omega}}_0)]-\mathbb{E}[\Phi^*])}{132T\kappa^4l\sigma^2}}\}$, we have the convergence bound for our problem:
    \begin{align}
        &\quad\frac{1}{T}\sum_{t=0}^{T-1}\mathbb{E}\|\nabla\Phi(\hat{\boldsymbol{\omega}})\|^2\\
        &\leq2\sqrt{\frac{(\Phi(\hat{\boldsymbol{\omega}}_0)-\Phi^*)\kappa^4\sigma^2}{MT}}+\mathcal{O}(d\Delta^2). \nonumber
    \end{align}

\end{theorem}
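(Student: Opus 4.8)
The plan is to reduce the minimax problem to an analysis of the single-variable envelope $\Phi(\boldsymbol{\omega})=\max_{\boldsymbol{\theta}}f(\boldsymbol{\omega},\boldsymbol{\theta})$, following the standard nonconvex--PL template but augmented to absorb the two error sources specific to our setting: the SAM perturbation $\boldsymbol{\epsilon}^*(\hat{\boldsymbol{\omega}})$ and the quantization map. First I would establish that $\Phi$ is $L_\Phi$-smooth with $L_\Phi=\mathcal{O}(l\kappa)$, where $\kappa=l/\mu$. Under Assumptions~\ref{as:smooth} and~\ref{as:pl}, the inner maximizer $\boldsymbol{\theta}^*(\boldsymbol{\omega})$ is well behaved in the relevant sense and a Danskin-type identity gives $\nabla\Phi(\boldsymbol{\omega})=\nabla_{\boldsymbol{\omega}}f(\boldsymbol{\omega},\boldsymbol{\theta}^*(\boldsymbol{\omega}))$; combining this with $l$-Lipschitz smoothness and the PL-induced Lipschitz continuity of $\boldsymbol{\theta}^*(\cdot)$ yields the smoothness constant. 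This step, which uses Assumptions~\ref{as:pl} and~\ref{as:diameter} to guarantee the maximum is attained and stable, is what licenses a descent inequality in $\Phi$.

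Next I would write the one-step descent for $\Phi$ along the primal update. Applying $L_\Phi$-smoothness to the move from $\hat{\boldsymbol{\omega}}_t$ to $\boldsymbol{\omega}_{t+1}=\hat{\boldsymbol{\omega}}_t-\eta_{\boldsymbol{\omega}}\,g_{\boldsymbol{\omega}}(\hat{\boldsymbol{\omega}}_t+\boldsymbol{\epsilon}^*,\boldsymbol{\theta}_t)$ and then passing to $\hat{\boldsymbol{\omega}}_{t+1}$ via Lemma~\ref{le:quantizationerror} introduces three controllable gaps against the ideal gradient $\nabla\Phi(\hat{\boldsymbol{\omega}}_t)=\nabla_{\boldsymbol{\omega}}f(\hat{\boldsymbol{\omega}}_t,\boldsymbol{\theta}^*(\hat{\boldsymbol{\omega}}_t))$: (i) the SAM perturbation, bounded by $\|\nabla_{\boldsymbol{\omega}}f(\hat{\boldsymbol{\omega}}_t+\boldsymbol{\epsilon}^*,\boldsymbol{\theta}_t)-\nabla_{\boldsymbol{\omega}}f(\hat{\boldsymbol{\omega}}_t,\boldsymbol{\theta}_t)\|\leq l\beta$ since $\|\boldsymbol{\epsilon}^*\|\leq\beta$; (ii) the dual suboptimality, bounded by $l\|\boldsymbol{\theta}_t-\boldsymbol{\theta}^*(\hat{\boldsymbol{\omega}}_t)\|$, which the PL/quadratic-growth property ties to the gap $\delta_t\triangleq\Phi(\hat{\boldsymbol{\omega}}_t)-f(\hat{\boldsymbol{\omega}}_t,\boldsymbol{\theta}_t)$; and (iii) the quantization displacement $\|\hat{\boldsymbol{\omega}}_{t+1}-\boldsymbol{\omega}_{t+1}\|\leq\sqrt{d}\Delta$, which contributes the eventual $\mathcal{O}(d\Delta^2)$ term. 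Taking conditional expectation and invoking Assumption~\ref{as:bv} replaces the stochastic gradients by their means at the price of a $\sigma^2/M$ variance term.

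I would then control the dual gap $\delta_t$ through the SGA step. Using $l$-smoothness in $\boldsymbol{\theta}$ together with the PL inequality of Assumption~\ref{as:pl}, the ascent step $\boldsymbol{\theta}_{t+1}=\boldsymbol{\theta}_t+\eta_{\boldsymbol{\theta}}\,g_{\boldsymbol{\theta}}(\hat{\boldsymbol{\omega}}_t,\boldsymbol{\theta}_t)$ yields a linear contraction of the form $\delta_{t+1}\lesssim(1-c\,\mu\eta_{\boldsymbol{\theta}})\delta_t$, plus terms measuring how far $\Phi$ and $\boldsymbol{\theta}^*$ move under the primal/quantization update, plus a variance term; the diameter bound of Assumption~\ref{as:diameter} keeps the perturbation-induced cross terms finite. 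The crucial device is to form the Lyapunov function $V_t=\Phi(\hat{\boldsymbol{\omega}}_t)+c\,\delta_t$ and to choose the coupling constant $c$ and the ratio $\eta_{\boldsymbol{\theta}}=64\kappa^2\eta_{\boldsymbol{\omega}}$ so that the positive $\delta_t$ contributions from the primal descent are dominated by the negative $\delta_t$ contraction from the dual step, leaving a clean one-step inequality of the shape $\mathbb{E}[V_{t+1}]\leq\mathbb{E}[V_t]-\tfrac{\eta_{\boldsymbol{\omega}}}{2}\mathbb{E}\|\nabla\Phi(\hat{\boldsymbol{\omega}}_t)\|^2+\mathcal{O}(\kappa^4 l\,\eta_{\boldsymbol{\omega}}^2\sigma^2/M)+\mathcal{O}(d\Delta^2)$.

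Finally I would telescope $V_t$ over $t=0,\dots,T-1$, use $V_T\geq\Phi^*$ up to the nonnegative gap, divide by $\eta_{\boldsymbol{\omega}}T$, and substitute the step-size schedule. The restriction $\beta\leq\eta_{\boldsymbol{\theta}}/(2l)$ guarantees the SAM error $l\beta$ is absorbed into the $\eta_{\boldsymbol{\omega}}$-order terms rather than polluting the leading rate, and the explicit choice $\eta_{\boldsymbol{\omega}}\leq\sqrt{M(\mathbb{E}[\Phi(\hat{\boldsymbol{\omega}}_0)]-\mathbb{E}[\Phi^*])/(132T\kappa^4 l\sigma^2)}$ balances the $1/(\eta_{\boldsymbol{\omega}}T)$ and $\eta_{\boldsymbol{\omega}}\sigma^2/M$ contributions to produce the $2\sqrt{(\Phi(\hat{\boldsymbol{\omega}}_0)-\Phi^*)\kappa^4\sigma^2/(MT)}$ main term, while the accumulated quantization displacement collapses to the additive $\mathcal{O}(d\Delta^2)$. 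I expect the main obstacle to be the third step: because the primal gradient is evaluated at a \emph{perturbed, then quantized} point, the movement of the inner maximizer $\boldsymbol{\theta}^*(\hat{\boldsymbol{\omega}}_t)$ must be bounded against a step that is itself corrupted by both $\boldsymbol{\epsilon}^*$ and the rounding error, so tuning the potential constant $c$ and the factor $64\kappa^2$ to make every stray $\delta_t$ term cancel is the delicate part.
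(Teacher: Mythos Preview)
Your proposal is correct and follows essentially the same approach as the paper: establish $L_\Phi=\mathcal{O}(\kappa l)$ smoothness of $\Phi$, derive a descent bound for $\Phi$ that isolates the SAM, dual-suboptimality, variance, and quantization errors, construct the Lyapunov function $V_t=\Phi(\hat{\boldsymbol{\omega}}_t)+\alpha[\Phi(\hat{\boldsymbol{\omega}}_t)-f(\hat{\boldsymbol{\omega}}_t,\boldsymbol{\theta}_t)]$ (the paper uses $\alpha=\tfrac{1}{16}$), choose $\eta_{\boldsymbol{\theta}}=64\kappa^2\eta_{\boldsymbol{\omega}}$ so the dual progress swallows the primal error, telescope, and balance the step size. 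The only cosmetic difference is that the paper does not track a contraction of $\delta_t$ directly; instead it bounds $\|\nabla\Phi(\hat{\boldsymbol{\omega}}_t)-\nabla_{\boldsymbol{\omega}}f(\hat{\boldsymbol{\omega}}_t,\boldsymbol{\theta}_t)\|^2\le \kappa^2\|\nabla_{\boldsymbol{\theta}}f(\hat{\boldsymbol{\omega}}_t,\boldsymbol{\theta}_t)\|^2$ and lets the $-\alpha\tfrac{\eta_{\boldsymbol{\theta}}}{2}\|\nabla_{\boldsymbol{\theta}}f\|^2$ term from the ascent step absorb it, which is equivalent to your $\delta_t$ route via PL.
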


\noindent\textbf{Remark.} Due to the space limitation, the proofs and strongly-concave case are placed in Appendix \ref{sec:appendix}. Observing the above results, we evaluate the averaged gradients at the quantized parameter $\hat{\boldsymbol{\omega}}$ and we can draw the conclusion that our averaged output of the quantized models can converge to an $\epsilon$-stationary point during $\mathcal{O}(1/\epsilon^2)$ iterations under the neglect of quantization error $d\Delta^2$.

\section{Experiment}\label{sec:exp}
\subsection{Setup}
\paragraph{Tasks and Models.}
In this section, we evaluate our SAM-SGA method on both discriminative (\textit{i.e.}, BERT-style) and generative (\textit{i.e.}, GPT-style) language models. For discriminative models, we followed many prior works~\cite{zhong2023revisiting,zhong2023bag} and tested both BERT$\rm_{\texttt{base}}$ and BERT$\rm_{\texttt{large}}$~\cite{devlin2018bert} on GLUE benchmark~\cite{wang2018glue}; and for generative models, we tested the OPT-family~\cite{zhang2022opt} (mainly OPT-350m) on the GLUE benchmark and several language modeling tasks, \textit{i.e.}, WikiText2~\cite{meritypointer} and Penn Treebank~(PTB)~\cite{mikolov2012context} and WikiText103~\cite{meritypointer}.
For evaluation, we report the performance with Accuracy (``\textit{Acc.}'') metric for most tasks, except the Pearson correlation (``\textit{Pear.}'') for STS-B, the Matthew correlation (``\textit{Mcc.}'') for CoLA, the perplexity (PPL) score for language modeling tasks.
We report the averaged results over 5 random seeds to avoid stochasticity.
The details of all tasks are shown in Appendix~\ref{appendix_data}.

\input{Results/main_results2}

\paragraph{Implementation Details.}
Following many previous studies~\cite{tao2022compression,bai2021binarybert}, we first fine-tune a full-precision model using the pre-trained checkpoint from huggingface\footnote{\url{https://huggingface.co/models}} for each task, and use the fine-tuned model as the full-precision teacher model. Then, we apply the cutting-edge PTQ methods, \textit{i.e.}, ZeroQuant~\cite{yao2022zeroquant} for BERT models and GPTQ~\cite{frantar2022gptq} for OPT models, to quantize the fine-tuned model and use the quantized network to initialize the student model. For the generator model, we directly use the OPT-350m~\cite{zhang2022opt} (without any further tuning) model\footnote{Notably, we can use any generative LLMs as the generator models. In this work, we use the OPT models as they are open-sourced and cover multiple model scales.}, i.e., the generators are not trained on any end-task data. All experiments are conducted on a single NVIDIA A100 (40G) GPU and the detailed hyper-parameters can be found in Appendix~\ref{appendix_hyper_parameters}.

\paragraph{Compared Methods.}
For reference, we compare our SAM-SGA method with the following cutting-edge counterparts:
\begin{itemize}
    \item \textbf{Full-precision}: we report the results of full-precision fine-tuned model, \textit{i.e.}, teacher model in our framework.
    \item \textbf{Baseline}: we adopt the powerful PTQ methods \textit{i.e.}, ZeroQuant~\cite{yao2022zeroquant} for BERT models and GPTQ~\cite{frantar2022gptq} for OPT models, to quantize the fine-tuned model, and report the results as the baseline.
    \item \textbf{QAT-GT}: after obtaining the teacher and student models, we use the original training data to guide the quantization-aware training process of student model.
    \item \textbf{QAT-Rand}: we follow~\citet{yao2022zeroquant} and use the random data (using the random integer number to generate token ids) to perform the QAT process.
\end{itemize}
Notably, for each method, we carefully grid-search the best learning rates from \{5e-6, 1e-5, 2e-5, 5e-5\}, and report the best results that are chosen from the best single run among those learning rates.

\subsection{Main Results}
We report the results of BERT$\rm_{\texttt{base}}$, BERT$\rm_{\texttt{large}}$ and OPT-350m in Table~\ref{tab:bert_base},~\ref{tab:bert_large} and~\ref{tab:gpt2}, respectively. Notably, we use WxAy to represent the x-bit weight quantization and y-bit activation quantization. From the results, we can observe that:

\begin{table*}[ht]
\centering
\scalebox{0.8}{
\begin{tabular}{lcccccccccc}
\toprule
   & CoLA                 & MNLI                 & MRPC                 & QNLI                 & QQP                  & RTE                  & SST2                 & STSB   & \multicolumn{2}{c}{GLUE} \\ 
\cmidrule(lr){2-9} \cmidrule(lr){10-11}
\multirow{-2}{*}{Method}  & \textit{Mcc.}        & \textit{Acc.}  & \textit{Acc.}   & \textit{Acc.}  & \textit{Pear.}   & \textit{Acc.}  & \textit{Acc.}  & \textit{Acc.}  & Avg. &$\Delta$ ($\uparrow$)  \\        \midrule \midrule
Baseline               & 52.13                & 77.24                & 82.60                & 85.10                & 86.34                & 50.90                        & 89.11                & 84.41                & 75.98                 &*       \\ \midrule     
\bf SAM-SGA (Ours)  & 53.01                    & 76.64 & 83.58 & 86.12 & 86.16 & 64.26 & 89.68 & 86.94 & 78.30 &\textcolor[RGB]{0,176,80}{\bf +2.32}  \\
\hdashline
\quad -w/o SGA        & 45.30                     & 78.06 & 84.31 & 85.53 & 86.05 & 55.23 & 89.36 & 86.59 & 76.30 &\textcolor[RGB]{255,0,0}{ -2.00} \\
\quad -w/o SAM        & 50.74  & 76.29                  & 82.60  & 85.74 & 85.93 & 63.53 & 89.44 & 86.46 & 77.59 &\textcolor[RGB]{255,0,0}{ -0.71} \\
\quad -w/o SGA\&SAM & 43.70    & 76.02                 & 82.80  & 85.33 & 85.56 & 56.04 & 89.48 & 86.11 & 75.63 &\textcolor[RGB]{255,0,0}{ -2.67}  \\
\bottomrule
\end{tabular}
}
\caption{Ablation study of important components of SAM-SGA. Here, we use the BERT-base models quantified into W4A8 bit-width.}
\label{tab:ablation_component}
\end{table*}


\paragraph{SAM-SGA outperforms the other counterparts in most settings.} As shown in Table~\ref{tab:bert_base}, our SAM-SGA outperforms the baseline methods among all quantization settings by a large margin, \textit{i.e.}, up to +6.98 average score. Specifically, QAT-GT also achieves remarkable performance gains, owing to the supervised information from the original training data. When using the random data (``QAT-Rand''), the performance gains brought by the vanilla QAT process will be much slighter. Encouragingly, it can be found that ZSAQ performs better in the lower precision settings, indicating the potential of validation methods in extreme compression scenarios. These results prove the effectiveness and significance of our data-free method.

\paragraph{SAM-SGA brings the consistent performance gains among both model sizes.} We further adopt our SAM-SGA to the larger BERT model. Due to the space limitation, we only report the results of parts of GLUE benchmarks in Table~\ref{tab:bert_large}, and provide the full results in Table~\ref{tab:bert_large_full} of Appendix~\ref{appendix_results}. It can be found the similar phenomenon in the BERT-base experiments. That is, our SAM-SGA brings the consistent performance gains on both models.

\paragraph{SAM-SGA works well in both BERT-style and GPT-style models.} In addition to the discriminative PLMs, we also verify the effectiveness of our SAM-SGA on the generative models. The results of language modeling tasks are in Table~\ref{tab:gpt2}, and we can seen that, with the help of SAM-SGA, OPT-350m achieves much better quantization performance against the baselines. Moreover, we also evaluate the OPT-350m on the GLUE benchmark. The contrastive results are listed in Table~\ref{tab:gpt2_glue} of Appendix~\ref{appendix_results}, illustrating that our method outperforms the other methods, which further validates the effectiveness of our methods for generative style models.


\subsection{Ablation Study}
In this part, we 1) first evaluate the impact of important components of our SAM-SGA, and 2) then investigate the effect of different generator models.

\paragraph{Impact of important components of SAM-SGA.} 
There are several important components in our SAM-SGA framework, \textit{i.e.}, ``SGA'': tuning the generator model with the stochastic gradient ascent, ``SAM'': training the quantized model with SAM optimizer. Here, we conduct the ablation study to verify whether these components are necessary. Specifically, taking the BERT-base model as an example, we compare our full SAM-SGA method with the following methods: i) ``-w/o SGA'': we remove the SGA process, \textit{i.e.}, keeping the generator model fixed; ii) ``-w/o SAM'': we replace the SAM optimizer with the vanilla AdamW optimizer; 3) ``-w/o SGA\&SAM'': fixed generator model and base optimizer are used. 

As shown in Table~\ref{tab:ablation_component}, we can find that removing any component will cause the performance degradation, indicating the necessarily of these components. More specifically, without the SGA, our SAM-SGA would perform much worse. One possible reason is that the quantized model might over-fit the synthetic data, without the constraint of adversarial training.

\begin{figure}[t]
    \centering
    \includegraphics[width=0.45\textwidth]{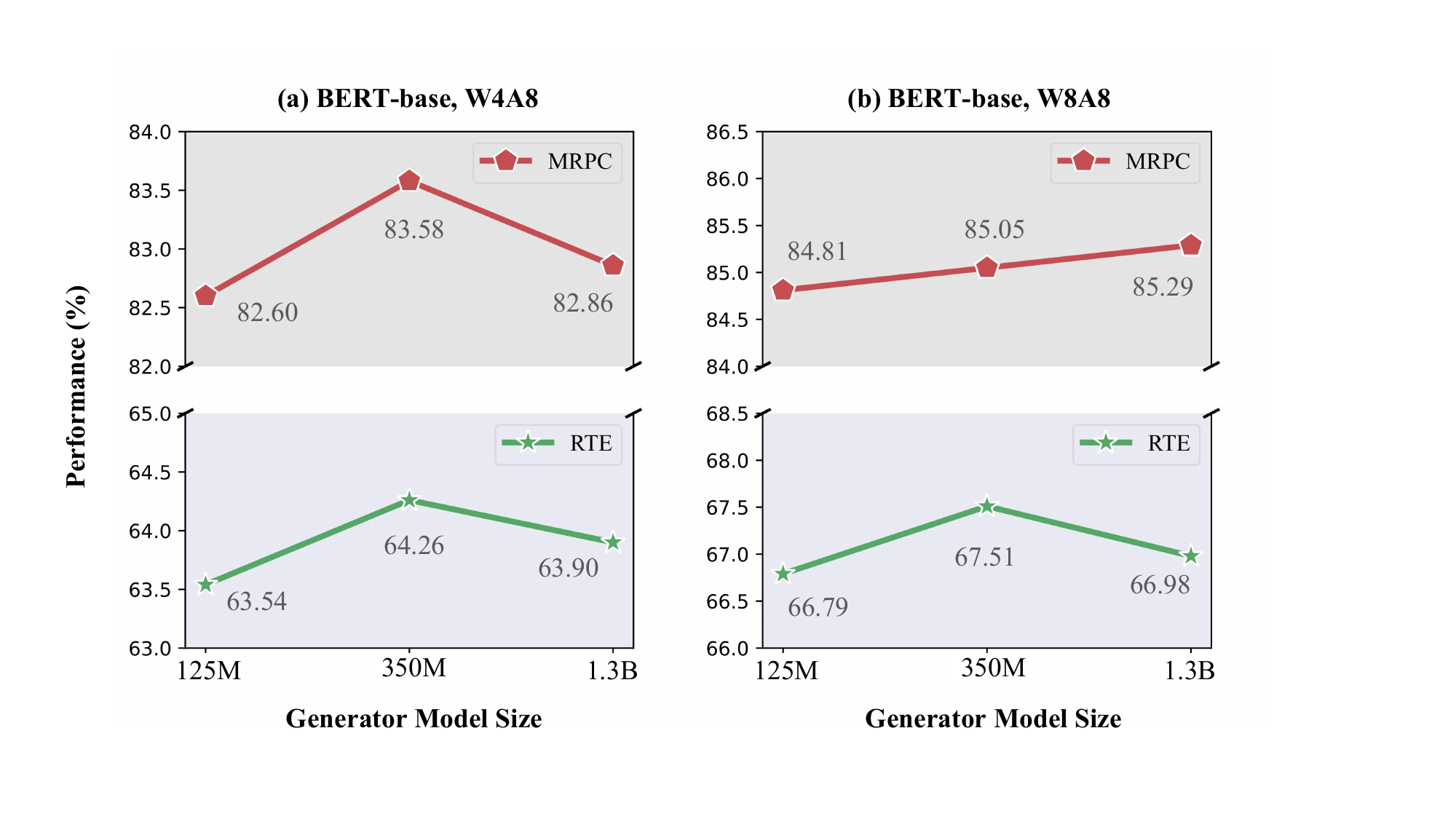}
    \caption{Ablation study of different generator models. We use OPT-family models with different sizes as the generators.}
    \label{fig:ablation2}
\end{figure}

\begin{table}[t]
\centering
\scalebox{0.75}{
\begin{tabular}{lcccccccccc}
\toprule
& \#Bits  & \multicolumn{2}{c}{GLUE} \\ 
\cmidrule(lr){3-11} 
\multirow{-2}{*}{Method}  &(W-A) & Avg. &$\Delta$ ($\uparrow$)       \\ \midrule
Full-precision & W32A32     	&82.00  & *
    \\ \midrule \midrule

PTQ-vanilla      & W8A8         &79.81   &\textcolor[RGB]{255,0,0}{-2.19}
                              \\
PTQ-PEG      & W8A8        &	81.06   &\textcolor[RGB]{255,0,0}{-0.94}
                            \\
PTQ-Adaround      & W8A8         &82.02   &\textcolor[RGB]{0,176,80}{+0.02}
   \\
\hdashline
\textbf{SAM-SGA (Ours)} & W8A8         &	\bf 82.59   &\textcolor[RGB]{0,176,80}{\bf +0.59}
                          \\ \midrule
PTQ-vanilla      & W4A8 &35.07   &\textcolor[RGB]{255,0,0}{-46.93}
                \\
PTQ-PEG         & W4A8 &39.46   &\textcolor[RGB]{255,0,0}{-42.54}
                 \\
PTQ-Adaround       & W4A8&77.49   &\textcolor[RGB]{255,0,0}{-4.51}
               \\
\hdashline
\textbf{SAM-SGA (Ours)} & W4A8 & \bf 78.30   &\textcolor[RGB]{255,0,0}{\bf -3.70}
                 \\
\bottomrule
\end{tabular}
}
\caption{Comparisons with other zero-shot quantization methods on the development set of GLUE~\cite{wang2018glue} benchmark. Full results are shown in Table~\ref{tab:compare_with_baselines_full}.
}
\label{tab:compare_with_baselines}
\end{table}
\paragraph{Effect of different generator models.} 
Here, we examine whether our SAM-SGA can still work well using different generator models. We conduct the contrastive experiments by varying the generator model size from 120M to 1.3B, and illustrate the results in Figure~\ref{fig:ablation2}. As seen, compared to the baseline, our SAM-SGA consistently brings improvements across all model sizes, basically indicating that the performance of SAM-SGA is not very sensitive to the generator model size. More specifically, the case of OPT-350m performs best, and we thereby use this setting in our experiment.

\subsection{Analysis and Discussion}
Here, we conduct extensive analyses to discuss: 1) whether our SAM-SGA outperforms the other zero-shot quantization counterparts; 2) whether our SAM-SGA can expand to more larger language model scenarios, and 3) whether it gains better model generalization.

\input{Results/opt-1.3b}

\paragraph{Comparisons with other zero-shot quantization methods.}
To better assess the strengths of our method, we further compare it with more zero-shot quantization methods, i.e., PTQ-vanilla (vanilla post-training quantization), PTQ-PEG~\cite{bondarenko2021understanding} and PTQ-Adaround~\cite{nagel2020up}. Specifically, taking the BERT-base as an example, we show the contrastive results in the W8A8 and W4A8 settings in Table~\ref{tab:compare_with_baselines}. As seen, our method SAM-SGA outperforms the other zero-shot quantization methods by a clear margin, especially in the low-bit setting. 

\paragraph{Scaling to Billion-level large models.} 
Some readers may concern that whether our method can be used to quantify the larger language models. Here, we attempt to expand our method to the quantization of billion-level models. Specifically, due to the limited amount of computing resources, we have tried our best to extend SAM-SGA to a larger model setting, and lastly choose the OPT-1.3b for experiments. For reference, we compared our method with the baseline GPTQ method\footnote{Notably, we empirically found that the performance of GPTQ is very sensitive to the number of input samples (default is 1), especially in the large model scenarios. Here, we set this number to 10 to ensure the effectiveness of GPTQ.}. 

As shown in Table~\ref{tab:opt_1.3b}, compared to the baseline, our SAM-SGA brings the significant and consistent performance improvements among all settings, \textit{i.e.}, up to +7.84 gains. These results prove that our method also works well in the larger language model scenarios. 

\begin{figure}[t]
    \centering
    \includegraphics[width=0.45\textwidth]{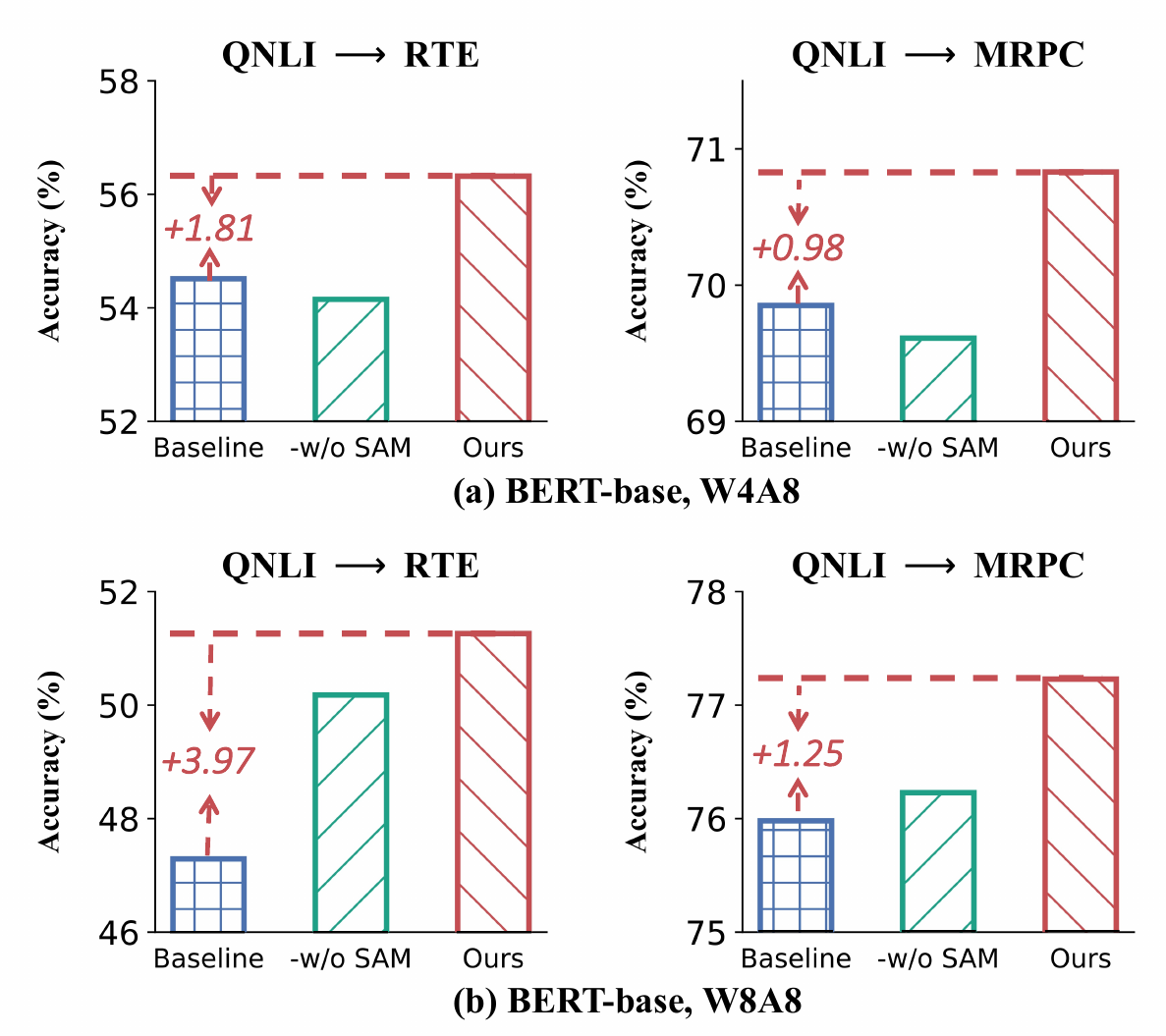}
    \caption{\label{fig:task_tranfer}Analysis of task generalization. The model is fine-tuned on the QNLI task and transferred to four different tasks. We can see that our method consistently brings better generalization compared to the baseline.}
    \label{fig:task_transfer}
\end{figure}

\paragraph{Does SAM-SGA Bring Better Generalization?}
One of our main contributions is that we introduce the SAM optimizer to improve the generalization of quantized model. Here, we verify it from two perspectives: \textit{i}) measuring the cross-task zero-shot performance, and \textit{ii}) visualizing the loss landscapes of quantization models.

\paragraph{Task Generalization.} 
As stated in many previous studies~\cite{zhong2023se,ding2022redistributing}, the performance of out-of-domain (OOD) data is widely used to verify the model generalization. Hence, we follow~\citet{xu2021raise,zhong2022improving} and evaluate the performance of quantized model on several OOD data. In practice, we first quantize BERT-base models (fine-tuned on QNLI task) with different methods, including ``Baseline'', ``Ours'' and its variant ``-w/o SAM'' (without using SAM).
Then, we inference the quantifized models on other tasks, \textit{i.e.}, MRPC and RTE. 
The results are illustrated in Figure~\ref{fig:task_transfer}.
We observe that ``Ours'' consistently outperforms the other counterparts. To be more specific, compared with baseline, our SAM-SGA brings a +2.0 average improvement score on all tasks, indicating that \textbf{\textit{our SAM-SGA boosts the performance of PLMs on OOD data.}}

\paragraph{Visualization of Landscape.} To have a close look, we visualize the loss landscapes of different quantized OPT-350m models fine-tuned on the PTB task. In practice, we follow~\citet{he2021effectiveness, zhong2022improving} to plot the 1D loss curve
by linear interpolation between the model weights before (denoted as $\theta_0$) and after (denoted as $\theta_1$) tuning, \textit{i.e.}, ``$\theta_1+\alpha \cdot (\theta_1-\theta_0)$'', where $\alpha$ is a scalar parameter that is ranged from -1 to 1. The 1D visualization results are illustrated in Figure~\ref{fig:1d_loss}, and we find that  our optimal setting ``Ours'' shows a flatter and optimal property.
\textbf{\textit{These results prove that our SAM-SGA can smooth the loss landscape and improve the generalization of PLMs effectively.}}

\subsection*{\ding{43} A Note on More Analyses and Discussions}
Notably, in addition to the above results and studies, we further conduct more in-depth and systematic analyses and discussions in Appendix, due to space limitations. Specifically, we provide 1) parameter analysis of $\eta$, and 2) visualization of the loss curve of adversarial training stages in Appendix~\ref{appendix_results}. Please refer to the Appendix for more details.

\begin{figure}[t]
    \centering
    \includegraphics[width=0.45\textwidth]{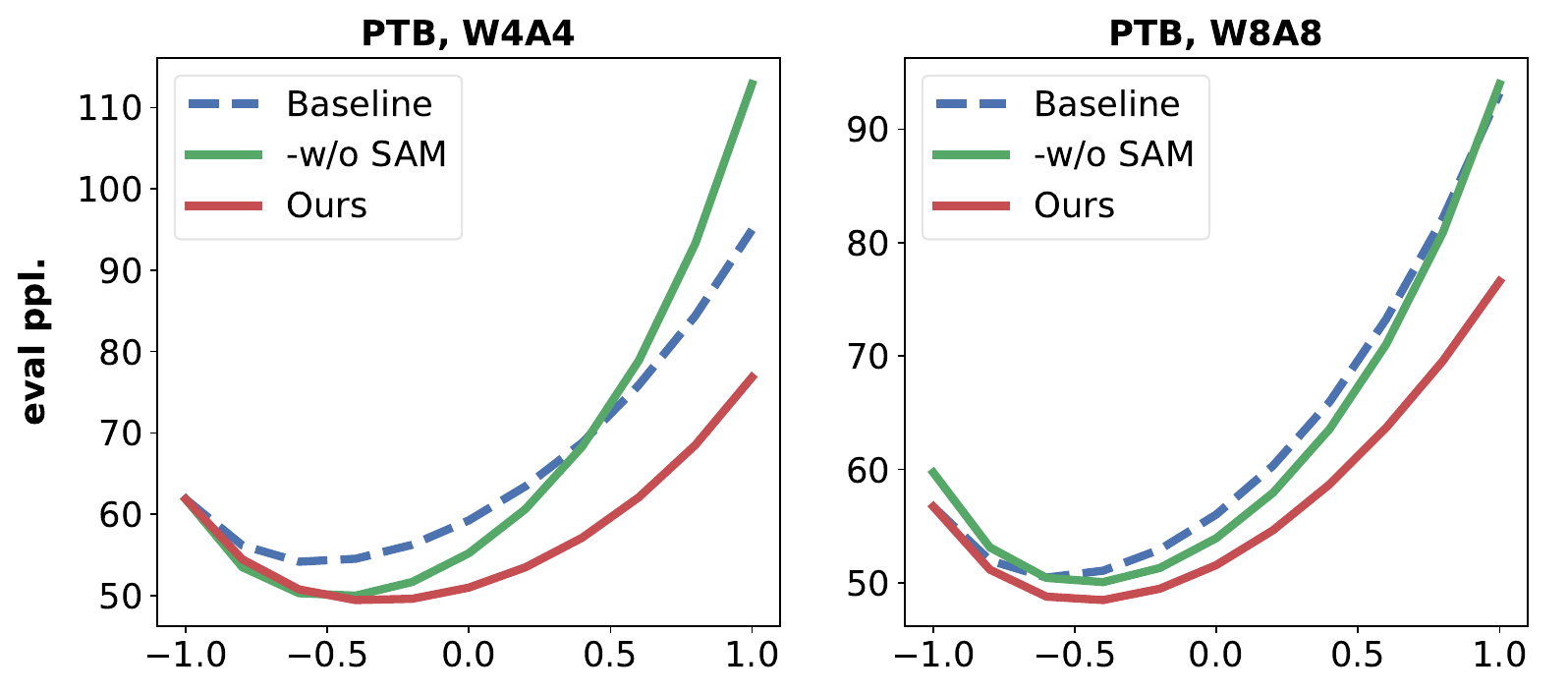}
    \caption{1D visualization of loss landscapes of OPT-350m quantized by different methods. PTB~\cite{mikolov2012context} is used for evaluation.}
    \label{fig:1d_loss}
\end{figure}
\section{Conclusion}\label{sec:conclusion}
In this paper, we propose a novel zero-shot quantization framework ZSAQ, which effectively realizes zero-shot quantization without much accuracy drop and shows promising generalization ability. We provide theoretical analysis for the minimax optimization algorithm under both nonconvex-strongly-concave and nonconvex-PL conditions, which guarantees a convergence rate of $\mathcal{O}(1/\sqrt{T})$ for our ZSAQ method. Our experiments show outperforming quantization results, up to +6.98 average score, and validate improved generalization ability.

\section*{Limitations}
Our work has several potential limitations.
First, given the limited computational budget, we only validate our SAM-SGA method on the Large and Base model sizes. It will make our work more convincing if scaling the experiments up to the much larger model sizes, \textit{e.g.}, LLaMA-65b~\cite{touvron2023llama} and OPT-66b. 
On the other hand, although our method brings much performance gains compared to the other PTQ method, it would lead to more computation overheads. How to accelerate the process has not been explored in this work.

\section*{Ethics Statement}
We take ethical considerations very seriously, and strictly adhere to the EMNLP Ethics Policy. This paper focuses on zero-shot quantization for current open-sourced pretrained language models, but not capturing the privacy knowledge. Both the models and evaluation datasets used in this paper are publicly available and have been widely adopted by researchers. Therefore, we believe that this research will not pose ethical issues.

\section*{Acknowledgements}
This work was supported in part by the National Natural Science Foundation of China under Grants 62225113 and 62076186. The numerical calculations in this paper have been done on the supercomputing system in the Supercomputing Center of Wuhan University.


\bibliography{custom}
\bibliographystyle{acl_natbib}

\newpage
\appendix
\section{Appendix}
\label{sec:appendix}

\subsection{Details of Tasks and Datasets}
\label{appendix_data}
In this work, we conduct extensive experiments on GLUE benchmark and some widely-used language modeling tasks. Here, we introduce the descriptions of the used tasks and datasets in detail:

\textbf{CoLA} Corpus of Linguistic Acceptability~\cite{warstadt2019neural} is a binary single-sentence classification task to determine whether a given sentence is linguistically ``acceptable''.

\textbf{MRPC} Microsoft Research Paraphrase Corpus~\cite{dolan2005automatically} is a task to predict whether two sentences are semantically equivalent.

\textbf{STS-B} Semantic Textual Similarity~\cite{cer2017semeval} is a task to predict how similar two sentences are on a 1-5 scale in terms of semantic meaning.

\textbf{RTE} Recognizing Textual Entailment~\cite{giampiccolo2007third}, given a premise and a hypothesis, is a task to predict whether the premise entails the hypothesis. 

\textbf{MNLI} The Multi-Genre Natural Language Inference Corpus~\cite{williams2018broad} is a task to predict whether the premise entails the hypothesis, contradicts the hypothesis, or neither, given a premise sentence and a hypothesis sentence.

\textbf{SST-2} The Stanford Sentiment Treebank~\cite{socher2013recursive} is a binary classification task to predict the sentiment of a given
sentence.

\textbf{QNLI} Question Natural Language Inference is a binary classification task constructed from SQuAD~\cite{rajpurkar2016squad}, which aims to predict whether a context sentence contains the answer to a question sentence. 

\textbf{QQP} The Quora Question Pairs2 dataset is a collection of question pairs. The task is to determine whether a pair of questions are semantically equivalent.

\textbf{PTB} The English Penn Treebank (PTB)~\cite{mikolov2012context} corpus is one of the most known and used corpus for the evaluation of models for sequence labelling, and is also commonly used for character-level and word-level Language Modelling.

\textbf{WikiText2 and WikiText103} Compared to the preprocessed version of Penn Treebank (PTB), WikiText-2~\cite{meritypointer} is over 2 times larger and WikiText-103~\cite{meritypointer} is over 110 times larger. The WikiText datasets are also widely-used for language modeling. As they are composed of full articles, the datasets are well suited for models that can take advantage of long term dependencies.

\subsection{Details of Hyper-parameters}
\label{appendix_hyper_parameters}
BERT and OPT models are obtained from Huggingface\footnote{\url{https://huggingface.co/models}}, and we follow the instruction (\textit{e.g.}, fine-tuneing hyper-parameters) from Huggingface Transformer Library\footnote{\url{https://github.com/huggingface/transformers/tree/main/examples/pytorch}} to fine-tune these models.

For initial post-training quantization of BERT-based models, we use the ZeroQuant~\cite{yao2022zeroquant} and set 48 groups for group-wise weight quantization in W8A8 settings, 32 groups for W4A8 and 16 groups for W2A8. While for OPT-based models, we use 128 groups in all settings for GPTQ~\cite{frantar2022gptq} method.

For our SAM-SGA method, we use 50 iterations with batch size 32 and sequence length 128 for BERT models, and we use 100 iterations with batch size 4 and sequence length 2048 for OPT models. We grid search the learning rate in range of \{1e-6, 5r-6, 1e-5, 2e-5\}. All the quantized models are trained using a single NVIDIA A100 (40G) GPU.

\subsection{More Results and Analyses}
In this part, we provide more results and analyses to further investigate the effectiveness of our method. Specifically, we first perform the parameter analyses on $\eta_{\theta}$ and $\eta_{\omega}$, and then visualize the loss curve of adversarial training stages.

\paragraph{Parameter Analyses of $\eta_{\theta}$ and $\eta_{\omega}$.}
The factors $\eta_{\theta}$ and $\eta_{\omega}$ are two important hyper-parameters in our Algorithm. In this study, we analyze its influence by evaluating the performance of BERT-base (W4A8) with different $\eta$ spanning \{5e-6,1e-5,2e-5,5e-5\} on MNLI and QNLI tasks. Figure~\ref{fig:parameter_analysis} illustrates the grid-search results. We find that the results do not exhibit significant fluctuations while demonstrating stability within a narrow range. These prove the effectiveness of our method.

\begin{figure*}[t]
    \centering
    \includegraphics[width=0.95\textwidth]{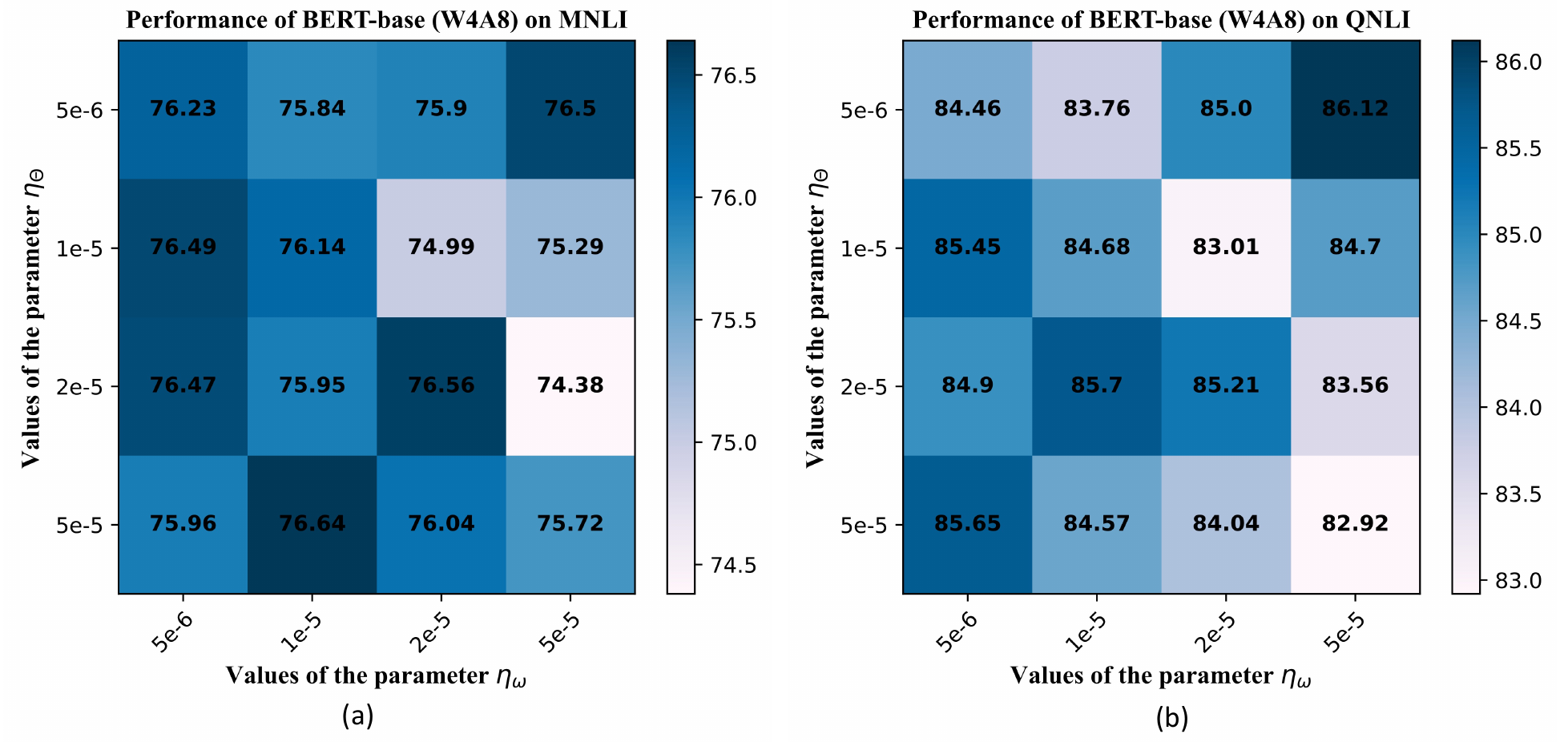}
    \caption{Parameter analyses of $\eta_{\theta}$ and $\eta_{\omega}$. We report the performance of BERT-base (W4A8) on MNLI and QNLI.}
    \label{fig:parameter_analysis}
\end{figure*}

\begin{figure}[t]
    \centering
    \includegraphics[width=0.5\textwidth]{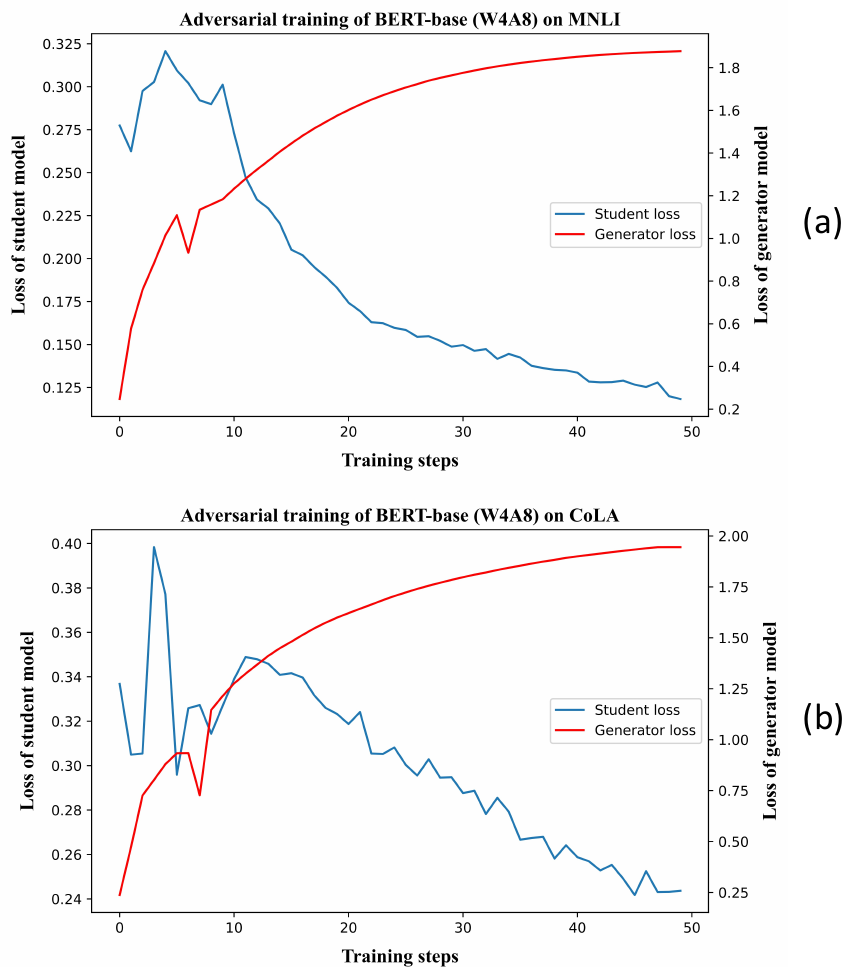}
    \caption{Visualization of loss curve of adversarial training stages.}
    \label{fig:loss_curve}
\end{figure}

\paragraph{Visualization of loss curve of adversarial training stages.}
Since adversarial training usually suffers from instability, some readers may concern about the training stability of our method. To investigate it, we visualize the loss curve of adversarial training stages in Figure~\ref{fig:loss_curve}. We can see the student\_loss decreasing and generator\_loss increasing as time varies and eventually reaching a relatively stable plateau during the final stage. These results indicate that the adversarial training is relatively stable in our SAM-SGA framework.

\label{appendix_results}

\begin{table*}[]
\centering
\scalebox{0.89}{
\begin{tabular}{lccccccccccc}
\toprule
& \#Bits & CoLA                 & MNLI                 & MRPC                 & QNLI                 & QQP                  & RTE                  & SST2                 & STSB   & \multicolumn{2}{c}{GLUE} \\ 
\cmidrule(lr){3-10} \cmidrule(lr){11-12}
\multirow{-2}{*}{Method} & (W-A) & \textit{Mcc.}        & \textit{Acc.}  & \textit{Acc.}   & \textit{Acc.}  & \textit{Pear.}   & \textit{Acc.}  & \textit{Acc.}  & \textit{Acc.}  & Avg. &$\Delta$ ($\uparrow$)        \\ \midrule
Full-precision & W32A32       & 64.47                & 83.12                & 86.03                & 92.37                & 90.85                & 65.71                & 93.35                & 89.22                & 83.14  &-  \\ \midrule \midrule
Baseline      & W8A8         & 64.80                & 83.33                & 86.76                & 92.27                & 90.79                & 62.45                        & 93.46                & 89.12                & 82.87                 &-             \\
QAT-GT      & W8A8         &  \bf 66.61                & 83.29                & \bf 87.75                & 92.54                & 90.81                & 65.34                        & 93.23                & 89.26                & 83.60                 &\textcolor[RGB]{0,176,80}{+0.73}           \\
QAT-Rand      & W8A8         & 66.86                & 83.01                & 86.52                & 92.42                & 90.76                & 65.15                        & 93.00                   & 89.18               & 83.36  &\textcolor[RGB]{0,176,80}{+0.49} \\
\hdashline
\textbf{SAM-SGA} & W8A8         & 66.08                 & \bf 83.50                & 87.25                & \bf 92.55                & \bf 90.87                & \bf 66.06 & \bf 94.04                & \bf 89.37                & \bf 83.72     &\textcolor[RGB]{0,176,80}{\bf +0.85}                      \\ \midrule
Baseline      & W4A8 & 49.11                & 78.86                & 74.26                 & 86.86                 & 86.68                & 47.29                 & 87.73                & 87.63                & 74.80    &-            \\
QAT-GT         & W4A8 & 50.85                 & 79.17                & 75.73                & 90.44                & \bf 87.35                & \bf 56.68                & \bf 92.32                & 88.05                & \bf 77.57    &\textcolor[RGB]{0,176,80}{\bf +2.77}             \\
QAT-Rand       & W4A8 & 45.63                & 79.02                & 70.34                & 87.26                & 87.17                & 47.29               & 78.84                & \bf 88.08               & 72.95    &\textcolor[RGB]{255,0,0}{ -1.85}             \\
\hdashline
\textbf{SAM-SGA} & W4A8 & \bf 52.97                & \bf 79.19               & \bf 77.70               & \bf 90.69                & 87.20                & 52.71               & 90.48                & 87.72                & 77.33    &\textcolor[RGB]{0,176,80}{ +2.53}             \\
\bottomrule
\end{tabular}
}
\caption{Full comparison results of BERT$\rm_{\texttt{large}}$ on the development set of GLUE~\cite{wang2018glue} benchmark.
}
\label{tab:bert_large_full}
\end{table*}
\begin{table*}[]
\centering
\scalebox{0.85}{
\begin{tabular}{lccccccccccc}
\toprule
& \#Bits & CoLA                 & MNLI                 & MRPC                 & QNLI                 & QQP                  & RTE                  & SST2                 & STSB   & \multicolumn{2}{c}{GLUE} \\ 
\cmidrule(lr){3-10} \cmidrule(lr){11-12}
\multirow{-2}{*}{Method} & (W-A) & \textit{Mcc.}        & \textit{Acc.}  & \textit{Acc.}   & \textit{Acc.}  & \textit{Pear.}   & \textit{Acc.}  & \textit{Acc.}  & \textit{Acc.}  & Avg. &$\Delta$ ($\uparrow$)        \\ \midrule
Full-precision & W32A32       & 57.51                & 84.54                & 83.33                & 90.30                & 90.72                & 69.68                & 93.23                & 88.13                & 82.18  &-  \\ \midrule \midrule
Baseline      & W4A8         & 55.69                & 84.03                & 82.60                & 89.73                & 90.25                & 67.15                        & 92.77                & 87.93                & 81.27                 &*             \\
QAT-GT      & W4A8         &  55.94                & 83.85 &	\bf 83.58	& 89.85	& \bf 90.32	& \bf 68.23	& 93.12	& \bf 88.03 &	81.62

         &\textcolor[RGB]{0,176,80}{+0.35}           \\
QAT-Rand      & W4A8         & 55.71 &	83.75&	82.60&	89.77&	90.17	&67.51&	92.78	& \bf 88.03	&81.29
  &\textcolor[RGB]{0,176,80}{+0.02} \\

\hdashline
\textbf{SAM-SGA} & W4A8         & \bf 56.47	& \bf 83.94	&83.33	& \bf 89.96&	90.27	&67.87&	\bf 93.23	& \bf 88.03	& \bf 81.64
    &\textcolor[RGB]{0,176,80}{\bf +0.37}                      \\ 
\bottomrule
\end{tabular}
}
\caption{Results of OPT-350m on the development set of GLUE~\cite{wang2018glue} benchmark. Here, we only report the results in W4A8 settings, and note that the results in the other low-bit settings are similar.
}
\label{tab:gpt2_glue}
\end{table*}
\begin{table*}[ht]
\centering
\scalebox{0.85}{
\begin{tabular}{lcccccccccc}
\toprule
& \#Bits & CoLA                 & MNLI                 & MRPC                 & QNLI                 & QQP                  & RTE                  & SST2                 & STSB   & GLUE \\ 
\cmidrule(lr){3-11} 
\multirow{-2}{*}{Method} & (W-A) & \textit{Mcc.}        & \textit{Acc.}  & \textit{Acc.}   & \textit{Acc.}  & \textit{Pear.}   & \textit{Acc.}  & \textit{Acc.}  & \textit{Acc.}  & Avg.        \\ \midrule
Full-precision & W32A32       & 60.82&	83.12&	85.05&	90.52&	89.91&	65.34&	92.43&	88.84&	82.00
    \\ \midrule \midrule
PTQ-vanilla      & W8A8         & 58.29	&82.08&	80.64&	88.72&	87.47&	62.82	&90.37&	88.10&	79.81
                              \\
PTQ-PEG      & W8A8         &  57.04&	83.15&	\bf 85.29&	89.95&	89.36&	64.62&	90.71&	88.34&	81.06
                            \\
PTQ-Adaround      & W8A8         & 61.32&	\bf 83.68&	84.07&	90.43&	89.84&	66.06&	\bf 92.32&	88.45&	82.02
   \\
\hdashline
\textbf{SAM-SGA} & W8A8         & \bf 63.70&	82.93&	85.05&	\bf 90.52&	\bf 89.93&	\bf 67.51&	92.08&	\bf 88.96&	\bf 82.59
                          \\ \midrule
PTQ-vanilla      & W4A8 & 1.88	&33.91	&31.62&	50.06&	62.97&	52.71&	48.62	&-1.23	&35.07
                \\
PTQ-PEG         & W4A8 & 0.00&	37.91&	31.62	&50.23&	63.18&	47.29	&73.62&	11.83	&39.46
                 \\
PTQ-Adaround       & W4A8 & 55.67&	72.59	&79.90&	\bf 87.13	& \bf 86.49	& \bf 64.62&	\bf 91.28	&82.21	&77.49
               \\
\hdashline
\textbf{SAM-SGA} & W4A8 & \bf 53.01	& \bf 76.64&	\bf 83.58	&86.12&	86.16	&64.26&	89.68&	\bf 86.94	& \bf 78.30
                 \\
\bottomrule
\end{tabular}
}
\caption{Results of comparisons with other zero-shot quantization methods on the development set of GLUE~\cite{wang2018glue} benchmark.
}
\label{tab:compare_with_baselines_full}
\end{table*}

\subsection{Convergence Analysis}

Before our proof, we give some notations for brief. $\widetilde{\boldsymbol{\omega}}_{t+1/2}\triangleq\hat{\boldsymbol{\omega}}_t+\beta g_{\boldsymbol{\omega}}(\hat{\boldsymbol{\omega}}_t,\boldsymbol{\theta}_t)$ like in \cite{foret2020sharpness}. So the update rule for $\boldsymbol{\omega}$ can be rewritten as: $\boldsymbol{\omega}_{t+1}=\hat{\boldsymbol{\omega}}_t-\eta_{\boldsymbol{\omega}}g_{\boldsymbol{\omega}}(\widetilde{\boldsymbol{\omega}}_{t+1/2},\boldsymbol{\theta}_t)$.

\begin{lemma}
Under Assumption~\ref{as:bv}, we can further get the evaluation about the approximate function:
\vspace{-0.2cm}
\begin{align*}
    &\quad\;\,\mathbb{E}[g(\boldsymbol{\omega},\boldsymbol{\theta})]\in\nabla f(\boldsymbol{\omega},\boldsymbol{\theta})\\ &\mathbb{E}\|g(\boldsymbol{\omega},\boldsymbol{\theta})-\nabla f(\boldsymbol{\omega},\boldsymbol{\theta})\|^2\leq\frac{\sigma^2}{M}
\end{align*}

\end{lemma}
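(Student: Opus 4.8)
The plan is to treat $g(\boldsymbol{\omega},\boldsymbol{\theta})=\frac{1}{M}\sum_{i=1}^M G(\boldsymbol{\omega},\boldsymbol{\theta};\xi_i)$ as the empirical average of $M$ i.i.d.\ copies of the per-sample estimator $G$, and read off both claims directly from Assumption~\ref{as:bv}. The unbiasedness is immediate: by linearity of expectation and the i.i.d.\ assumption on the $\xi_i$,
\begin{equation*}
\mathbb{E}[g(\boldsymbol{\omega},\boldsymbol{\theta})]=\frac{1}{M}\sum_{i=1}^M\mathbb{E}[G(\boldsymbol{\omega},\boldsymbol{\theta};\xi_i)]=\nabla f(\boldsymbol{\omega},\boldsymbol{\theta}),
\end{equation*}
since each $\mathbb{E}[G(\boldsymbol{\omega},\boldsymbol{\theta};\xi_i)]\in\nabla f(\boldsymbol{\omega},\boldsymbol{\theta})$ by the first part of Assumption~\ref{as:bv}. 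This handles the first line of the statement with essentially no work.

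For the variance bound, I would introduce the centered vectors $X_i\triangleq G(\boldsymbol{\omega},\boldsymbol{\theta};\xi_i)-\nabla f(\boldsymbol{\omega},\boldsymbol{\theta})$, which are i.i.d., zero-mean (by the unbiasedness just established), and satisfy $\mathbb{E}\|X_i\|^2\leq\sigma^2$ by the second part of Assumption~\ref{as:bv}. Then I would expand the squared norm of the average into a double sum and split it into diagonal and off-diagonal contributions:
\begin{equation*}
\mathbb{E}\|g(\boldsymbol{\omega},\boldsymbol{\theta})-\nabla f(\boldsymbol{\omega},\boldsymbol{\theta})\|^2=\frac{1}{M^2}\sum_{i=1}^M\sum_{j=1}^M\mathbb{E}\langle X_i,X_j\rangle.
\end{equation*}

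The one genuine step is the vanishing of the cross terms: for $i\neq j$, independence plus zero mean gives $\mathbb{E}\langle X_i,X_j\rangle=\langle\mathbb{E}[X_i],\mathbb{E}[X_j]\rangle=0$, so only the $M$ diagonal terms survive, each bounded by $\sigma^2$, yielding $\frac{1}{M^2}\cdot M\sigma^2=\frac{\sigma^2}{M}$. This is the only place where the i.i.d.\ hypothesis is actually used in a nontrivial way, and it is the ``hardest'' part of the argument—though in truth it is the standard variance-reduction-by-averaging computation and presents no real obstacle. The only point requiring a little care is that $G$ and $g$ are vector-valued (indeed concatenations of the $\boldsymbol{\omega}$- and $\boldsymbol{\theta}$-blocks), so I would make sure the inner product and norm are interpreted on the full product space $\mathbb{R}^d\times\mathbb{R}^n$, and note that the set-valued ``$\in\nabla f$'' notation collapses to ordinary equality once a fixed selection of the (sub)gradient is made, which is all that is needed here.
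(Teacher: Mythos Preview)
Your proposal is correct and follows essentially the same approach as the paper's proof: both use linearity of expectation for unbiasedness and then exploit the i.i.d.\ structure to reduce the variance of the mini-batch average by a factor of $M$. If anything, you are slightly more explicit than the paper, which jumps directly from $\mathbb{E}\|\frac{1}{M}\sum_i G_i-\nabla f\|^2$ to $\frac{1}{M^2}\sum_i\mathbb{E}\|G_i-\nabla f\|^2$ without writing out the vanishing of the cross terms.
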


\begin{proof}
According to the definition that $g(\boldsymbol{\omega},\boldsymbol{\theta})=\frac{1}{M}\sum_{i=1}^MG(\boldsymbol{\omega},\boldsymbol{\theta},\xi_i)$, and $\xi_i$ is i.i.d. So the expectation of the approximate function $g$ is the same as $G$ that: $\mathbf{E}[g(\boldsymbol{\omega},\boldsymbol{\theta})]\in\nabla f(\boldsymbol{\omega},\boldsymbol{\theta})$ and 
\vspace{-0.2cm}
\begin{align*}
    &\quad\;\,\mathbf{E}\|g(\boldsymbol{\omega},\boldsymbol{\theta})-\nabla f(\boldsymbol{\omega},\boldsymbol{\theta})\|^2\\
    &=\mathbf{E}\|\frac{1}{M}\sum_{i=1}^MG(\boldsymbol{\omega},\boldsymbol{\theta},\xi_i)-\nabla f(\boldsymbol{\omega},\boldsymbol{\theta})\|^2\\
    &=\frac{1}{M^2}\sum_{i=1}^M\mathbb{E}\|G(\boldsymbol{\omega},\boldsymbol{\theta},\xi_i)-\nabla f(\boldsymbol{\omega},\boldsymbol{\theta})\|^2\leq\frac{\sigma^2}{M}
\end{align*}
\vspace{-0.4cm}
\end{proof}
\vspace{-0.2cm}

\begin{lemma}\label{eq:gradientbound}
We give an estimation that 
\vspace{-0.2cm}
\begin{align*}
   \footnotesize&\quad\;\,\footnotesize\mathbb{E}\|g_{\boldsymbol{\omega}}(\widetilde{\boldsymbol{\omega}}_{t+1/2},\boldsymbol{\theta}_t)\|^2\\
   &\!\leq\!(4\beta^2l^2\!\!+\!\!2\beta l\!+\!\!2)\mathbb{E}\|\nabla_{\!\boldsymbol{\omega}}\!f(\hat{\boldsymbol{\omega}}_t,\boldsymbol{\theta}_t\!)\|^2\!\!\!+\!\!(5\beta^2l^2\!\!+\!\!2)\frac{\sigma^2}{M} 
\end{align*}

\end{lemma}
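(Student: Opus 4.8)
The plan is to reduce the quantity at the perturbed point $\widetilde{\boldsymbol{\omega}}_{t+1/2}$ back to the unperturbed iterate, exploiting that (in the notation introduced just above) the displacement satisfies $\widetilde{\boldsymbol{\omega}}_{t+1/2}-\hat{\boldsymbol{\omega}}_t=\beta g_{\boldsymbol{\omega}}(\hat{\boldsymbol{\omega}}_t,\boldsymbol{\theta}_t)$, so the perturbation magnitude is controlled by $\|g_{\boldsymbol{\omega}}(\hat{\boldsymbol{\omega}}_t,\boldsymbol{\theta}_t)\|$. Writing $\nabla_t\triangleq\nabla_{\boldsymbol{\omega}}f(\hat{\boldsymbol{\omega}}_t,\boldsymbol{\theta}_t)$ and $G_t\triangleq g_{\boldsymbol{\omega}}(\hat{\boldsymbol{\omega}}_t,\boldsymbol{\theta}_t)$, I would first split the stochastic gradient at $\widetilde{\boldsymbol{\omega}}_{t+1/2}$ into mean plus noise, $g_{\boldsymbol{\omega}}(\widetilde{\boldsymbol{\omega}}_{t+1/2},\boldsymbol{\theta}_t)=\nabla_{\boldsymbol{\omega}}f(\widetilde{\boldsymbol{\omega}}_{t+1/2},\boldsymbol{\theta}_t)+\big(g_{\boldsymbol{\omega}}(\widetilde{\boldsymbol{\omega}}_{t+1/2},\boldsymbol{\theta}_t)-\nabla_{\boldsymbol{\omega}}f(\widetilde{\boldsymbol{\omega}}_{t+1/2},\boldsymbol{\theta}_t)\big)$, and apply $\|a+b\|^2\le 2\|a\|^2+2\|b\|^2$ together with the mini-batch variance bound $\mathbb{E}\|g-\nabla f\|^2\le\sigma^2/M$ established in the preceding lemma (itself a consequence of Assumption~\ref{as:bv}). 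This produces the base term $\mathbb{E}\|g_{\boldsymbol{\omega}}(\widetilde{\boldsymbol{\omega}}_{t+1/2},\boldsymbol{\theta}_t)\|^2\le 2\,\mathbb{E}\|\nabla_{\boldsymbol{\omega}}f(\widetilde{\boldsymbol{\omega}}_{t+1/2},\boldsymbol{\theta}_t)\|^2+2\sigma^2/M$, accounting for the additive $+2$ appearing in both target coefficients.

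Next I would transport the true gradient from $\widetilde{\boldsymbol{\omega}}_{t+1/2}$ back to $\hat{\boldsymbol{\omega}}_t$ using the $l$-smoothness of Assumption~\ref{as:smooth}. Since $\|\widetilde{\boldsymbol{\omega}}_{t+1/2}-\hat{\boldsymbol{\omega}}_t\|=\beta\|G_t\|$, smoothness yields $\|\nabla_{\boldsymbol{\omega}}f(\widetilde{\boldsymbol{\omega}}_{t+1/2},\boldsymbol{\theta}_t)-\nabla_t\|\le\beta l\|G_t\|$. Expanding the square $\|\nabla_{\boldsymbol{\omega}}f(\widetilde{\boldsymbol{\omega}}_{t+1/2},\boldsymbol{\theta}_t)\|^2=\|\nabla_t\|^2+2\langle\nabla_t,\,\nabla_{\boldsymbol{\omega}}f(\widetilde{\boldsymbol{\omega}}_{t+1/2},\boldsymbol{\theta}_t)-\nabla_t\rangle+\|\nabla_{\boldsymbol{\omega}}f(\widetilde{\boldsymbol{\omega}}_{t+1/2},\boldsymbol{\theta}_t)-\nabla_t\|^2$, I would control the inner product by Cauchy--Schwarz and Young's inequality (this is where the linear factor $2\beta l$ is born) and bound the last term by $\beta^2l^2\|G_t\|^2$. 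The surviving $\|G_t\|^2$ is then converted to the target quantities by a second use of the variance bound, $\mathbb{E}\|G_t\|^2\le 2\,\mathbb{E}\|\nabla_t\|^2+2\sigma^2/M$: the purely quadratic pieces proportional to $\beta^2l^2\|G_t\|^2$ deposit $\beta^2l^2$-multiples on both $\mathbb{E}\|\nabla_t\|^2$ and $\sigma^2/M$, while the cross term feeds predominantly into the $\mathbb{E}\|\nabla_t\|^2$ coefficient. Collecting all contributions and choosing the Young constant appropriately should assemble the stated coefficients $4\beta^2l^2+2\beta l+2$ and $5\beta^2l^2+2$.

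I expect the delicate part to be twofold, neither of which is deep but both of which require care. First is the probabilistic bookkeeping: $\widetilde{\boldsymbol{\omega}}_{t+1/2}$ is itself a random function of the mini-batch forming $G_t$, so the bias--variance split at the perturbed point must be carried out conditionally on the information generated by $(\hat{\boldsymbol{\omega}}_t,\boldsymbol{\theta}_t,G_t)$ and then averaged via the tower property; this tacitly needs the mini-batch evaluating $g_{\boldsymbol{\omega}}(\widetilde{\boldsymbol{\omega}}_{t+1/2},\cdot)$ to be drawn independently (or, if the same batch is reused as in standard SAM, a per-sample smoothness assumption). Second is matching the exact constants: the claimed inequality is asymmetric in that $2\beta l$ is present in the first coefficient but absent from the variance coefficient, whereas the naive chain tends to leak a linear $\beta l$ term into the variance as well. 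Reconciling this will force a careful, specific choice of the Young's-inequality parameter (and possibly a mild restriction on $\beta l$), and verifying that the bookkeeping lands precisely on $4\beta^2l^2+2\beta l+2$ and $5\beta^2l^2+2$ is the main obstacle I would scrutinize.
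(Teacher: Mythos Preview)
Your high-level plan is sound and would yield a bound of the same flavor, but it will not reproduce the stated constants, and the obstacle you flag in your last paragraph is genuine rather than a mere bookkeeping nuisance. Along your route the cross term $2\langle\nabla_t,\nabla_{\boldsymbol{\omega}}f(\widetilde{\boldsymbol{\omega}}_{t+1/2},\boldsymbol{\theta}_t)-\nabla_t\rangle$ is controlled through $\|\nabla_{\boldsymbol{\omega}}f(\widetilde{\boldsymbol{\omega}}_{t+1/2},\boldsymbol{\theta}_t)-\nabla_t\|\le\beta l\,\|G_t\|$, and since $\mathbb{E}\|G_t\|^2=\mathbb{E}\|\nabla_t\|^2+\sigma^2/M$, every appearance of $\|G_t\|^2$ deposits equal multiples on \emph{both} $\mathbb{E}\|\nabla_t\|^2$ and $\sigma^2/M$. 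No Young parameter fixes this: minimizing $c+1/c$ still leaves at least a $2\beta l$ contribution on the $\sigma^2/M$ side, so the target variance coefficient $5\beta^2l^2+2$ (which has no linear $\beta l$ term) is unreachable by this chain in the small-$\beta l$ regime the paper actually uses.

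The paper's proof differs from yours in two places. First, the opening step is the exact polarization identity $\|a\|^2=\|a-b\|^2-\|b\|^2+2\langle a,b\rangle$ with $b=\nabla_t$, not the lossy $\|a+b\|^2\le 2\|a\|^2+2\|b\|^2$. Second---and this is the idea you are missing---when bounding the cross term $\mathbb{E}\langle g_{\boldsymbol{\omega}}(\widetilde{\boldsymbol{\omega}}_{t+1/2},\boldsymbol{\theta}_t),\nabla_t\rangle$, the paper inserts the \emph{deterministic} perturbed point $\hat{\boldsymbol{\omega}}_t+\beta\nabla_t$ as an intermediate anchor. The stochastic--deterministic gap $\widetilde{\boldsymbol{\omega}}_{t+1/2}-(\hat{\boldsymbol{\omega}}_t+\beta\nabla_t)=\beta(G_t-\nabla_t)$ has expected squared norm $\beta^2\sigma^2/M$, so after smoothness it contributes only a pure $\beta^2l^2$ variance term; the remaining comparison $\nabla_{\boldsymbol{\omega}}f(\hat{\boldsymbol{\omega}}_t+\beta\nabla_t,\boldsymbol{\theta}_t)-\nabla_t$ is bounded by $\beta l\,\|\nabla_t\|$, a pure gradient term. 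That clean separation is exactly what produces the asymmetric pair $(4\beta^2l^2+2\beta l+2,\;5\beta^2l^2+2)$. Your approach conflates the two sources because it works only with the stochastic displacement $\beta G_t$.
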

\begin{proof}
We have the following decomposition:
\vspace{-0.2cm}
\begin{equation}
    \begin{aligned}
    &\quad\;\mathbb{E}\|g_{\boldsymbol{\omega}}\!(\widetilde{\boldsymbol{\omega}}_{t\!+\!1/2},\!\boldsymbol{\theta}_t)\|^2\\
    &=\!\mathbb{E}\|g_{\boldsymbol{\omega}}(\widetilde{\boldsymbol{\omega}}_{t+1/2},\!\boldsymbol{\theta}_t)\!-\!\nabla_{\boldsymbol{\omega}}\!f(\hat{\boldsymbol{\omega}}_t,\boldsymbol{\theta}_t)\|^2\\
    &-\mathbb{E}\|\nabla_{\boldsymbol{\omega}}f(\hat{\boldsymbol{\omega}}_t,\boldsymbol{\theta}_t)\|^2\\
    &+2\mathbb{E}\langle g_{\boldsymbol{\omega}}(\widetilde{\boldsymbol{\omega}}_{t+1/2},\boldsymbol{\theta}_t),\nabla_{\boldsymbol{\omega}}f(\hat{\boldsymbol{\omega}}_t,\boldsymbol{\theta}_t)\rangle
    \end{aligned}
\end{equation}

For the cross-product term, we evaluate it as follows:
\vspace{-0.2cm}
\begin{equation}
    \begin{aligned}
    &\quad\;\,\mathbb{E}\langle g_{\boldsymbol{\omega}}(\widetilde{\boldsymbol{\omega}}_{t+1/2},\boldsymbol{\theta}_t),\nabla_{\boldsymbol{\omega}}f(\hat{\boldsymbol{\omega}}_t,\boldsymbol{\theta}_t)\rangle\\
    &=\mathbb{E}\langle g_{\boldsymbol{\omega}}(\widetilde{\boldsymbol{\omega}}_{t+1/2},\!\boldsymbol{\theta}_t)\\
    &-g_{\boldsymbol{\omega}}(\hat{\boldsymbol{\omega}}_t\!+\!\beta\nabla_{\!\boldsymbol{\omega}}f(\hat{\boldsymbol{\omega}}_t,\boldsymbol{\theta}_t),\boldsymbol{\theta}_t),\!\nabla_{\!\boldsymbol{\omega}}f(\hat{\boldsymbol{\omega}}_t,\!\boldsymbol{\theta}_t)\rangle\\
    &+\mathbb{E}\langle g_{\boldsymbol{\omega}}(\hat{\boldsymbol{\omega}}_t+\beta\nabla_{\boldsymbol{\omega}}f(\hat{\boldsymbol{\omega}}_t,\boldsymbol{\theta}_t),\boldsymbol{\theta}_t),\nabla_{\boldsymbol{\omega}}f(\hat{\boldsymbol{\omega}}_t,\boldsymbol{\theta}_t)\rangle\\
    &=\!\mathbb{E}\langle\nabla_{\boldsymbol{\omega}}\!f(\hat{\boldsymbol{\omega}}_t\!+\!\beta\nabla_{\boldsymbol{\omega}}\!f(\hat{\boldsymbol{\omega}}_t,\boldsymbol{\theta}_t),\boldsymbol{\theta}_t),\nabla_{\boldsymbol{\omega}}\!f(\hat{\boldsymbol{\omega}}_t,\boldsymbol{\theta}_t)\rangle\\
    &+\!\mathbb{E}\langle \nabla_{\!\boldsymbol{\omega}}\!f(\widetilde{\boldsymbol{\omega}}_{t+\!1\!/\!2},\!\boldsymbol{\theta}_t\!)\!\!-\!\!\nabla_{\!\boldsymbol{\omega}}\!f(\hat{\boldsymbol{\omega}}_t\!\!+\!\!\beta\nabla_{\!\boldsymbol{\omega}}\!f(\hat{\boldsymbol{\omega}}_t,\!\boldsymbol{\theta}_t),\!\boldsymbol{\theta}_t\!),\\
    &\quad\;\nabla_{\!\boldsymbol{\omega}}f(\hat{\boldsymbol{\omega}}_t,\!\boldsymbol{\theta}_t)\rangle\\
    &\leq\!\!\frac{1}{2}\!\mathbb{E}\|\!\nabla_{\!\boldsymbol{\omega}}\!f(\!\widetilde{\boldsymbol{\omega}}_{t\!+\!1\!/\!2},\!\boldsymbol{\theta}_t\!)\!\!-\!\!\nabla_{\!\boldsymbol{\omega}}\!f(\!\hat{\boldsymbol{\omega}}_t\!\!+\!\!\beta\nabla_{\!\boldsymbol{\omega}}\!f(\!\hat{\boldsymbol{\omega}}_t,\!\boldsymbol{\theta}_t\!),\!\boldsymbol{\theta}_t\!)\|^2\\
    &+\frac{1}{2}\mathbb{E}\|\nabla_{\!\boldsymbol{\omega}}f(\hat{\boldsymbol{\omega}}_t,\!\boldsymbol{\theta}_t\!)\|^2+\mathbb{E}\|\nabla_{\boldsymbol{\omega}}f(\hat{\boldsymbol{\omega}}_t,\boldsymbol{\theta}_t)\|^2\\
    &+\mathbb{E}\langle \nabla_{\!\boldsymbol{\omega}}f(\hat{\boldsymbol{\omega}}_t\!+\!\beta\nabla_{\!\boldsymbol{\omega}}f(\hat{\boldsymbol{\omega}}_t,\!\boldsymbol{\theta}_t),\!\boldsymbol{\theta}_t)\!-\!\nabla_{\!\boldsymbol{\omega}}f(\hat{\boldsymbol{\omega}}_t,\boldsymbol{\theta}_t),\\
    &\quad\;\nabla_{\!\boldsymbol{\omega}}f(\hat{\boldsymbol{\omega}}_t,\boldsymbol{\theta}_t)\rangle\\
    &\leq(\beta l+\frac{3}{2})\mathbb{E}\|\nabla_{\boldsymbol{\omega}}f(\hat{\boldsymbol{\omega}}_t,\boldsymbol{\theta}_t)\|^2+\frac{\beta^2l^2\sigma^2}{2M}
    \end{aligned}
\end{equation}

And for the first term, we have:
\begin{equation}
    \begin{aligned}
    &\quad\;\,\mathbb{E}\|g_{\boldsymbol{\omega}}(\widetilde{\boldsymbol{\omega}}_{t+1/2},\boldsymbol{\theta}_t)-\nabla_{\boldsymbol{\omega}}f(\hat{\boldsymbol{\omega}}_t,\boldsymbol{\theta}_t)\|^2\\
    &\leq2\mathbb{E}\|g_{\boldsymbol{\omega}}(\widetilde{\boldsymbol{\omega}}_{t+1/2},\boldsymbol{\theta}_t)-\nabla_{\boldsymbol{\omega}}f(\widetilde{\boldsymbol{\omega}}_{t+1/2},\boldsymbol{\theta}_t)\|^2\\
    &\quad+2\mathbb{E}\|\nabla_{\boldsymbol{\omega}}f(\widetilde{\boldsymbol{\omega}}_{t+1/2},\boldsymbol{\theta}_t)-\nabla_{\boldsymbol{\omega}}f(\hat{\boldsymbol{\omega}}_t,\boldsymbol{\theta}_t)\|^2\\
    &\leq\frac{2\sigma^2}{M}+2l^2\mathbb{E}\|\widetilde{\boldsymbol{\omega}}_{t+1/2}-\hat{\boldsymbol{\omega}}_t\|^2\\
    &\leq2\frac{\sigma^2}{M}(2\beta^2l^2+1)+4\beta^2l^2\mathbb{E}\|\nabla_{\boldsymbol{\omega}}f(\hat{\boldsymbol{\omega}}_t,\boldsymbol{\theta}_t)\|^2
    \end{aligned}
\end{equation}

Combining the above inequalities and we can get:
\vspace{-0.2cm}
\begin{equation}
    \begin{aligned}
        &\quad\;\,\mathbb{E}\|g_{\boldsymbol{\omega}}(\widetilde{\boldsymbol{\omega}}_{t+1/2},\boldsymbol{\theta}_t)\|^2\\
        &\leq(4\beta^2l^2+2\beta l+2)\mathbb{E}\|\nabla_{\boldsymbol{\omega}}f(\hat{\boldsymbol{\omega}}_t,\boldsymbol{\theta}_t)\|^2\\
        &\quad+(5\beta^2l^2+2)\frac{\sigma^2}{M}
    \end{aligned}    
\end{equation}
\vspace{-0.2cm}
\end{proof}

\subsection{Strongly Concave Case}
\begin{lemma}\label{le:phi}
For the descending relationship of the function $\Phi$, we have:
\begin{equation*}
    \begin{aligned}
    &\quad\;\,\mathbb{E}[\Phi(\hat{\boldsymbol{\omega}}_{t+1})]\\
    &\leq\mathbb{E}[\Phi(\hat{\boldsymbol{\omega}}_t)]\\
    &-\!\!\frac{\eta_{\boldsymbol{\!\omega}}}{2}\!(\!\frac{15}{16}\!\!-\!\!5\beta l\!\!-\!\!4\kappa l\eta_{\!\boldsymbol{\omega}}\!(\!4\beta^2l^2\!\!+\!2\beta l\!+\!2))\mathbb{E}\|\!\nabla\!\Phi(\!\hat{\boldsymbol{\omega}}_t\!)\|^2\\
    &+\!\![\frac{\eta_{\boldsymbol{\omega}}}{2}\!(\!1\!\!+\!\!\frac{1}{2}\beta l)\!\!+\!2\kappa l\eta_{\boldsymbol{\omega}}^2(4\beta^2 l^2\!\!\!+\!\!2\beta l\!\!+\!\!2)]\mathbb{E}[\|\!\nabla\!\Phi\!(\!\hat{\boldsymbol{\omega}}_t\!)\!-\!\\
    &\quad\nabla_{\boldsymbol{\omega}}f(\hat{\boldsymbol{\omega}}_t,\boldsymbol{\theta}_t)\|^2]+(5\beta^2l^2+2)\frac{\kappa l\eta_{\boldsymbol{\omega}}^2\sigma^2}{M}\\
    &+\frac{\beta l\sigma^2\eta_{\boldsymbol{\omega}}}{M}+(2\kappa l+\frac{8}{\eta_{\boldsymbol{\omega}}})d\Delta^2
    \end{aligned}
\end{equation*}
\end{lemma}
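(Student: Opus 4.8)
The plan is to read Lemma~\ref{le:phi} as a one-step \emph{primal} descent estimate, obtained by applying the smoothness of the envelope $\Phi$ along the quantized SAM update and absorbing the three sources of slack—the SAM perturbation, the stochastic gradient noise, and the rounding error—into the additive terms on the right-hand side. The first ingredient I would establish is that, under Assumption~\ref{as:smooth} together with strong concavity, $\Phi(\boldsymbol{\omega})=\max_{\boldsymbol{\theta}}f(\boldsymbol{\omega},\boldsymbol{\theta})$ is differentiable with $\nabla\Phi(\boldsymbol{\omega})=\nabla_{\boldsymbol{\omega}}f(\boldsymbol{\omega},\boldsymbol{\theta}^*(\boldsymbol{\omega}))$ (Danskin's theorem) and is $L_\Phi$-smooth with $L_\Phi\le 2\kappa l$. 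This single fact produces the factor $\kappa l$ that recurs throughout the statement; in particular $\frac{L_\Phi\eta_{\boldsymbol{\omega}}^2}{2}\le\kappa l\eta_{\boldsymbol{\omega}}^2$ is what later gives both the $4\kappa l\eta_{\boldsymbol{\omega}}(4\beta^2l^2+2\beta l+2)$ coefficient and the $(5\beta^2l^2+2)\kappa l\eta_{\boldsymbol{\omega}}^2\sigma^2/M$ term.

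Next I would split the descent into two stages to separate the continuous update from the rounding. Recalling the continuous step $\boldsymbol{\omega}_{t+1}=\hat{\boldsymbol{\omega}}_t-\eta_{\boldsymbol{\omega}}g_{\boldsymbol{\omega}}(\widetilde{\boldsymbol{\omega}}_{t+1/2},\boldsymbol{\theta}_t)$, the $L_\Phi$-smoothness inequality gives $\Phi(\boldsymbol{\omega}_{t+1})\le\Phi(\hat{\boldsymbol{\omega}}_t)-\eta_{\boldsymbol{\omega}}\langle\nabla\Phi(\hat{\boldsymbol{\omega}}_t),g_{\boldsymbol{\omega}}(\widetilde{\boldsymbol{\omega}}_{t+1/2},\boldsymbol{\theta}_t)\rangle+\frac{L_\Phi\eta_{\boldsymbol{\omega}}^2}{2}\|g_{\boldsymbol{\omega}}(\widetilde{\boldsymbol{\omega}}_{t+1/2},\boldsymbol{\theta}_t)\|^2$. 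Then, since $\hat{\boldsymbol{\omega}}_{t+1}=Quantization(\boldsymbol{\omega}_{t+1})$, I would apply $L_\Phi$-smoothness once more to pass from $\Phi(\boldsymbol{\omega}_{t+1})$ to $\Phi(\hat{\boldsymbol{\omega}}_{t+1})$, invoke Lemma~\ref{le:quantizationerror} in the form $\|\hat{\boldsymbol{\omega}}_{t+1}-\boldsymbol{\omega}_{t+1}\|\le\sqrt{d}\Delta$, and split the leftover inner product by Young's inequality with weight $\eta_{\boldsymbol{\omega}}/16$. This charges $\tfrac{1}{16}$ of the descent budget (explaining why the leading coefficient is $\tfrac{15}{16}$ rather than $1$) and produces exactly the two rounding contributions $2\kappa l\,d\Delta^2$ and $\frac{8}{\eta_{\boldsymbol{\omega}}}d\Delta^2$.

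The substantive work is taking expectations of the two random terms. For the quadratic term I would invoke Lemma~\ref{eq:gradientbound} verbatim, turning $\mathbb{E}\|g_{\boldsymbol{\omega}}(\widetilde{\boldsymbol{\omega}}_{t+1/2},\boldsymbol{\theta}_t)\|^2$ into $(4\beta^2l^2+2\beta l+2)\,\mathbb{E}\|\nabla_{\boldsymbol{\omega}}f(\hat{\boldsymbol{\omega}}_t,\boldsymbol{\theta}_t)\|^2$ plus the variance floor $(5\beta^2l^2+2)\sigma^2/M$. For the cross term I would reuse the decomposition that appears inside the proof of Lemma~\ref{eq:gradientbound}: replace $\widetilde{\boldsymbol{\omega}}_{t+1/2}$ by the deterministic perturbation $\hat{\boldsymbol{\omega}}_t+\beta\nabla_{\boldsymbol{\omega}}f(\hat{\boldsymbol{\omega}}_t,\boldsymbol{\theta}_t)$, bound the residual through $l$-smoothness (generating the $\beta l$ corrections and the noise term $\beta l\sigma^2\eta_{\boldsymbol{\omega}}/M$ via the variance bound $\mathbb{E}\|g-\nabla f\|^2\le\sigma^2/M$), and finally convert $\nabla_{\boldsymbol{\omega}}f(\hat{\boldsymbol{\omega}}_t,\boldsymbol{\theta}_t)$ into $\nabla\Phi(\hat{\boldsymbol{\omega}}_t)$ using $\|a\|^2\le 2\|b\|^2+2\|a-b\|^2$. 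Collecting the $\mathbb{E}\|\nabla\Phi(\hat{\boldsymbol{\omega}}_t)\|^2$ coefficients into $-\frac{\eta_{\boldsymbol{\omega}}}{2}(\tfrac{15}{16}-5\beta l-4\kappa l\eta_{\boldsymbol{\omega}}(4\beta^2l^2+2\beta l+2))$ and routing every mismatch into the $\mathbb{E}\|\nabla\Phi(\hat{\boldsymbol{\omega}}_t)-\nabla_{\boldsymbol{\omega}}f(\hat{\boldsymbol{\omega}}_t,\boldsymbol{\theta}_t)\|^2$ bracket then reproduces the stated inequality.

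The main obstacle I anticipate is the cross term, because $\widetilde{\boldsymbol{\omega}}_{t+1/2}$ is itself a noisy, gradient-dependent perturbation, so $g_{\boldsymbol{\omega}}(\widetilde{\boldsymbol{\omega}}_{t+1/2},\boldsymbol{\theta}_t)$ is an unbiased estimate of neither $\nabla\Phi(\hat{\boldsymbol{\omega}}_t)$ nor $\nabla_{\boldsymbol{\omega}}f(\hat{\boldsymbol{\omega}}_t,\boldsymbol{\theta}_t)$. Carefully peeling off the SAM bias while keeping the $\beta l$ and $\sigma^2/M$ contributions separated (and small enough that the combined $\frac{15}{16}$ descent coefficient remains meaningful once the later constraints $\beta\le\eta_{\boldsymbol{\theta}}/(2l)$ and $\eta_{\boldsymbol{\omega}}\le 1/(128\kappa^2 l)$ are imposed) is the delicate bookkeeping. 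By contrast the quantization handling is conceptually routine, but it must be carried out before taking conditional expectations so that the $\frac{8}{\eta_{\boldsymbol{\omega}}}d\Delta^2$ term is genuinely deterministic and independent of $\boldsymbol{\omega}$.
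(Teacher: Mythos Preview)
Your proposal is correct and follows essentially the same route as the paper: $2\kappa l$-smoothness of $\Phi$, Lemma~\ref{eq:gradientbound} for the quadratic term, the same SAM-bias peeling for the cross term (culminating in the bound~\eqref{eq:phi_second_final} in the paper), and Young's inequality with weight $\eta_{\boldsymbol{\omega}}/16$ for the rounding slack. The only cosmetic difference is that the paper applies smoothness once from $\hat{\boldsymbol{\omega}}_t$ to $\hat{\boldsymbol{\omega}}_{t+1}$ and then splits $\hat{\boldsymbol{\omega}}_{t+1}-\hat{\boldsymbol{\omega}}_t=(\boldsymbol{\omega}_{t+1}-\hat{\boldsymbol{\omega}}_t)+(\hat{\boldsymbol{\omega}}_{t+1}-\boldsymbol{\omega}_{t+1})$, so that the quantization inner product is against $\nabla\Phi(\hat{\boldsymbol{\omega}}_t)$ rather than $\nabla\Phi(\boldsymbol{\omega}_{t+1})$; this lets the resulting $\tfrac{\eta_{\boldsymbol{\omega}}}{32}\|\nabla\Phi(\hat{\boldsymbol{\omega}}_t)\|^2$ merge directly into the $\tfrac{15}{16}$ coefficient without an extra smoothness pass.
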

\begin{proof}
We get the conclusion that $\Phi(\boldsymbol{\omega})$ is $2\kappa l$-smooth according to Lemma 4.3 in \cite{lin2020gradient}:
\begin{equation}\label{eq:phi_smooth}
    \begin{aligned}
    &\quad\;\,\Phi(\hat{\boldsymbol{\omega}}_{t+1})\\
    &\leq \!\Phi(\hat{\boldsymbol{\omega}}_t)\!+\!\langle\nabla\Phi(\hat{\boldsymbol{\omega}}_t),\hat{\boldsymbol{\omega}}_{t\!+\!1}\!-\!\hat{\boldsymbol{\omega}}_t\rangle\!\!+\!\!\kappa l\|\hat{\boldsymbol{\omega}}_{t\!+\!1}\!-\!\hat{\boldsymbol{\omega}}_t\|^2\\
    &\leq \!\Phi(\hat{\boldsymbol{\omega}}_t)\!+\!\langle\nabla\Phi(\hat{\boldsymbol{\omega}}_t),\boldsymbol{\omega}_{t\!+\!1}\!-\!\hat{\boldsymbol{\omega}}_t\rangle\!\!+\!\!2\kappa l\|\boldsymbol{\omega}_{t\!+\!1}\!-\!\hat{\boldsymbol{\omega}}_t\|^2\\
    &+\!\langle\nabla\Phi(\!\hat{\boldsymbol{\omega}}_t\!),\hat{\boldsymbol{\omega}}_{t\!+\!1}\!-\!\boldsymbol{\omega}_{t\!+\!1}\rangle+2\kappa ld\Delta^2\\
    &=\!\Phi(\!\hat{\boldsymbol{\omega}}_t\!)\!\!-\!\!\eta_{\boldsymbol{\omega}}\langle\nabla\!\Phi(\!\hat{\boldsymbol{\omega}}_t\!),\!g_{\boldsymbol{\omega}}(\!\widetilde{\boldsymbol{\omega}}_{t\!+\!1/2},\!\boldsymbol{\theta}_t\!)\rangle\!\!+\!\!2\kappa ld\Delta^2\\
    &+\!2\kappa l\eta_{\boldsymbol{\omega}}^2\|g_{\boldsymbol{\omega}}(\widetilde{\boldsymbol{\omega}}_{t+1/2},\boldsymbol{\theta}_t)\|^2\\
    &+\frac{\eta_{\boldsymbol{\omega}}}{32}\|\nabla\Phi(\boldsymbol{\omega}_t)\|^2+\frac{8}{\eta_{\boldsymbol{\omega}}}d\Delta^2
    \end{aligned}
\end{equation}

\vspace{-0.1cm}
Taking expectation conditioned on $(\boldsymbol{\omega}_t,\boldsymbol{\theta}_t)$ and we get:
\begin{equation}
    \begin{aligned}
        &\quad\;\,\mathbb{E}[\Phi(\hat{\boldsymbol{\omega}}_{t+1})|\boldsymbol{\omega}_t,\boldsymbol{\theta}_t]\\
        &\leq\Phi(\hat{\boldsymbol{\omega}}_t)-\eta_{\boldsymbol{\omega}}\langle\nabla\Phi(\hat{\boldsymbol{\omega}}_t),\nabla_{\boldsymbol{\omega}}f(\widetilde{\boldsymbol{\omega}}_{t+1/2},\boldsymbol{\theta}_t)\rangle\\
        &+2\kappa l\eta_{\boldsymbol{\omega}}^2\mathbb{E}[\|g_{\boldsymbol{\omega}}(\widetilde{\boldsymbol{\omega}}_{t+1/2},\boldsymbol{\theta}_t)\|^2|\boldsymbol{\omega}_t,\boldsymbol{\theta}_t]\\
        &+\frac{\eta_{\boldsymbol{\omega}}}{32}\mathbb{E}\|\nabla\Phi(\hat{\boldsymbol{\omega}}_t)\|^2+(2\kappa l+\frac{8}{\eta_{\boldsymbol{\omega}}})d\Delta^2
    \end{aligned}
\end{equation} 

\vspace{-0.1cm}
We again take expectations on both sides of the above inequality so we have:
\begin{equation}\label{eq:stophi_smooth}
    \begin{aligned}
        &\quad\;\,\mathbb{E}[\Phi(\hat{\boldsymbol{\omega}}_{t+1})]\\
        &\leq\mathbb{E}[\Phi(\hat{\boldsymbol{\omega}}_t)]-\eta_{\boldsymbol{\omega}}\mathbb{E}\langle\nabla\Phi(\hat{\boldsymbol{\omega}}_t),\nabla_{\boldsymbol{\omega}}f(\widetilde{\boldsymbol{\omega}}_{t+1/2},\boldsymbol{\theta}_t)\rangle\\
        &+\!\!2\kappa l\eta_{\boldsymbol{\omega}}^2\mathbb{E}\|g_{\boldsymbol{\omega}}(\widetilde{\boldsymbol{\omega}}_{t\!+\!1/2},\boldsymbol{\theta}_t)\|^2\!\!+\!\!\frac{\eta_{\boldsymbol{\omega}}}{32}\mathbb{E}\|\nabla\Phi(\hat{\boldsymbol{\omega}}_t)\|^2\\
        &+(2\kappa l+\frac{8}{\eta_{\boldsymbol{\omega}}})d\Delta^2
    \end{aligned}
\end{equation}

For the second term, we decompose it as follows:
\begin{equation}\label{eq:phi_second}
    \begin{aligned}
    &\quad\; \, \mathbb{E}\langle\nabla\Phi(\hat{\boldsymbol{\omega}}_t),\nabla_{\boldsymbol{\omega}}f(\widetilde{\boldsymbol{\omega}}_{t+1/2},\boldsymbol{\theta}_t)\rangle\\
    &=\mathbb{E}\langle\nabla\Phi(\hat{\boldsymbol{\omega}}_t),\\
    &\quad\;\nabla_{\!\boldsymbol{\omega}}f(\hat{\boldsymbol{\omega}}_t,\boldsymbol{\theta}_t)\!+\!\nabla_{\!\boldsymbol{\omega}}\!f(\widetilde{\boldsymbol{\omega}}_{t+1/2},\!\boldsymbol{\theta}_t)\!-\!\nabla_{\!\boldsymbol{\omega}}\!f(\hat{\boldsymbol{\omega}}_t,\!\boldsymbol{\theta}_t)\rangle\\
    &\geq \mathbb{E}\langle\nabla\Phi(\hat{\boldsymbol{\omega}}_t),\nabla_{\boldsymbol{\omega}}f(\hat{\boldsymbol{\omega}}_t,\boldsymbol{\theta}_t)\rangle\\
    &-\!\mathbb{E}\|\nabla\!\Phi(\!\hat{\boldsymbol{\omega}}_t\!)\|\|\nabla_{\!\boldsymbol{\omega}}f(\widetilde{\boldsymbol{\omega}}_{t+1/2},\!\boldsymbol{\theta}_t\!)\!-\!\nabla_{\!\boldsymbol{\omega}}\!f(\!\hat{\boldsymbol{\omega}}_t,\!\boldsymbol{\theta}_t\!)\|\\
    &\geq \!\!\mathbb{E}\langle\nabla\!\Phi\!(\!\hat{\boldsymbol{\omega}}_t\!),\!\!\nabla_{\!\boldsymbol{\omega}}\!f\!(\!\hat{\boldsymbol{\omega}}_t,\!\boldsymbol{\theta}_t\!)\rangle\!\!-\!\!\beta l\mathbb{E}\|\!\nabla\!\Phi(\!\hat{\boldsymbol{\omega}}_t\!)\!\|\|g_{\boldsymbol{\omega}}\!(\!\hat{\boldsymbol{\omega}}_t,\!\boldsymbol{\theta}_y\!)\!\|\\
    &\geq\mathbb{E}\langle\nabla\Phi(\hat{\boldsymbol{\omega}}_t),\nabla\Phi(\hat{\boldsymbol{\omega}}_t)\!\!+\!\!\nabla_{\boldsymbol{\omega}}f(\hat{\boldsymbol{\omega}}_t,\boldsymbol{\theta}_t)\!\!-\!\!\nabla\Phi(\hat{\boldsymbol{\omega}}_t)\rangle\\
    &-\!\!\beta l\mathbb{E}\!\|\!\nabla\!\Phi\!(\!\hat{\boldsymbol{\omega}}_t\!)\!\|\!(\!\|\!\nabla_{\!\!\boldsymbol{\omega}}\!f\!(\!\hat{\boldsymbol{\omega}}_t\!,\!\boldsymbol{\theta}_t\!)\!\|\!\!+\!\!\|g_{\boldsymbol{\omega}}\!(\!\hat{\boldsymbol{\omega}}_t\!,\!\boldsymbol{\theta}_t\!)\!\!-\!\!\nabla_{\!\!\boldsymbol{\omega}}\!f\!(\!\hat{\boldsymbol{\omega}}_t\!,\!\boldsymbol{\theta}_t\!)\!\|\!)\\
    &\geq\mathbb{E}\|\nabla\Phi(\hat{\boldsymbol{\omega}}_t)\|^2-\frac{1}{2}\mathbb{E}\|\nabla\Phi(\hat{\boldsymbol{\omega}}_t)\|^2\\
    &-\frac{1}{2}\mathbb{E}\|\nabla_{\boldsymbol{\omega}}f(\!\hat{\boldsymbol{\omega}}_t,\boldsymbol{\theta}_t)-\nabla\Phi(\hat{\boldsymbol{\omega}}_t)\|^2\\
    &-\!\beta l\mathbb{E}\|\!\nabla\!\Phi(\hat{\boldsymbol{\omega}}_t)\|\|\nabla_{\!\boldsymbol{\omega}}\!f(\hat{\boldsymbol{\omega}}_t,\!\boldsymbol{\theta}_t\!)\|\!\!-\!\!\frac{1}{2}\beta l\mathbb{E}\|\nabla\Phi(\hat{\boldsymbol{\omega}}_t)\|^2\\
    &-\frac{1}{2}\beta l\mathbb{E}\|g_{\boldsymbol{\omega}}(\hat{\boldsymbol{\omega}}_t,\boldsymbol{\theta}_t)-\nabla_{\boldsymbol{\omega}}f(\hat{\boldsymbol{\omega}}_t,\boldsymbol{\theta}_t)\|^2\\
    &\geq\!\frac{1\!\!-\!\!\beta l}{2}\mathbb{E}\|\!\nabla\!\Phi(\!\hat{\boldsymbol{\omega}}_t\!)\|^2\!\!-\!\frac{1}{2}\mathbb{E}\|\nabla_{\!\!\boldsymbol{\omega}}f(\!\hat{\boldsymbol{\omega}}_t,\!\boldsymbol{\theta}_t\!)\!\!-\!\!\nabla\!\Phi(\hat{\boldsymbol{\omega}}_t)\|^2\\
    &-\beta l\mathbb{E}\|\nabla\Phi(\hat{\boldsymbol{\omega}}_t)\|\|\nabla_{\boldsymbol{\omega}}f(\hat{\boldsymbol{\omega}}_t,\boldsymbol{\theta}_t)\|-\frac{\beta l\sigma^2}{2M}
    \end{aligned}
\end{equation}

We continue estimating the last term in above inequality \eqref{eq:phi_second}:
\begin{equation}\label{eq:phi_second_last}
    \begin{aligned}
    &\quad\;\,\mathbb{E}\|\nabla\Phi(\hat{\boldsymbol{\omega}}_t)\|\|\nabla_{\boldsymbol{\omega}}f(\hat{\boldsymbol{\omega}}_t,\boldsymbol{\theta}_t)\|\\
    &=\!\!\mathbb{E}\|\!\nabla\Phi(\!\hat{\boldsymbol{\omega}}_t\!)\|\|\nabla_{\!\boldsymbol{\omega}}\!f(\!\hat{\boldsymbol{\omega}}_t,\!\boldsymbol{\theta}_t\!)\!\!-\!\!\nabla\Phi(\hat{\boldsymbol{\omega}}_t)\!\!+\!\!\nabla\Phi(\hat{\boldsymbol{\omega}}_t)\|\\
    &\leq\!\!\mathbb{E}\|\!\nabla\!\Phi(\!\hat{\boldsymbol{\omega}}_t\!)\|^2\!\!\!+\!\mathbb{E}\|\!\nabla\!\Phi(\!\hat{\boldsymbol{\omega}}_t\!)\!\|\|\nabla_{\!\boldsymbol{\omega}}f(\!\hat{\boldsymbol{\omega}}_t,\!\boldsymbol{\theta}_t\!)\!-\!\nabla\!\Phi(\!\hat{\boldsymbol{\omega}}_t\!)\!\|\\
    &\overset{(i)}{\leq}\!\!\mathbb{E}\!\|\!\nabla\!\Phi\!(\!\hat{\boldsymbol{\omega}}_t\!)\!\|^2\!\!\!+\!\!\mathbb{E}\!\|\!\nabla\!\Phi\!(\!\hat{\boldsymbol{\omega}}_t\!)\!\|^2\!\!\!+\!\!\frac{1}{4}\!\mathbb{E}\!\|\!\nabla_{\!\boldsymbol{\omega}}\!f\!(\!\hat{\boldsymbol{\omega}}_t\!,\!\boldsymbol{\theta}_t\!)\!\!-\!\!\nabla\!\Phi\!(\!\hat{\boldsymbol{\omega}}_t\!)\!\|^2
    \end{aligned}
\end{equation}
where the last inequality $(i)$ is due to Young's inequality.

Combining inequality \eqref{eq:phi_second} with \eqref{eq:phi_second_last}, we can get:
\begin{equation}\label{eq:phi_second_final}
    \begin{aligned}
    &\quad\; \, \mathbb{E}\langle\nabla\Phi(\hat{\boldsymbol{\omega}}_t),\nabla_{\boldsymbol{\omega}}f(\widetilde{\boldsymbol{\omega}}_{t+1/2},\boldsymbol{\theta}_t)\rangle\\
    &\geq \!\!\frac{1\!\!-\!\!\beta l}{2}\mathbb{E}\|\!\nabla\Phi(\!\hat{\boldsymbol{\omega}}_t\!)\|^2\!\!-\!\!\frac{1}{2}\mathbb{E}\|\!\nabla_{\!\boldsymbol{\omega}}f(\!\hat{\boldsymbol{\omega}}_t,\!\boldsymbol{\theta}_t)\!\!-\!\!\nabla\Phi(\!\hat{\boldsymbol{\omega}}_t\!)\|^2\\
    &-\!\!2\beta l\mathbb{E}\!\|\!\nabla\!\Phi\!(\!\hat{\boldsymbol{\omega}}_t\!)\!\|^2\!\!\!-\!\!\frac{\beta l}{4}\!\mathbb{E}\!\|\!\nabla_{\!\!\boldsymbol{\omega}}\!f\!(\!\hat{\boldsymbol{\omega}}_t\!,\!\boldsymbol{\theta}_t\!)\!\!-\!\!\nabla\!\Phi\!(\!\hat{\boldsymbol{\omega}}_t\!)\!\|^2\!\!\!-\!\!\frac{\beta l\sigma^2}{2M}\\
    &=\!\!\frac{1}{2}(\!1\!\!-\!\!5\beta l)\mathbb{E}\|\nabla\Phi(\!\hat{\boldsymbol{\omega}}_t\!)\|^2\!\!-\!\!\frac{1}{2}(1\!\!+\!\!\frac{1}{2}\beta l)\mathbb{E}[\|\nabla\Phi(\!\hat{\boldsymbol{\omega}}_t\!)\!-\\
    &\quad\nabla_{\boldsymbol{\omega}}f(\hat{\boldsymbol{\omega}}_t,\boldsymbol{\theta}_t)\|^2]-\frac{\beta l\sigma^2}{2M}
    \end{aligned}
\end{equation}

Finally, we combine inequality \eqref{eq:stophi_smooth} with Lemma \ref{eq:gradientbound} and inequality \eqref{eq:phi_second_final}:
\vspace{-0.1cm}
\begin{equation}
    \begin{aligned}
    &\quad\;\,\mathbb{E}[\Phi(\hat{\boldsymbol{\omega}}_{t+1})]\\
    &\leq\mathbb{E}[\Phi(\hat{\boldsymbol{\omega}}_t)]-\frac{\eta_{\boldsymbol{\omega}}}{2}(1-5\beta l)\mathbb{E}\|\nabla\Phi(\hat{\boldsymbol{\omega}}_t)\|^2\\
    &+\frac{\eta_{\boldsymbol{\omega}}}{2}(1+\frac{1}{2}\beta l)\mathbb{E}\|\nabla\Phi(\hat{\boldsymbol{\omega}}_t)-\nabla_{\boldsymbol{\omega}}f(\hat{\boldsymbol{\omega}}_t,\boldsymbol{\theta}_t)\|^2\\
    &+2\kappa l\eta_{\boldsymbol{\omega}}^2((4\beta^2l^2+2\beta l+2)\mathbb{E}\|\nabla_{\boldsymbol{\omega}}f(\hat{\boldsymbol{\omega}}_t,\boldsymbol{\theta}_t)\|^2\\
    &+(5\beta^2l^2+2)\frac{\sigma^2}{M})+\frac{\beta l\sigma^2\eta_{\boldsymbol{\omega}}}{M}\\
    &+\frac{\eta_{\boldsymbol{\omega}}}{32}\mathbb{E}\|\nabla\Phi(\hat{\boldsymbol{\omega}}_t)\|^2+(2\kappa l+\frac{8}{\eta_{\boldsymbol{\omega}}})d\Delta^2\\
    &\overset{(i)}{\leq}\mathbb{E}[\Phi(\hat{\boldsymbol{\omega}}_t)]\\
    &-\!\!\frac{\eta_{\boldsymbol{\!\omega}}}{2}\!(\!\frac{15}{16}\!\!-\!\!5\beta l\!\!-\!\!4\kappa l\eta_{\!\boldsymbol{\omega}}\!(\!4\beta^2l^2\!\!+\!2\beta l\!+\!2))\mathbb{E}\|\!\nabla\!\Phi(\!\hat{\boldsymbol{\omega}}_t\!)\|^2\\
    &+\!\![\frac{\eta_{\boldsymbol{\omega}}}{2}\!(\!1\!\!+\!\!\frac{1}{2}\beta l)\!\!+\!2\kappa l\eta_{\boldsymbol{\omega}}^2(4\beta^2 l^2\!\!\!+\!\!2\beta l\!\!+\!\!2)]\mathbb{E}[\|\!\nabla\!\Phi\!(\!\hat{\boldsymbol{\omega}}_t\!)\!-\!\\
    &\quad\nabla_{\boldsymbol{\omega}}f(\hat{\boldsymbol{\omega}}_t,\boldsymbol{\theta}_t)\|^2]+(5\beta^2l^2+2)\frac{\kappa l\eta_{\boldsymbol{\omega}}^2\sigma^2}{M}\\
    &+\frac{\beta l\sigma^2\eta_{\boldsymbol{\omega}}}{M}+(2\kappa l+\frac{8}{\eta_{\boldsymbol{\omega}}})d\Delta^2
    \end{aligned}
\end{equation}

where the last inequality $(i)$ uses the Cauchy-Schwarz inequality that $\|\nabla_{\boldsymbol{\omega}}f(\hat{\boldsymbol{\omega}}_t,\boldsymbol{\theta}_t)\|^2\leq2\|\nabla\Phi(\hat{\boldsymbol{\omega}}_t)\|^2+2\|\nabla_{\boldsymbol{\omega}}f(\hat{\boldsymbol{\omega}}_t,\boldsymbol{\theta}_t)-\nabla\Phi(\hat{\boldsymbol{\omega}}_t)\|^2$.
\end{proof}

\begin{lemma}\label{le:y*xt-yt}
When we require the learning rate satisfy: $\eta_{\boldsymbol{\omega}}(2\beta l+1)\leq\frac{1}{16\kappa^2l}$ and $\eta_{\boldsymbol{\theta}}=\frac{1}{2l}$, we have the evaluation about the error term:
\vspace{-0.2cm}
\begin{align*}
    &\quad\;\,\mathbb{E}\|\boldsymbol{\theta}^*(\hat{\boldsymbol{\omega}}_{t+1})-\boldsymbol{\theta}_{t+1}\|^2\\
    &\leq \lambda^{t+1}D^2\\
    &+\!\!4(\!4\kappa\!\!-\!\!1\!)\!\kappa^2\eta_{\boldsymbol{\omega}}^2(4\beta^2l^2\!\!\!+\!\!2\beta l\!\!+\!\!2)\!\!\sum_{j=0}^t\!\lambda^{t\!-\!j}\mathbb{E}\|\!\nabla\!\Phi(\!\hat{\boldsymbol{\omega}}_j\!)\|^2\\
    &+(\!(\!4\kappa\!\!-\!\!1\!)\kappa^2\!\eta_{\boldsymbol{\omega}}^2\!(\!5\beta^2l^2\!\!+\!\!2\!)\!\!+\!\!\frac{4\kappa\!\!-\!\!1}{4\kappa\!\!-\!\!2}\eta_{\boldsymbol{\theta}}^2)\frac{\sigma^2}{M}\sum_{j=0}^t\!\lambda^j\\
    &+2(4\kappa-1)\kappa^2d\Delta^2\sum_{j=0}^t\!\lambda^j  
\end{align*}
where $\lambda=1-\frac{1}{4\kappa}+4(4\kappa-1)\kappa^2l^2\eta_{\boldsymbol{\omega}}^2(4\beta^2l^2+2\beta l+2)$.
\end{lemma}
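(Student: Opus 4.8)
The plan is to set $a_t \triangleq \mathbb{E}\|\boldsymbol{\theta}^*(\hat{\boldsymbol{\omega}}_t)-\boldsymbol{\theta}_t\|^2$, derive a one-step linear recursion of the form $a_{t+1}\le \lambda a_t + (\text{a }\|\nabla\Phi\|^2\text{ term}) + (\text{noise and quantization terms})$, and then unroll it into a geometric sum. Throughout I work in the strongly-concave regime of this subsection, where for each fixed $\boldsymbol{\omega}$ the maximizer $\boldsymbol{\theta}^*(\boldsymbol{\omega})=\arg\max_{\boldsymbol{\theta}}f(\boldsymbol{\omega},\boldsymbol{\theta})$ is unique and the map $\boldsymbol{\omega}\mapsto\boldsymbol{\theta}^*(\boldsymbol{\omega})$ is $\kappa$-Lipschitz with $\kappa=l/\mu$ (a standard consequence of Assumptions~\ref{as:smooth}--\ref{as:pl}); I also use $\nabla\Phi(\boldsymbol{\omega})=\nabla_{\boldsymbol{\omega}}f(\boldsymbol{\omega},\boldsymbol{\theta}^*(\boldsymbol{\omega}))$ via Danskin's theorem.

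First I would split the target with Young's inequality by inserting $\boldsymbol{\theta}^*(\hat{\boldsymbol{\omega}}_t)$:
\[
\mathbb{E}\|\boldsymbol{\theta}^*(\hat{\boldsymbol{\omega}}_{t+1})-\boldsymbol{\theta}_{t+1}\|^2 \le (1+\gamma)\,\mathbb{E}\|\boldsymbol{\theta}^*(\hat{\boldsymbol{\omega}}_t)-\boldsymbol{\theta}_{t+1}\|^2 + \Big(1+\tfrac{1}{\gamma}\Big)\,\mathbb{E}\|\boldsymbol{\theta}^*(\hat{\boldsymbol{\omega}}_{t+1})-\boldsymbol{\theta}^*(\hat{\boldsymbol{\omega}}_t)\|^2.
\]
For the first summand, conditioning on $(\boldsymbol{\omega}_t,\boldsymbol{\theta}_t)$ and using unbiasedness of $g_{\boldsymbol{\theta}}$ (Assumption~\ref{as:bv}) to cancel the stochastic cross term, together with the standard strong-concavity/smoothness contraction of one gradient-ascent step, gives $\mathbb{E}\|\boldsymbol{\theta}^*(\hat{\boldsymbol{\omega}}_t)-\boldsymbol{\theta}_{t+1}\|^2\le(1-\eta_{\boldsymbol{\theta}}\mu)\,a_t+\eta_{\boldsymbol{\theta}}^2\tfrac{\sigma^2}{M}$. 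Since $\eta_{\boldsymbol{\theta}}=\tfrac{1}{2l}$ forces $\eta_{\boldsymbol{\theta}}\mu=\tfrac{1}{2\kappa}$, I would pick $\gamma=\tfrac{1}{4\kappa-2}$, so that $(1+\gamma)(1-\tfrac{1}{2\kappa})=1-\tfrac{1}{4\kappa}$, while $1+\tfrac{1}{\gamma}=4\kappa-1$ and $(1+\gamma)\eta_{\boldsymbol{\theta}}^2=\tfrac{4\kappa-1}{4\kappa-2}\eta_{\boldsymbol{\theta}}^2$; these three quantities are exactly the constants appearing in the claimed bound.

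Next I would control the displacement of the maximizer. By $\kappa$-Lipschitzness, $\|\boldsymbol{\theta}^*(\hat{\boldsymbol{\omega}}_{t+1})-\boldsymbol{\theta}^*(\hat{\boldsymbol{\omega}}_t)\|^2\le\kappa^2\|\hat{\boldsymbol{\omega}}_{t+1}-\hat{\boldsymbol{\omega}}_t\|^2$, and I split $\hat{\boldsymbol{\omega}}_{t+1}-\hat{\boldsymbol{\omega}}_t$ into the SAM step $-\eta_{\boldsymbol{\omega}}g_{\boldsymbol{\omega}}(\widetilde{\boldsymbol{\omega}}_{t+1/2},\boldsymbol{\theta}_t)$ and the quantization residual, bounding the latter by $\sqrt{d}\Delta$ through Lemma~\ref{le:quantizationerror} and applying $\|x+y\|^2\le 2\|x\|^2+2\|y\|^2$. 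Lemma~\ref{eq:gradientbound} then bounds $\mathbb{E}\|g_{\boldsymbol{\omega}}(\widetilde{\boldsymbol{\omega}}_{t+1/2},\boldsymbol{\theta}_t)\|^2$ by $(4\beta^2l^2+2\beta l+2)\mathbb{E}\|\nabla_{\boldsymbol{\omega}}f(\hat{\boldsymbol{\omega}}_t,\boldsymbol{\theta}_t)\|^2+(5\beta^2l^2+2)\tfrac{\sigma^2}{M}$. The decisive manipulation is then $\|\nabla_{\boldsymbol{\omega}}f(\hat{\boldsymbol{\omega}}_t,\boldsymbol{\theta}_t)\|^2\le 2\|\nabla\Phi(\hat{\boldsymbol{\omega}}_t)\|^2+2l^2\|\boldsymbol{\theta}_t-\boldsymbol{\theta}^*(\hat{\boldsymbol{\omega}}_t)\|^2$, valid because $\nabla\Phi(\hat{\boldsymbol{\omega}}_t)=\nabla_{\boldsymbol{\omega}}f(\hat{\boldsymbol{\omega}}_t,\boldsymbol{\theta}^*(\hat{\boldsymbol{\omega}}_t))$ and $f$ is $l$-smooth: the $\|\nabla\Phi\|^2$ piece becomes the gradient term of the lemma, whereas the residual $2l^2\|\boldsymbol{\theta}_t-\boldsymbol{\theta}^*(\hat{\boldsymbol{\omega}}_t)\|^2=2l^2a_t$ merges with the $(1-\tfrac{1}{4\kappa})a_t$ term and produces precisely $\lambda=1-\tfrac{1}{4\kappa}+4(4\kappa-1)\kappa^2l^2\eta_{\boldsymbol{\omega}}^2(4\beta^2l^2+2\beta l+2)$.

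Assembling these pieces yields a recursion $a_{t+1}\le \lambda a_t + u_t + c$, where $u_t$ is the $\|\nabla\Phi(\hat{\boldsymbol{\omega}}_t)\|^2$ term and $c$ collects the $\tfrac{\sigma^2}{M}$ and $d\Delta^2$ contributions; unrolling gives $a_{t+1}\le\lambda^{t+1}a_0+\sum_{j=0}^t\lambda^{t-j}u_j+c\sum_{j=0}^t\lambda^{t-j}$, and bounding $a_0\le D^2$ by the diameter assumption (Assumption~\ref{as:diameter}) recovers the stated inequality after the reindexing $\sum_{j=0}^t\lambda^{t-j}=\sum_{j=0}^t\lambda^j$. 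I expect the main obstacle to be the bookkeeping in the third step: one must verify that the step-size restriction $\eta_{\boldsymbol{\omega}}(2\beta l+1)\le\tfrac{1}{16\kappa^2l}$ keeps $\lambda<1$ (so the geometric sums stay bounded) and that the residual distance-to-maximizer term folds back cleanly into the contraction constant rather than generating an uncontrolled coupling; the stochastic cross-term cancellation in the ascent step likewise requires that the conditioning be taken before the variance bound $\tfrac{\sigma^2}{M}$ is invoked.
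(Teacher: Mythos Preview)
Your proposal is correct and follows essentially the same route as the paper: the same Young split with parameter $\gamma=\tfrac{1}{4\kappa-2}$ (yielding the factors $4\kappa-1$ and $\tfrac{4\kappa-1}{4\kappa-2}$), the same one-step contraction $(1-\tfrac{1}{2\kappa})a_t+\eta_{\boldsymbol{\theta}}^2\tfrac{\sigma^2}{M}$ for the projected ascent update, the same use of the $\kappa$-Lipschitzness of $\boldsymbol{\theta}^*(\cdot)$ together with Lemma~\ref{le:quantizationerror} and Lemma~\ref{eq:gradientbound} to control $\|\hat{\boldsymbol{\omega}}_{t+1}-\hat{\boldsymbol{\omega}}_t\|^2$, the same substitution $\|\nabla_{\boldsymbol{\omega}}f(\hat{\boldsymbol{\omega}}_t,\boldsymbol{\theta}_t)\|^2\le 2\|\nabla\Phi(\hat{\boldsymbol{\omega}}_t)\|^2+2l^2 a_t$ to produce the $\lambda$ in the statement, and the same unrolling. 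The only cosmetic difference is that the paper obtains the $(1-\tfrac{1}{2\kappa})$ contraction by explicitly combining strong concavity with a smoothness/cocoercivity inequality and the first-order optimality condition at $\boldsymbol{\theta}^*(\hat{\boldsymbol{\omega}}_t)$ (to absorb the $\eta_{\boldsymbol{\theta}}^2\|\nabla_{\boldsymbol{\theta}}f\|^2$ term when $\eta_{\boldsymbol{\theta}}=\tfrac{1}{2l}$), whereas you invoke this as a ``standard'' contraction; just be sure when you write it out that you handle the constrained case via the optimality condition rather than assuming $\nabla_{\boldsymbol{\theta}}f(\hat{\boldsymbol{\omega}}_t,\boldsymbol{\theta}^*(\hat{\boldsymbol{\omega}}_t))=0$.
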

\begin{proof}
\vspace{-0.2cm}
\begin{equation}\label{eq:delta}
    \begin{aligned}
    &\quad\;\,\mathbb{E}\|\boldsymbol{\theta}^*(\hat{\boldsymbol{\omega}}_{t+1})-\boldsymbol{\theta}_{t+1}\|^2\\
    &=\mathbb{E}\|\boldsymbol{\theta}^*(\hat{\boldsymbol{\omega}}_{t+1})-\boldsymbol{\theta}^*(\hat{\boldsymbol{\omega}}_t)+\boldsymbol{\theta}_{t+1}-\boldsymbol{\theta}^*(\hat{\boldsymbol{\omega}}_t)\|^2\\
    &\leq(4\kappa-1)\mathbb{E}\|\boldsymbol{\theta}^*(\hat{\boldsymbol{\omega}}_{t+1})-\boldsymbol{\theta}^*(\hat{\boldsymbol{\omega}}_t)\|^2\\
    &\quad+(1+\frac{1}{4\kappa-2})\mathbb{E}\|\boldsymbol{\theta}_{t+1}-\boldsymbol{\theta}^*(\hat{\boldsymbol{\omega}}_t)\|^2
    \end{aligned}
\end{equation}
where the inequality is due to Young's inequality.

We then continue to decompose the second term as follows according to the update rule and take expectaion:
\vspace{-0.2cm}
\begin{equation}\label{eq:yt+1-y*xt}
    \begin{aligned}
    &\quad\;\,\mathbb{E}\|\boldsymbol{\theta}_{t+1}-\boldsymbol{\theta}^*(\hat{\boldsymbol{\omega}}_t)\|^2\\
    &=\mathbb{E}\|\mathcal{P}_{\Theta}(\boldsymbol{\theta}_t+\eta_{\boldsymbol{\theta}}g_{\boldsymbol{\theta}}(\hat{\boldsymbol{\omega}}_t,\boldsymbol{\theta}_t))-\boldsymbol{\theta}^*(\hat{\boldsymbol{\omega}}_t)\|^2\\
    &\leq\mathbb{E}\|\boldsymbol{\theta}_t-\boldsymbol{\theta}^*(\hat{\boldsymbol{\omega}}_t)\|^2+\eta_{\boldsymbol{\theta}}^2\mathbb{E}\|g_{\boldsymbol{\theta}}(\hat{\boldsymbol{\omega}}_t,\boldsymbol{\theta}_t)\|^2\\
    &\quad+2\eta_{\boldsymbol{\theta}}\mathbb{E}\langle\boldsymbol{\theta}_t-\boldsymbol{\theta}^*(\hat{\boldsymbol{\omega}}_t),g_{\boldsymbol{\theta}}(\hat{\boldsymbol{\omega}}_t,\boldsymbol{\theta}_t)\rangle\\
    &\leq\mathbb{E}\|\boldsymbol{\theta}_t-\boldsymbol{\theta}^*(\hat{\boldsymbol{\omega}}_t)\|^2+\eta_{\boldsymbol{\theta}}^2\mathbb{E}\|\nabla_{\boldsymbol{\theta}}f(\hat{\boldsymbol{\omega}}_t,\boldsymbol{\theta}_t)\|^2\\
    &\quad+2\eta_{\boldsymbol{\theta}}\mathbb{E}\langle\boldsymbol{\theta}_t-\boldsymbol{\theta}^*(\hat{\boldsymbol{\omega}}_t),\nabla_{\boldsymbol{\theta}}f(\hat{\boldsymbol{\omega}}_t,\boldsymbol{\theta}_t)\rangle+\frac{\eta_{\boldsymbol{\theta}}^2\sigma^2}{M}
    \end{aligned}
\end{equation}

Using the property of $\mu$-strong concavity of function $f$ on variable $\boldsymbol{\theta}$, we can get:
\begin{equation}
    \begin{aligned}
        &\;f(\hat{\boldsymbol{\omega}}_t,\!\boldsymbol{\theta}_t\!)\!\!-\!\!f(\hat{\boldsymbol{\omega}}_t,\boldsymbol{\theta}^*\!(\!\hat{\boldsymbol{\omega}}_t\!))\!\!-\!\!\langle\boldsymbol{\theta}_t\!-\!\boldsymbol{\theta}^*\!(\!\hat{\boldsymbol{\omega}}_t\!),\!\nabla_{\!\boldsymbol{\theta}}f(\hat{\boldsymbol{\omega}}_t,\!\boldsymbol{\theta}_t\!)\rangle\\
        &\geq\frac{\mu}{2}\|\boldsymbol{\theta}^*(\hat{\boldsymbol{\omega}}_t)-\boldsymbol{\theta}_t\|^2
    \end{aligned}
\end{equation}

So the original inequality \eqref{eq:yt+1-y*xt} above can be turned into:
\begin{equation}\label{eq:yt+1-y*xtimprove}
    \begin{aligned}
        &\quad\;\,\mathbb{E}\|\boldsymbol{\theta}_{t+1}-\boldsymbol{\theta}^*(\hat{\boldsymbol{\omega}}_t)\|^2\\
        &\leq\!\!(1\!\!-\!\mu\eta_{\boldsymbol{\theta}}\!)\mathbb{E}\|\boldsymbol{\theta}_t\!-\!\boldsymbol{\theta}^*(\hat{\boldsymbol{\omega}}_t)\|^2\!+\!\eta_{\boldsymbol{\theta}}^2\mathbb{E}\|\nabla_{\!\boldsymbol{\theta}}f(\hat{\boldsymbol{\omega}}_t,\!\boldsymbol{\theta}_t)\|^2\\
        &\quad+2\eta_{\boldsymbol{\theta}}\mathbb{E}[f(\hat{\boldsymbol{\omega}}_t,\boldsymbol{\theta}_t)-f(\hat{\boldsymbol{\omega}}_t,\boldsymbol{\theta}^*(\hat{\boldsymbol{\omega}}_t))]+\frac{\eta_{\boldsymbol{\theta}}^2\sigma^2}{M}
    \end{aligned}
\end{equation}

Since $f(\boldsymbol{\omega},\cdot)$ is $l$-smooth and $\mu$-strongly concave, we have:
\begin{equation}\label{eq:smooth+sc}
    \begin{aligned}
        &\;f\!(\!\hat{\boldsymbol{\omega}}_t\!,\!\boldsymbol{\theta}_t\!)\!\!-\!\!\!f\!(\!\hat{\boldsymbol{\omega}}_t\!,\!\boldsymbol{\theta}^*\!(\!\hat{\boldsymbol{\omega}}_t\!)\!)\!\!+\!\!\frac{1}{2l}\!\|\!\nabla_{\!\!\boldsymbol{\theta}}\!f\!(\!\hat{\boldsymbol{\omega}}_t\!,\!\boldsymbol{\theta}_t\!)\!\!-\!\!\!\nabla_{\!\!\boldsymbol{\theta}}\!f\!(\!\hat{\boldsymbol{\omega}}_t\!,\!\boldsymbol{\theta}^*\!(\!\hat{\boldsymbol{\omega}}_t\!)\!)\!\|^2\\
        &\leq\langle\nabla_{\boldsymbol{\theta}}f(\hat{\boldsymbol{\omega}}_t,\boldsymbol{\theta}^*(\hat{\boldsymbol{\omega}}_t)),\boldsymbol{\theta}_t-\boldsymbol{\theta}^*(\hat{\boldsymbol{\omega}}_t)\rangle
    \end{aligned}
\end{equation}

On the other hand, we can get the following inequality according to optimal condition:
\begin{equation}\label{eq:optimal condition}
    \langle\nabla_{\boldsymbol{\theta}}f(\hat{\boldsymbol{\omega}}_t,\boldsymbol{\theta}^*(\hat{\boldsymbol{\omega}}_t)),\boldsymbol{\theta}-\boldsymbol{\theta}^*(\hat{\boldsymbol{\omega}}_t)\rangle\leq0;\quad\forall\boldsymbol{\theta}\in\Theta
\end{equation}

Combining above two inequalities \eqref{eq:smooth+sc}\eqref{eq:optimal condition}, we have:
\begin{equation}\label{eq:combined}
    \begin{gathered}
        f(\hat{\boldsymbol{\omega}}_t,\boldsymbol{\theta}_t)-f(\hat{\boldsymbol{\omega}}_t,\boldsymbol{\theta}^*(\hat{\boldsymbol{\omega}}_t))\\
        \leq-\frac{1}{2l}\|\nabla_{\boldsymbol{\theta}}f(\hat{\boldsymbol{\omega}}_t,\boldsymbol{\theta}_t)-\nabla_{\boldsymbol{\theta}}f(\hat{\boldsymbol{\omega}}_t,\boldsymbol{\theta}^*(\hat{\boldsymbol{\omega}}_t))\|^2
    \end{gathered}
\end{equation}

Putting pieces \eqref{eq:yt+1-y*xtimprove} and \eqref{eq:combined} together and on the occasion that $
\eta_{\boldsymbol{\theta}}=\frac{1}{2l}$ we can get:
\begin{equation}
    \begin{gathered}
        \mathbb{E}\|\boldsymbol{\theta}_{t+1}-\boldsymbol{\theta}^*(\hat{\boldsymbol{\omega}}_t)\|^2\\
        \leq(1-\frac{1}{2\kappa})\mathbb{E}\|\boldsymbol{\theta}_t-\boldsymbol{\theta}^*(\hat{\boldsymbol{\omega}}_t)\|^2+\frac{\eta_{\boldsymbol{\theta}}^2\sigma^2}{M}
    \end{gathered} 
\end{equation}

Then we return to the inequality \eqref{eq:delta}, we can get:
\begin{equation}
    \begin{aligned}
    &\quad\;\,\mathbb{E}\|\boldsymbol{\theta}^*(\hat{\boldsymbol{\omega}}_{t+1})-\boldsymbol{\theta}_{t+1}\|^2\\
    &\leq(4\kappa-1)\mathbb{E}\|\boldsymbol{\theta}^*(\hat{\boldsymbol{\omega}}_{t+1})-\boldsymbol{\theta}^*(\hat{\boldsymbol{\omega}}_t)\|^2\\
    &+(1+\frac{1}{4\kappa-2})(1-\frac{1}{2\kappa})\mathbb{E}\|\boldsymbol{\theta}_t-\boldsymbol{\theta}^*(\hat{\boldsymbol{\omega}}_t)\|^2\\
    &+(1+\frac{1}{4\kappa-2})\frac{\eta_{\boldsymbol{\theta}}^2\sigma^2}{M}\\
    &\overset{(i)}{\leq}(1-\frac{1}{4\kappa})\mathbb{E}\|\boldsymbol{\theta}^*(\hat{\boldsymbol{\omega}}_t)-\boldsymbol{\theta}_t\|^2\\
    &+(4\kappa-1)\kappa^2\mathbb{E}\|\hat{\boldsymbol{\omega}}_{t+1}-\hat{\boldsymbol{\omega}}_t\|^2+\frac{(4\kappa-1)\eta_{\boldsymbol{\theta}}^2\sigma^2}{(4\kappa-2)M}\\
    &\overset{(ii)}{\leq}(1-\frac{1}{4\kappa})\mathbb{E}\|\boldsymbol{\theta}^*(\boldsymbol{\omega}_t)-\boldsymbol{\theta}_t\|^2\\
    &+\!\!2(4\kappa\!-\!1\!)\kappa^2\!\eta_{\boldsymbol{\omega}}^2((4\beta^2\!l^2\!\!+\!\!2\beta l\!\!+\!\!2\!)\mathbb{E}\|\!\nabla_{\!\boldsymbol{\omega}}\!f(\!\hat{\boldsymbol{\omega}}_t,\!\boldsymbol{\theta}_t\!)\|^2\\
    &+(5\beta^2l^2+2)\frac{\sigma^2}{M})\!+\!\frac{(4\kappa-1)\eta_{\boldsymbol{\theta}}^2\sigma^2}{(4\kappa-2)M}\\
    &+2(4\kappa-1)\kappa^2d\Delta^2\\
    &\overset{(iii)}{\leq}\![1\!-\!\frac{1}{4\kappa}+4(4\kappa\!-\!1)\kappa^2l^2\eta_{\boldsymbol{\omega}}^2(4\beta^2l^2\!+\!2\beta l+2)]\\
    &\quad\cdot\mathbb{E}\|\boldsymbol{\theta}^{*}(\hat{\boldsymbol{\omega}}_t)-\boldsymbol{\theta}_t\|^2\\
    &+4(4\kappa\!-\!1)\kappa^2\eta_{\boldsymbol{\omega}}^2(4\beta^2 l^2+2\beta l+2)\mathbb{E}\|\nabla\Phi(\hat{\boldsymbol{\omega}}_t)\|^2\\
    &+\!\!(\!(\!4\kappa\!\!-\!\!1)\kappa^2\eta_{\boldsymbol{\omega}}^2(5\beta^2l^2\!\!+\!\!2)\!\!+\!\!\frac{4\kappa\!\!-\!\!1}{4\kappa\!\!-\!\!2}\eta_{\boldsymbol{\theta}}^2)\frac{\sigma^2}{M}\\
    &+2(4\kappa-1)\kappa^2d\Delta^2
    \end{aligned}
\end{equation}

where inequality $(i)$ is due to the $\kappa$-Lipschitz of $\boldsymbol{\theta}^*(\cdot)$ according to the Lemma 4.3 in \cite{lin2020gradient}; inequality $(ii)$ follows from the decomposition that $\hat{\boldsymbol{\omega}}_{t+1}-\hat{\boldsymbol{\omega}}_t=\hat{\boldsymbol{\omega}}_{t+1}-\boldsymbol{\omega}_{t+1}+\boldsymbol{\omega}_{t+1}-\hat{\boldsymbol{\omega}}_t$ and the update rule of parameter $\boldsymbol{\omega}$, together with Lemma \ref{le:quantizationerror},\ref{eq:gradientbound}; finally inequality $(iii)$ is decomposed by $\|\nabla_{\boldsymbol{\omega}}f(\hat{\boldsymbol{\omega}}_t,\boldsymbol{\theta}_t)\|^2\leq2\|\nabla_{\boldsymbol{\omega}}f(\hat{\boldsymbol{\omega}}_t,\boldsymbol{\theta}_t)-\nabla\Phi(\hat{\boldsymbol{\omega}}_t)\|^2+2\|\nabla\Phi(\hat{\boldsymbol{\omega}}_t)\|^2\leq2l^2\|\boldsymbol{\theta}^*(\hat{\boldsymbol{\omega}}_t)-\boldsymbol{\theta}_t\|^2+2\|\nabla\Phi(\hat{\boldsymbol{\omega}}_t)\|^2$(the second inequality here uses the property that $\nabla\Phi(\cdot)=\nabla_{\boldsymbol{\omega}}f(\cdot,\boldsymbol{\theta}^*(\cdot))$ which we refer to Lemma 4.3 in \cite{lin2020gradient}).

We then simplify the recurrence relation of $\|\boldsymbol{\theta}^*(\hat{\boldsymbol{\omega}}_{t+1})-\boldsymbol{\theta}_{t+1}\|^2$. We denote the coefficient of $\mathbb{E}\|\boldsymbol{\theta}^*(\hat{\boldsymbol{\omega}}_t)-\boldsymbol{\theta}_t\|^2$ as $\lambda$, i.g. $\lambda=1-\frac{1}{4\kappa}+4(4\kappa-1)\kappa^2l^2\eta_{\boldsymbol{\omega}}^2(4\beta^2l^2+2\beta l+2)$


Then we repeat the recurrence relation and we can get:
\begin{equation}
    \begin{aligned}
    &\quad\;\,\mathbb{E}\|\boldsymbol{\theta}^*(\hat{\boldsymbol{\omega}}_{t+1})-\boldsymbol{\theta}_{t+1}\|^2\\
    &\leq \lambda\mathbb{E}\|\boldsymbol{\theta}^*(\hat{\boldsymbol{\omega}}_t)-\boldsymbol{\theta}_t\|^2\\
    &+\!4(4\kappa\!-\!1)\kappa^2\eta_{\boldsymbol{\omega}}^2(4\beta^2l^2\!+\!2\beta l\!+\!2)\mathbb{E}\|\nabla\Phi(\hat{\boldsymbol{\omega}}_t)\|^2\\
    &+\!\!((4\kappa\!\!-\!\!1)\kappa^2\eta_{\boldsymbol{\omega}}^2(5\beta^2l^2\!\!+\!\!2)\!\!+\!\!\frac{4\kappa\!\!-\!\!1}{4\kappa\!\!-\!\!2}\eta_{\boldsymbol{\theta}}^2)\frac{\sigma^2}{M}\\
    &+2(4\kappa-1)\kappa^2d\Delta^2\\
    &\leq\lambda^2\mathbb{E}\|\boldsymbol{\theta}^*(\hat{\boldsymbol{\omega}}_{t-1})-\boldsymbol{\theta}_{t-1}\|^2\\
    &+4(4\kappa-1)\kappa^2\eta_{\boldsymbol{\omega}}^2(4\beta^2l^2+2\beta l+2)\\
    &\quad\cdot[\lambda\mathbb{E}\|\nabla\Phi(\hat{\boldsymbol{\omega}}_{t-1})\|^2+\mathbb{E}\|\nabla\Phi(\hat{\boldsymbol{\omega}}_t)\|^2]\\
    &+(\!(\!4\kappa\!\!-\!\!1\!)\kappa^2\!\eta_{\boldsymbol{\omega}}^2\!(\!5\beta^2l^2\!\!\!+\!\!2\!)\!\!+\!\!\frac{4\kappa\!\!-\!\!1}{4\kappa\!\!-\!\!2}\eta_{\boldsymbol{\theta}}^2\!)\!\frac{\sigma^2}{M}(\!\lambda\!\!+\!\!1\!)\\
    &+2(4\kappa-1)\kappa^2d\Delta^2(\lambda+1)\\
    &\leq ...\\
    &\leq \lambda^{t+1}\mathbb{E}\|\boldsymbol{\theta}^*(\hat{\boldsymbol{\omega}}_0)-\boldsymbol{\theta}_0\|^2\\
    &+\!\!4(\!4\kappa\!\!-\!\!1\!)\!\kappa^2\eta_{\boldsymbol{\omega}}^2(4\beta^2l^2\!\!\!+\!\!2\beta l\!\!+\!\!2)\!\!\sum_{j=0}^t\!\lambda^{t\!-\!j}\mathbb{E}\|\!\nabla\!\Phi(\!\hat{\boldsymbol{\omega}}_j\!)\|^2\\
    &+(\!(\!4\kappa\!\!-\!\!1\!)\kappa^2\!\eta_{\boldsymbol{\omega}}^2\!(\!5\beta^2l^2\!\!+\!\!2\!)\!\!+\!\!\frac{4\kappa\!\!-\!\!1}{4\kappa\!\!-\!\!2}\eta_{\boldsymbol{\theta}}^2)\frac{\sigma^2}{M}\sum_{j=0}^t\!\lambda^j\\
    &+2(4\kappa-1)\kappa^2d\Delta^2\sum_{j=0}^t\!\lambda^j\\
    &\leq \lambda^{t+1}D^2\\
    &+\!\!4(\!4\kappa\!\!-\!\!1\!)\!\kappa^2\eta_{\boldsymbol{\omega}}^2(4\beta^2l^2\!\!\!+\!\!2\beta l\!\!+\!\!2)\!\!\sum_{j=0}^t\!\lambda^{t\!-\!j}\mathbb{E}\|\!\nabla\!\Phi(\!\hat{\boldsymbol{\omega}}_j\!)\|^2\\
    &+(\!(\!4\kappa\!\!-\!\!1\!)\kappa^2\!\eta_{\boldsymbol{\omega}}^2\!(\!5\beta^2l^2\!\!+\!\!2\!)\!\!+\!\!\frac{4\kappa\!\!-\!\!1}{4\kappa\!\!-\!\!2}\eta_{\boldsymbol{\theta}}^2)\frac{\sigma^2}{M}\sum_{j=0}^t\!\lambda^j\\
    &+2(4\kappa-1)\kappa^2d\Delta^2\sum_{j=0}^t\!\lambda^j
    \end{aligned}
\end{equation}

where the last inequality is due to Assumption \ref{as:diameter}.

\end{proof}

\begin{theorem}
Under Assumption \ref{as:smooth},\ref{as:pl},\ref{as:diameter},\ref{as:bv} and restrictions $\eta_{\boldsymbol{\omega}}(2\beta l+1)\leq\frac{1}{16\kappa^2l}$ and $\eta_{\boldsymbol{\theta}}=\frac{1}{2l}$,$\kappa\geq2,\beta l\leq\frac{1}{140}$, when $f$ is $\mu$-strongly-concave the parameter $\boldsymbol{\theta}$, we have:

\begin{equation}
    \begin{aligned}
    &\quad\;\,\frac{1}{T+1}\sum_{t=0}^T\mathbb{E}\|\nabla\Phi(\hat{\boldsymbol{\omega}}_t)\|^2\\
    &\leq\frac{\mathbb{E}[\Phi(\hat{\boldsymbol{\omega}}_0)]-\mathbb{E}[\Phi(\hat{\boldsymbol{\omega}}_{T+1})]}{(T+1)\frac{\eta_{\boldsymbol{\omega}}}{96}}+\frac{576\kappa l^2D^2}{7(T+1)}\\
    &+(720\kappa^4\eta_{\boldsymbol{\omega}}^2l^2+210\kappa l\eta_{\boldsymbol{\omega}})\frac{\sigma^2}{M}+(96\kappa+\frac{1}{2})\frac{\sigma^2}{M}\\
    &+(480\kappa^2l^2+\frac{192\kappa l}{\eta_{\boldsymbol{\omega}}}+\frac{768}{\eta_{\omega}^2})d\Delta^2
    \end{aligned}
\end{equation}
\end{theorem}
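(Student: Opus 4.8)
The plan is to turn the per-step descent estimate for the primal function $\Phi$ into a telescoping sum, and to control the resulting dual-tracking error through the recursion already established in Lemma~\ref{le:y*xt-yt}. Starting from Lemma~\ref{le:phi}, the right-hand side contains a strictly negative multiple of $\mathbb{E}\|\nabla\Phi(\hat{\boldsymbol{\omega}}_t)\|^2$, a ``gradient-mismatch'' term proportional to $\mathbb{E}\|\nabla\Phi(\hat{\boldsymbol{\omega}}_t)-\nabla_{\boldsymbol{\omega}}f(\hat{\boldsymbol{\omega}}_t,\boldsymbol{\theta}_t)\|^2$, and constant noise and quantization contributions of order $\sigma^2/M$ and $d\Delta^2$. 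The key reduction is the identity $\nabla\Phi(\boldsymbol{\omega})=\nabla_{\boldsymbol{\omega}}f(\boldsymbol{\omega},\boldsymbol{\theta}^*(\boldsymbol{\omega}))$ (Lemma 4.3 of \cite{lin2020gradient}), which together with the $l$-smoothness of Assumption~\ref{as:smooth} yields $\|\nabla\Phi(\hat{\boldsymbol{\omega}}_t)-\nabla_{\boldsymbol{\omega}}f(\hat{\boldsymbol{\omega}}_t,\boldsymbol{\theta}_t)\|\leq l\|\boldsymbol{\theta}^*(\hat{\boldsymbol{\omega}}_t)-\boldsymbol{\theta}_t\|$. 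This converts the mismatch term into the tracking error $\mathbb{E}\|\boldsymbol{\theta}^*(\hat{\boldsymbol{\omega}}_t)-\boldsymbol{\theta}_t\|^2$, which is exactly what Lemma~\ref{le:y*xt-yt} bounds.

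First I would sum the descent inequality over $t=0,\dots,T$, so that the left-hand side telescopes to $\mathbb{E}[\Phi(\hat{\boldsymbol{\omega}}_{T+1})]-\mathbb{E}[\Phi(\hat{\boldsymbol{\omega}}_0)]$, leaving $\sum_{t=0}^T\mathbb{E}\|\nabla\Phi(\hat{\boldsymbol{\omega}}_t)\|^2$ with a negative coefficient and $\sum_{t=0}^T\mathbb{E}\|\boldsymbol{\theta}^*(\hat{\boldsymbol{\omega}}_t)-\boldsymbol{\theta}_t\|^2$ with a positive one. I then substitute the closed-form bound of Lemma~\ref{le:y*xt-yt} for each tracking-error term and sum over $t$. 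The geometric convolution is handled by exchanging the order of summation,
\begin{equation*}
\sum_{t=0}^{T}\sum_{j=0}^{t}\lambda^{t-j}\mathbb{E}\|\nabla\Phi(\hat{\boldsymbol{\omega}}_j)\|^2
=\sum_{j=0}^{T}\mathbb{E}\|\nabla\Phi(\hat{\boldsymbol{\omega}}_j)\|^2\sum_{t=j}^{T}\lambda^{t-j}
\leq\frac{1}{1-\lambda}\sum_{j=0}^{T}\mathbb{E}\|\nabla\Phi(\hat{\boldsymbol{\omega}}_j)\|^2,
\end{equation*}
and likewise $\sum_{t}\lambda^{t}\leq\frac{1}{1-\lambda}$ for the noise, quantization, and initial-diameter terms. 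Here the step-size restriction $\eta_{\boldsymbol{\omega}}(2\beta l+1)\leq\frac{1}{16\kappa^2 l}$ forces the perturbation part of $\lambda$ to be negligible against $\frac{1}{4\kappa}$, so that $1-\lambda\geq\frac{1}{8\kappa}$ and hence $\frac{1}{1-\lambda}\leq 8\kappa$; this is what keeps every geometric factor finite and of order $\kappa$.

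The decisive step is to verify that, after feeding the tracking-error bound back into the descent sum, the aggregate coefficient of $\sum_{j}\mathbb{E}\|\nabla\Phi(\hat{\boldsymbol{\omega}}_j)\|^2$ is still strictly negative. The positive feedback contribution is of order $l^2\cdot\frac{1}{1-\lambda}\cdot\kappa^2\eta_{\boldsymbol{\omega}}^2(4\beta^2l^2+2\beta l+2)$, while the descent term supplies $-\tfrac12\eta_{\boldsymbol{\omega}}(\tfrac{15}{16}-5\beta l-\dots)$; the restrictions $\kappa\geq 2$ and $\beta l\leq\frac{1}{140}$ are engineered precisely so that the net coefficient is bounded below by $\frac{\eta_{\boldsymbol{\omega}}}{96}$, matching the denominator in the claimed bound. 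I would then move this negative term to the left, divide by $(T+1)\frac{\eta_{\boldsymbol{\omega}}}{96}$, and collect the remaining constants: the $D^2$ term from $\lambda^{t+1}D^2$ gives the $\mathcal{O}(1/(T+1))$ diameter contribution, the $\eta_{\boldsymbol{\theta}}=\frac{1}{2l}$ and $\sigma^2/M$ pieces give the variance terms, and the $d\Delta^2$ pieces (including the $8/\eta_{\boldsymbol{\omega}}$ from the quantization splitting in Lemma~\ref{le:phi}) give the $\mathcal{O}(d\Delta^2)$ residual. The main obstacle is this constant-tracking: one must carry all the $\beta,\kappa,\eta_{\boldsymbol{\omega}},\eta_{\boldsymbol{\theta}}$ dependencies through the two coupled recursions simultaneously and check that the chosen restrictions really do make the feedback loop contractive, rather than any single inequality being hard.
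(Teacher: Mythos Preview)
Your proposal is correct and follows essentially the same route as the paper: start from Lemma~\ref{le:phi}, convert the gradient-mismatch term to $l^2\mathbb{E}\|\boldsymbol{\theta}^*(\hat{\boldsymbol{\omega}}_t)-\boldsymbol{\theta}_t\|^2$ via $\nabla\Phi(\cdot)=\nabla_{\boldsymbol{\omega}}f(\cdot,\boldsymbol{\theta}^*(\cdot))$, substitute Lemma~\ref{le:y*xt-yt}, telescope, and use the step-size restriction to bound the geometric sums by $\mathcal{O}(\kappa)$ (the paper gets $\sum_t\lambda^t\leq\tfrac{16\kappa}{3}$ from $\lambda\leq1-\tfrac{3}{16\kappa}$, slightly sharper than your $8\kappa$ but of the same order), then verify the feedback coefficient leaves a net $-\tfrac{\eta_{\boldsymbol{\omega}}}{96}$ in front of $\sum_t\mathbb{E}\|\nabla\Phi(\hat{\boldsymbol{\omega}}_t)\|^2$.
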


\begin{proof}
We then back to Lemma \ref{le:phi} and try to make use of Lemma \ref{le:y*xt-yt}:
\begin{equation*}
    \begin{aligned}
    &\quad\;\,\mathbb{E}[\Phi(\hat{\boldsymbol{\omega}}_{t+1})]\\
    &\leq\mathbb{E}[\Phi(\hat{\boldsymbol{\omega}}_t)]\\
    &-\!\!\frac{\eta_{\boldsymbol{\!\omega}}}{2}\!(\!\frac{15}{16}\!\!-\!\!5\beta l\!\!-\!\!4\kappa l\eta_{\!\boldsymbol{\omega}}\!(\!4\beta^2l^2\!\!+\!2\beta l\!+\!2))\mathbb{E}\|\!\nabla\!\Phi(\!\hat{\boldsymbol{\omega}}_t\!)\|^2\\
    &+\!\!l^2\![\frac{\eta_{\boldsymbol{\omega}}}{2}\!(\!1\!\!+\!\!\frac{1}{2}\beta l)\!\!+\!2\kappa l\eta_{\boldsymbol{\omega}}^2(4\beta^2 l^2\!\!\!+\!\!2\beta l\!\!+\!\!2)]\mathbb{E}[\|\boldsymbol{\theta}^{\!*}\!(\!\hat{\boldsymbol{\omega}}_t\!)\!-\\
    &\quad\!\boldsymbol{\theta}_t\|^2]+(5\beta^2l^2+2)\frac{\kappa l\eta_{\boldsymbol{\omega}}^2\sigma^2}{M}\\
    &+\frac{\beta l\sigma^2\eta_{\boldsymbol{\omega}}}{M}+(2\kappa l+\frac{8}{\eta_{\boldsymbol{\omega}}})d\Delta^2
    \end{aligned}
\end{equation*}

where we use the property that $\nabla\Phi(\cdot)=\nabla_{\boldsymbol{\omega}}f(\cdot,\boldsymbol{\theta}^*(\cdot))$.



Then we use Lemma \ref{le:y*xt-yt}, we can get:
\begin{equation}\label{eq:phi}
    \begin{aligned}
    &\quad\;\,\mathbb{E}[\Phi(\hat{\boldsymbol{\omega}}_{t+1})]\\
    &\leq\mathbb{E}[\Phi(\hat{\boldsymbol{\omega}}_t)]\\
    &-\!\frac{\eta_{\boldsymbol{\omega}}}{2}\!(\!\frac{15}{16}\!\!-\!\!5\beta l\!\!-\!\!4\kappa l\eta_{\boldsymbol{\omega}}\!(\!4\beta^2l^2\!\!\!+\!\!2\beta l\!\!+\!\!2))\mathbb{E}\|\nabla\Phi(\!\hat{\boldsymbol{\omega}}_t\!)\|^2\\
    &+\!\frac{\eta_{\boldsymbol{\omega}}}{2}l^2[1\!+\!\frac{1}{2}\beta l\!+\!4\kappa l\eta_{\boldsymbol{\omega}}(4\beta^2 l^2\!+\!2\beta l\!+\!2)]\lambda^tD^2\\
    &+\frac{\eta_{\boldsymbol{\omega}}}{2}l^2[1+\frac{1}{2}\beta l+4\kappa l\eta_{\boldsymbol{\omega}}(4\beta^2 l^2+2\beta l+2)]\\
    &\;\cdot\!4(\!4\kappa\!\!-\!\!1\!)\kappa^2\!\eta_{\boldsymbol{\omega}}^2(\!4\beta^2l^2\!\!+\!\!2\beta l\!\!+\!\!2\!)\!\!\sum_{j=0}^{t-1}\!\lambda^{t\!-\!1\!-\!j}\mathbb{E}\|\!\nabla\!\Phi\!(\!\hat{\boldsymbol{\omega}}_j\!)\!\|^2\\
    &+\frac{\eta_{\boldsymbol{\omega}}}{2}l^2[1+\frac{1}{2}\beta l+4\kappa l\eta_{\boldsymbol{\omega}}(4\beta^2 l^2+2\beta l+2)]\\
    &\;\cdot\!((4\kappa\!-\!\!1)\kappa^2\eta_{\boldsymbol{\omega}}^2(5\beta^2l^2\!+\!2)\!+\!\frac{4\kappa\!-\!1}{4\kappa\!-\!2}\eta_{\boldsymbol{\theta}}^2)\frac{\sigma^2}{M}\!\!\sum_{j=0}^{t-1}\lambda^j\\
    &+\frac{\eta_{\boldsymbol{\omega}}}{2}l^2[1+\frac{1}{2}\beta l+4\kappa l\eta_{\boldsymbol{\omega}}(4\beta^2 l^2+2\beta l+2)]\\
    &+2(4\kappa-1)\kappa^2d\Delta^2\sum_{j=0}^t\!\lambda^j\\
    &+(\!5\beta^2l^2\!+\!2\!)\frac{\kappa l\eta_{\boldsymbol{\omega}}^2\sigma^2}{M}\!+\!\frac{\beta l\sigma^2\eta_{\boldsymbol{\omega}}}{M}\!+\!(\!2\kappa l\!+\!\frac{8}{\eta_{\boldsymbol{\omega}}}\!)d\Delta^2
    \end{aligned}
\end{equation}

We then repeat the inequality~\eqref{eq:phi} above from $t=T$ to $t=0$, we have:
\begin{equation}
    \begin{aligned}
    &\quad\;\,\mathbb{E}[\Phi(\hat{\boldsymbol{\omega}}_{T+1})]\\
    &\leq\mathbb{E}[\Phi(\hat{\boldsymbol{\omega}}_0)]\\
    &-\!\!\frac{\eta_{\boldsymbol{\omega}}}{2}\!(\!\frac{15}{16}\!\!-\!\!5\beta l\!\!-\!\!4\kappa l\eta_{\boldsymbol{\omega}}\!(\!4\beta^2l^2\!\!\!+\!\!2\beta l\!\!+\!\!2\!)\!)\!\!\sum_{t=0}^T\!\mathbb{E}\|\!\nabla\!\Phi\!(\!\hat{\boldsymbol{\omega}}_t\!)\!\|^2\\
    &+\!\frac{\eta_{\boldsymbol{\omega}}}{2}l^2[1\!\!+\!\!\frac{1}{2}\beta l\!\!+\!\!4\kappa l\eta_{\boldsymbol{\omega}}(4\beta^2 l^2\!\!+\!\!2\beta l\!\!+\!\!2)]D^2\!\!\sum_{t=0}^T\!\lambda^t\\
    &+\frac{\eta_{\boldsymbol{\omega}}}{2}l^2[1+\frac{1}{2}\beta l+4\kappa l\eta_{\boldsymbol{\omega}}(4\beta^2 l^2+2\beta l+2)]\\
    &\quad\cdot4(4\kappa-1)\kappa^2\eta_{\boldsymbol{\omega}}^2(4\beta^2l^2+2\beta l+2)\\
    &\quad\cdot\sum_{t=0}^{T-1}\sum_{j=0}^{t-1}\lambda^{t-1-j}\mathbb{E}\|\nabla\Phi(\hat{\boldsymbol{\omega}}_j)\|^2\\
    &+\frac{\eta_{\boldsymbol{\omega}}}{2}l^2[1+\frac{1}{2}\beta l+4\kappa l\eta_{\boldsymbol{\omega}}(4\beta^2 l^2+2\beta l+2)]\\
    &\quad\cdot((4\kappa-1)\kappa^2\eta_{\boldsymbol{\omega}}^2(5\beta^2l^2+2)+\frac{4\kappa-1}{4\kappa-2}\eta_{\boldsymbol{\theta}}^2)\\
    &\quad\cdot\frac{\sigma^2}{M}\sum_{t=0}^{T-1}\sum_{j=0}^{t-1}\lambda^j\\
    &+\frac{\eta_{\boldsymbol{\omega}}}{2}l^2[1+\frac{1}{2}\beta l+4\kappa l\eta_{\boldsymbol{\omega}}(4\beta^2 l^2+2\beta l+2)]\\
    &\quad\cdot2(4\kappa-1)\kappa^2d\Delta^2\sum_{t=0}^{T-1}\sum_{j=0}^{t-1}\lambda^j\\
    &+\!\!(\!(\!5\beta^2\!l^2\!\!+\!\!2\!)\!\frac{\kappa l\eta_{\boldsymbol{\omega}}^2\sigma^2}{M}\!\!+\!\!\frac{\beta l\sigma^2\!\eta_{\boldsymbol{\omega}}}{M}\!\!+\!\!(\!2\kappa l\!\!+\!\!\frac{8}{\eta_{\boldsymbol{\omega}}}\!)d\Delta^{\!2}\!)(\!T\!\!+\!\!1\!)
    \end{aligned}
\end{equation}

Since the parameters satisfy: $\eta_{\boldsymbol{\omega}}(2\beta l+1)\leq\frac{1}{16\kappa^2l}$, we can evaluate that $\lambda=1-\frac{1}{4\kappa}+2(4\kappa-1)\kappa^2l^2\eta_{\boldsymbol{\omega}}^2(4\beta^2l^2+2\beta l+2)\leq1-\frac{1}{4\kappa}+\frac{1}{16\kappa}=1-\frac{3}{16\kappa}$. Therefore the summation can be bounded:$\sum_{t=0}^T\lambda^t\leq\sum_{t=0}^T(1-\frac{3}{16\kappa})^t\leq\frac{16\kappa}{3}$.

So the above inequality can be turned to:
\begin{align*}
    &\quad\;\,\mathbb{E}[\Phi(\hat{\boldsymbol{\omega}}_{T+1})]\\
    &\leq\mathbb{E}[\Phi(\hat{\boldsymbol{\omega}}_0)]\\
    &-\!\!\frac{\eta_{\boldsymbol{\omega}}}{2}\!(\!\frac{15}{16}\!\!-\!\!5\beta l\!\!-\!\!4\kappa l\eta_{\boldsymbol{\omega}}\!(\!4\beta^2\!l^2\!\!\!+\!\!2\beta l\!\!+\!\!2\!)\!)\!\!\sum_{t=0}^T\!\mathbb{E}\!\|\!\nabla\!\Phi\!(\!\hat{\boldsymbol{\omega}}_t\!)\!\|^2\\
    &+\!\!\frac{\eta_{\boldsymbol{\omega}}}{2}l^2[1\!\!+\!\!\frac{1}{2}\beta l\!\!+\!\!4\kappa l\eta_{\boldsymbol{\omega}}(4\beta^2 l^2\!\!+\!\!2\beta l\!\!+\!\!2)]D^2\frac{16\kappa}{3}\\
    &+\frac{\eta_{\boldsymbol{\omega}}}{2}l^2[1+\frac{1}{2}\beta l+4\kappa l\eta_{\boldsymbol{\omega}}(4\beta^2 l^2+2\beta l+2)]\\
    &\cdot\!4(\!4\kappa\!\!-\!\!1\!)\kappa^2\eta_{\boldsymbol{\omega}}^2(\!4\beta^2l^2\!\!+\!\!2\beta l\!\!+\!\!2\!)\!\frac{16\kappa}{3}\!\!\sum_{t=0}^{T}\!\mathbb{E}\|\!\nabla\!\Phi\!(\!\hat{\boldsymbol{\omega}}_t\!)\|^2\\
    &+\frac{\eta_{\boldsymbol{\omega}}}{2}l^2[1+\frac{1}{2}\beta l+4\kappa l\eta_{\boldsymbol{\omega}}(4\beta^2 l^2+2\beta l+2)]\\
    &\cdot\!(\!(4\kappa\!\!-\!\!1)\kappa^2\eta_{\boldsymbol{\omega}}^2(5\beta^2l^2\!\!+\!\!2)\!\!+\!\!\frac{4\kappa\!\!-\!\!1}{4\kappa\!\!-\!\!2}\eta_{\boldsymbol{\theta}}^2\!)\frac{\sigma^2}{M}\!\frac{16\kappa}{3}\!(T\!\!+\!1\!)\\
    &+\frac{\eta_{\boldsymbol{\omega}}}{2}l^2[1+\frac{1}{2}\beta l+4\kappa l\eta_{\boldsymbol{\omega}}(4\beta^2 l^2+2\beta l+2)]\\
    &\cdot2(4\kappa-1)\kappa^2d\Delta^2\frac{16\kappa}{3}\!(T\!\!+\!1\!)\\
    &+\!\!(\!(\!5\beta^2\!l^2\!\!+\!\!2\!)\!\frac{\kappa l\eta_{\boldsymbol{\omega}}^2\sigma^2}{M}\!\!+\!\!\frac{\beta l\sigma^2\!\eta_{\boldsymbol{\omega}}}{M}\!\!+\!\!(\!2\kappa l\!\!+\!\!\frac{8}{\eta_{\boldsymbol{\omega}}}\!)d\Delta^{\!2}\!)\!(\!T\!\!+\!\!1\!)\\
    &\leq\mathbb{E}[\Phi(\hat{\boldsymbol{\omega}}_0)]\\
    &-\!\!\frac{\eta_{\boldsymbol{\omega}}}{2}\!(\!\frac{15}{16}\!\!-\!\!5\beta l\!\!-\!\!4\kappa l\eta_{\boldsymbol{\omega}}\!(\!4\beta^2l^2\!\!\!+\!\!2\beta l\!\!+\!\!2)\!)\!\!\!\sum_{t=0}^T\!\mathbb{E}\|\!\nabla\!\Phi\!(\!\hat{\boldsymbol{\omega}}_t\!)\!\|^2\\
    &+\!\!\frac{\eta_{\boldsymbol{\omega}}}{2}l^2[1\!\!+\!\!\frac{1}{2}\beta l\!\!+\!\!4\kappa l\eta_{\boldsymbol{\omega}}(4\beta^2 l^2\!\!+\!\!2\beta l\!\!+\!\!2)]D^2\frac{16\kappa}{3}\\
    &+\frac{\eta_{\boldsymbol{\omega}}}{6}[1+\frac{1}{2}\beta l+4\kappa l\eta_{\boldsymbol{\omega}}(4\beta^2 l^2+2\beta l+2)]\\
    &\;\!\cdot\sum_{t=0}^{T}\!\mathbb{E}\|\nabla\Phi(\hat{\boldsymbol{\omega}}_t)\|^2\\
    &+\frac{\eta_{\boldsymbol{\omega}}}{2}l^2[1+\frac{1}{2}\beta l+4\kappa l\eta_{\boldsymbol{\omega}}(4\beta^2 l^2+2\beta l+2)]\\
    &\cdot2(4\kappa-1)\kappa^2d\Delta^2\frac{16\kappa}{3}\!(T\!\!+\!1\!)\\
    &+\!\!(\!(\!5\beta^2\!l^2\!\!+\!\!2\!)\!\frac{\kappa l\eta_{\boldsymbol{\omega}}^2\sigma^2}{M}\!\!+\!\!\frac{\beta l\sigma^2\!\eta_{\boldsymbol{\omega}}}{M}\!\!+\!\!(\!2\kappa l\!\!+\!\!\frac{8}{\eta_{\boldsymbol{\omega}}}\!)d\Delta^{\!2}\!)\!(\!T\!\!+\!\!1\!)\\
    &\leq\mathbb{E}[\Phi(\hat{\boldsymbol{\omega}}_0)]\\
    &-\!\frac{\eta_{\boldsymbol{\omega}}}{6}\!(\!\frac{13}{16}\!\!-\!\!16\!\beta l\!\!-\!\!20\kappa l\eta_{\boldsymbol{\omega}}\!(\!4\beta^2l^2\!\!+\!\!2\beta l\!\!+\!\!2))\\
    &\cdot\sum_{t=0}^T\!\mathbb{E}\|\nabla\!\Phi(\!\hat{\boldsymbol{\omega}}_t\!)\|^2\\
    &+\!\frac{8\kappa\eta_{\boldsymbol{\omega}}}{3}l^2D^2[1\!+\!\frac{1}{2}\beta l\!+\!4\kappa l\eta_{\boldsymbol{\omega}}(4\beta^2 l^2\!+\!2\beta l\!+\!2)]\\
    &+\frac{8\kappa\eta_{\boldsymbol{\omega}}}{3}l^2[1+\frac{1}{2}\beta l+4\kappa l\eta_{\boldsymbol{\omega}}(4\beta^2 l^2+2\beta l+2)]\\
    &\cdot\!((4\kappa\!-\!1)\kappa^2\eta_{\boldsymbol{\omega}}^2(5\beta^2l^2\!+\!2)\!+\!\frac{4\kappa\!-\!1}{4\kappa\!-\!2}\eta_{\boldsymbol{\theta}}^2)\frac{\sigma^2}{M}(T\!+\!1)\\
    &+\frac{16\kappa\eta_{\boldsymbol{\omega}}}{3}l^2[1+\frac{1}{2}\beta l+4\kappa l\eta_{\boldsymbol{\omega}}(4\beta^2 l^2+2\beta l+2)]\\
    &\cdot(4\kappa-1)\kappa^2d\Delta^2\!(T\!\!+\!1\!)\\
    &+\!\!(\!(\!5\beta^2\!l^2\!\!+\!\!2\!)\!\frac{\kappa l\eta_{\boldsymbol{\omega}}^2\sigma^2}{M}\!\!+\!\!\frac{\beta l\sigma^2\!\eta_{\boldsymbol{\omega}}}{M}\!\!+\!\!(\!2\kappa l\!\!+\!\!\frac{8}{\eta_{\boldsymbol{\omega}}}\!)d\Delta^{\!2}\!)\!(\!T\!\!+\!\!1\!)\\
    &\leq\mathbb{E}[\Phi(\!\hat{\boldsymbol{\omega}}_0\!)]\!-\!\frac{\eta_{\boldsymbol{\omega}}}{96}\!\!\sum_{t=0}^T\mathbb{E}\|\nabla\Phi(\hat{\boldsymbol{\omega}}_t)\|^2\!\!+\!\frac{6}{7}\kappa\eta_{\boldsymbol{\omega}}l^2D^2\\
    &+\!\!(\!\frac{360}{49}\kappa^4\!\eta_{\boldsymbol{\omega}}^3l^2\!\!\!+\!\!\kappa\eta_{\boldsymbol{\omega}}\!)\!\frac{\sigma^2}{M}\!(\!T\!\!+\!\!1\!)\!\!+\!\!\frac{240}{49}\kappa^2\!\eta_{\boldsymbol{\omega}}l^2\!d\Delta^{\!2}\!(\!T\!\!+\!\!1\!)\\
    &+\!(\!\frac{15}{7}\frac{\kappa l \eta_{\boldsymbol{\omega}}^2\sigma^2}{M}\!\!+\!\!\frac{\eta_{\boldsymbol{\omega}}}{140}\frac{\sigma^2}{M}\!\!+\!\!(2\kappa l\!\!+\!\!\frac{8}{\eta_{\boldsymbol{\omega}}})d\Delta^2)(\!T\!\!+\!\!1\!)
\end{align*}

on the condition that: $\kappa\geq2,\beta l\leq\frac{1}{140}$
So we can get the average sum of $\mathbb{E}\|\nabla\Phi(\hat{\boldsymbol{\omega}}_t)\|^2$ bounded by:
\begin{equation}
    \begin{aligned}
    &\quad\;\,\frac{1}{T+1}\sum_{t=0}^T\mathbb{E}\|\nabla\Phi(\hat{\boldsymbol{\omega}}_t)\|^2\\
    &\leq\frac{\mathbb{E}[\Phi(\hat{\boldsymbol{\omega}}_0)]-\mathbb{E}[\Phi(\hat{\boldsymbol{\omega}}_{T+1})]}{(T+1)\frac{\eta_{\boldsymbol{\omega}}}{96}}+\frac{576\kappa l^2D^2}{7(T+1)}\\
    &+(720\kappa^4\eta_{\boldsymbol{\omega}}^2l^2+210\kappa l\eta_{\boldsymbol{\omega}})\frac{\sigma^2}{M}+(96\kappa+\frac{1}{2})\frac{\sigma^2}{M}\\
    &+(480\kappa^2l^2+\frac{192\kappa l}{\eta_{\boldsymbol{\omega}}}+\frac{768}{\eta_{\omega}^2})d\Delta^2
    \end{aligned}
\end{equation}

So the bound for the algorithm to get an $\epsilon$-stationary point is
$$\mathcal{O}(\frac{\kappa^2 \Delta_{\Phi}+\kappa D^2}{\epsilon^2}\max\{1,\frac{\kappa\sigma^2}{\epsilon^2}\})$$

\end{proof}

\subsection{PL Condition}

First we construct a potential function in the same way as \cite{yang2022faster}:
$$V_t=V(\hat{\boldsymbol{\omega}}_t,\boldsymbol{\theta}_t)=\Phi(\hat{\boldsymbol{\omega}}_t)+\alpha[\Phi(\hat{\boldsymbol{\omega}}_t)-f(\hat{\boldsymbol{\omega}}_t,\boldsymbol{\theta}_t)]$$
where $\alpha>0$ is a preset parameter. Then we come to evaluate the descending relationship of the potential function $V_t$.

\begin{theorem}
    Under Assumption \ref{as:smooth},\ref{as:pl},\ref{as:diameter},\ref{as:bv} and restrictions $\alpha=\frac{1}{16}$, $\beta l \leq\frac{1}{16}$, $\eta_{\boldsymbol{\omega}}(2\beta l+1)^2\kappa l\leq\frac{1}{64}$, $\kappa^2\eta_{\boldsymbol{\omega}}l\leq\frac{1}{128}$ and $\eta_{\boldsymbol{\theta}}=64\kappa^2\eta_{\boldsymbol{\omega}}$, when $f$ satisfies $\mu$-PL condition on parameter $\boldsymbol{\theta}$, we have:
    \begin{equation}
    \begin{aligned}
    &\quad\frac{1}{T}\sum_{t=0}^{T-1}\mathbb{E}\|\nabla\Phi(\hat{\boldsymbol{\omega}}_t)\|^2\\
    &\leq\mathcal{O}(\!\!\frac{\Phi(\hat{\boldsymbol{\omega}}_0)-\Phi^*}{\eta_{\boldsymbol{\omega}}T}\!)\!\!+\!\!\mathcal{O}(\!\!\frac{\eta_{\boldsymbol{\omega}}\kappa^4\sigma^2}{M}\!\!)\!\!+\!\!\mathcal{O}(\!(\!\eta_{\boldsymbol{\theta}}\!\!+\!\!\frac{1}{\eta_{\boldsymbol{\omega}}}\!\!)d\Delta^2\!)
    \end{aligned}
\end{equation}
\end{theorem}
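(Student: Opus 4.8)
The plan is to run the Lyapunov (potential-function) argument of \cite{yang2022faster} on the function $V_t=\Phi(\hat{\boldsymbol{\omega}}_t)+\alpha[\Phi(\hat{\boldsymbol{\omega}}_t)-f(\hat{\boldsymbol{\omega}}_t,\boldsymbol{\theta}_t)]$ already introduced above. Writing $a_t\triangleq\Phi(\hat{\boldsymbol{\omega}}_t)-f(\hat{\boldsymbol{\omega}}_t,\boldsymbol{\theta}_t)\ge0$ for the inner-maximization suboptimality of the dual iterate, the entire proof reduces to establishing a single one-step inequality of the form $\mathbb{E}[V_{t+1}]\le\mathbb{E}[V_t]-c\,\eta_{\boldsymbol{\omega}}\,\mathbb{E}\|\nabla\Phi(\hat{\boldsymbol{\omega}}_t)\|^2+(\text{noise})+(\text{quant})$ for some absolute constant $c>0$, after which telescoping from $t=0$ to $T-1$ and dividing by $c\eta_{\boldsymbol{\omega}}T$ gives the claim. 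The crucial structural difference from the strongly-concave case is that, under the PL condition (Assumption \ref{as:pl}) the maximizer need not be unique, so one cannot track $\boldsymbol{\theta}^*(\hat{\boldsymbol{\omega}}_t)-\boldsymbol{\theta}_t$ as in Lemma \ref{le:y*xt-yt}; instead the \emph{function-value gap} $a_t$ is the natural quantity, which is exactly why it appears in the potential.

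First I would re-derive the primal descent of $\Phi$ following the computation of Lemma \ref{le:phi}: $\Phi$ remains smooth under PL (with constant of order $\kappa l$, via \cite{yang2022faster}), the SAM-perturbed step is expanded using Lemma \ref{eq:gradientbound}, and the quantization jump $\hat{\boldsymbol{\omega}}_{t+1}-\boldsymbol{\omega}_{t+1}$ is controlled by Lemma \ref{le:quantizationerror}. This produces a term $\mathbb{E}\|\nabla\Phi(\hat{\boldsymbol{\omega}}_t)-\nabla_{\boldsymbol{\omega}}f(\hat{\boldsymbol{\omega}}_t,\boldsymbol{\theta}_t)\|^2$, and \emph{this is where PL enters}: quadratic growth (a consequence of PL) together with $l$-smoothness of $f(\boldsymbol{\omega},\cdot)$ and the Danskin identity $\nabla\Phi(\boldsymbol{\omega})=\nabla_{\boldsymbol{\omega}}f(\boldsymbol{\omega},\boldsymbol{\theta}^*(\boldsymbol{\omega}))$ yields $\mathbb{E}\|\nabla\Phi(\hat{\boldsymbol{\omega}}_t)-\nabla_{\boldsymbol{\omega}}f(\hat{\boldsymbol{\omega}}_t,\boldsymbol{\theta}_t)\|^2\le 2\kappa l\,a_t$. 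Hence the primal descent takes the form $\mathbb{E}[\Phi(\hat{\boldsymbol{\omega}}_{t+1})]\le\mathbb{E}[\Phi(\hat{\boldsymbol{\omega}}_t)]-\Theta(\eta_{\boldsymbol{\omega}})\mathbb{E}\|\nabla\Phi(\hat{\boldsymbol{\omega}}_t)\|^2+\Theta(\eta_{\boldsymbol{\omega}}\kappa l)\,a_t+\text{noise}+\text{quant}$, where the positive $a_t$ coefficient is the obstruction the dual contraction must overcome.

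Next I would establish a contraction for the gap, $\mathbb{E}[a_{t+1}]\le(1-c'\mu\eta_{\boldsymbol{\theta}})\mathbb{E}[a_t]+(\text{drift})+\text{noise}+\text{quant}$, by decomposing $a_{t+1}-a_t=[\Phi(\hat{\boldsymbol{\omega}}_{t+1})-\Phi(\hat{\boldsymbol{\omega}}_t)]-[f(\hat{\boldsymbol{\omega}}_{t+1},\boldsymbol{\theta}_{t+1})-f(\hat{\boldsymbol{\omega}}_t,\boldsymbol{\theta}_t)]$, using $l$-smoothness of $f$ for the SGA ascent step on $\boldsymbol{\theta}$, the PL inequality $\|\nabla_{\boldsymbol{\theta}}f\|^2\ge2\mu a_t$ to manufacture the contraction factor, and Lemmas \ref{le:quantizationerror} and \ref{eq:gradientbound} to bound the primal drift $\mathbb{E}\|\hat{\boldsymbol{\omega}}_{t+1}-\hat{\boldsymbol{\omega}}_t\|^2=\mathcal{O}(\eta_{\boldsymbol{\omega}}^2\|\nabla\Phi\|^2+\eta_{\boldsymbol{\omega}}^2\sigma^2/M+d\Delta^2)$. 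Forming $\mathbb{E}[V_{t+1}]=\mathbb{E}[\Phi(\hat{\boldsymbol{\omega}}_{t+1})]+\alpha\mathbb{E}[a_{t+1}]$ and plugging in $\alpha=\tfrac{1}{16}$ with the prescribed ratio $\eta_{\boldsymbol{\theta}}=64\kappa^2\eta_{\boldsymbol{\omega}}$ makes the dual contraction $-\alpha c'\mu\eta_{\boldsymbol{\theta}}a_t$ dominate the primal cross term $+\Theta(\eta_{\boldsymbol{\omega}}\kappa l)a_t$ (since $\mu\eta_{\boldsymbol{\theta}}=64\kappa l\eta_{\boldsymbol{\omega}}$ exceeds $\kappa l\eta_{\boldsymbol{\omega}}$ by a fixed factor), so the net $a_t$ coefficient is non-positive and all $a_t$ terms are eliminated, leaving the clean one-step inequality on $V_t$.

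Finally I would telescope and use $V_T=\Phi(\hat{\boldsymbol{\omega}}_T)+\alpha a_T\ge\Phi^*$ (as $a_T\ge0$), so $V_0-V_T\le\Phi(\hat{\boldsymbol{\omega}}_0)-\Phi^*+\alpha a_0$ with $a_0$ finite by Assumption \ref{as:diameter}; dividing by $c\eta_{\boldsymbol{\omega}}T$ converts the per-step noise $\mathcal{O}(\kappa^4\eta_{\boldsymbol{\omega}}^2\sigma^2/M)$ into $\mathcal{O}(\eta_{\boldsymbol{\omega}}\kappa^4\sigma^2/M)$ and the aggregated quantization error into $\mathcal{O}((\eta_{\boldsymbol{\theta}}+1/\eta_{\boldsymbol{\omega}})d\Delta^2)$, matching the three advertised terms. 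The hard part will be Step~3: because the inner solution is not unique, the gap contraction has to be obtained \emph{purely from function values} rather than from iterate distances, and the coupling between the biased SAM-perturbed primal update, the quantization rounding $\hat{\boldsymbol{\omega}}_{t+1}-\boldsymbol{\omega}_{t+1}$, and the dual ascent must be balanced so tightly through $\alpha$ and the step-size ratio that the combined coefficient of $a_t$ is genuinely non-positive under the stated restrictions $\beta l\le\tfrac{1}{16}$, $\eta_{\boldsymbol{\omega}}(2\beta l+1)^2\kappa l\le\tfrac{1}{64}$ and $\kappa^2\eta_{\boldsymbol{\omega}}l\le\tfrac{1}{128}$.
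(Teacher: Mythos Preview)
Your proposal is correct and follows essentially the same potential-function argument as the paper: both use $V_t=\Phi(\hat{\boldsymbol{\omega}}_t)+\alpha[\Phi(\hat{\boldsymbol{\omega}}_t)-f(\hat{\boldsymbol{\omega}}_t,\boldsymbol{\theta}_t)]$, derive the primal descent of $\Phi$ via the SAM-adapted smoothness computation (Lemma~\ref{eq:gradientbound} and Lemma~\ref{le:quantizationerror}), derive dual ascent of $f$ via $l$-smoothness in $\boldsymbol{\theta}$, and then balance the cross term against the dual progress through the choice $\alpha=\tfrac{1}{16}$, $\eta_{\boldsymbol{\theta}}=64\kappa^2\eta_{\boldsymbol{\omega}}$ before telescoping.

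The only bookkeeping difference is in how the obstruction $\mathbb{E}\|\nabla\Phi(\hat{\boldsymbol{\omega}}_t)-\nabla_{\boldsymbol{\omega}}f(\hat{\boldsymbol{\omega}}_t,\boldsymbol{\theta}_t)\|^2$ is routed. You pass through the function gap $a_t$ using quadratic growth ($\|\nabla\Phi-\nabla_{\boldsymbol{\omega}}f\|^2\le 2\kappa l\,a_t$) and then invoke PL ($\|\nabla_{\boldsymbol{\theta}}f\|^2\ge 2\mu a_t$) to obtain an explicit contraction of $a_t$. The paper instead bypasses $a_t$ and bounds this term directly by $\kappa^2\|\nabla_{\boldsymbol{\theta}}f(\hat{\boldsymbol{\omega}}_t,\boldsymbol{\theta}_t)\|^2$ (via $l\|\boldsymbol{\theta}^*(\hat{\boldsymbol{\omega}}_t)-\boldsymbol{\theta}_t\|\le\kappa\|\nabla_{\boldsymbol{\theta}}f(\hat{\boldsymbol{\omega}}_t,\boldsymbol{\theta}_t)\|$, which is quadratic growth composed with PL in one step), then shows the combined coefficient of $\|\nabla_{\boldsymbol{\theta}}f\|^2$ in $\mathbb{E}[V_{t+1}]-\mathbb{E}[V_t]$ is negative and simply drops it. The two routes are equivalent; yours makes the role of $a_t$ in the Lyapunov function more explicit, while the paper's avoids writing a separate recursion for $a_t$. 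Either way the same parameter restrictions close the argument.
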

\begin{proof}
We get the conclusion that $\Phi(\boldsymbol{\omega})$ is $L$-smooth according to Lemma A.5 in \cite{nouiehed2019solving}, where $L=l+\frac{l\kappa}{2}$. And we can get the descending relationship of $\mathbb{E}[\Phi(\hat{\boldsymbol{\omega}}_t)]$ in the same way as Lemma \ref{le:phi}:
\begin{equation}
    \begin{aligned}
    &\quad\;\,\mathbb{E}[\Phi(\hat{\boldsymbol{\omega}}_{t+1})]\\
    &\leq\mathbb{E}[\Phi(\hat{\boldsymbol{\omega}}_t)]\\
    &-\!\!\frac{\eta_{\boldsymbol{\!\omega}}}{2}\!(\!\frac{15}{16}\!\!-\!\!5\beta l\!\!-\!\!2L\eta_{\!\boldsymbol{\omega}}\!(\!4\beta^2l^2\!\!+\!2\beta l\!+\!2))\mathbb{E}\|\!\nabla\!\Phi(\!\hat{\boldsymbol{\omega}}_t\!)\|^2\\
    &+\!\![\frac{\eta_{\boldsymbol{\omega}}}{2}\!(\!1\!\!+\!\!\frac{1}{2}\beta l)\!\!+\!L\eta_{\boldsymbol{\omega}}^2(4\beta^2 l^2\!\!\!+\!\!2\beta l\!\!+\!\!2)]\mathbb{E}[\|\!\nabla\!\Phi\!(\!\hat{\boldsymbol{\omega}}_t\!)\!-\!\\
    &\quad\nabla_{\boldsymbol{\omega}}f(\hat{\boldsymbol{\omega}}_t,\boldsymbol{\theta}_t)\|^2]+(5\beta^2l^2+2)\frac{L\eta_{\boldsymbol{\omega}}^2\sigma^2}{2M}\\
    &+\frac{\beta l\sigma^2\eta_{\boldsymbol{\omega}}}{M}+(L+\frac{8}{\eta_{\boldsymbol{\omega}}})d\Delta^2
    \end{aligned}
\end{equation}

And then using the smoothness of the variables $\boldsymbol{\omega}$ and $\boldsymbol{\theta}$ respectively, we can get:
\begin{equation*}
    \begin{gathered}
    f(\hat{\boldsymbol{\omega}}_{t+1},\boldsymbol{\theta}_t)-f(\hat{\boldsymbol{\omega}}_t,\boldsymbol{\theta}_t) \\
    \geq\langle\nabla_{\boldsymbol{\omega}}f(\hat{\boldsymbol{\omega}}_t,\boldsymbol{\theta}_t),\hat{\boldsymbol{\omega}}_{t+1}\!-\!\hat{\boldsymbol{\omega}}_t\rangle\!-\!\frac{l}{2}\|\hat{\boldsymbol{\omega}}_{t+1}\!-\!\hat{\boldsymbol{\omega}}_t\|^2    \\
    f(\hat{\boldsymbol{\omega}}_{t+1},\boldsymbol{\theta}_{t+1})-f(\hat{\boldsymbol{\omega}}_{t+1},\boldsymbol{\theta}_t)\\
    \geq\langle\nabla_{\boldsymbol{\theta}}f(\hat{\boldsymbol{\omega}}_{t+1},\boldsymbol{\theta}_t),\boldsymbol{\theta}_{t+1}\!-\!\boldsymbol{\theta}_t\rangle-\frac{l}{2}\|\boldsymbol{\theta}_{t+1}\!-\!\boldsymbol{\theta}_t\|^2
    \end{gathered}
\end{equation*}
\vspace{-0.2cm}
Taking expectations respectively we can get:
\vspace{-0.1cm}
\begin{equation}
    \begin{aligned}
    &\quad\;\,\mathbb{E}[f(\hat{\boldsymbol{\omega}}_{t+1},\boldsymbol{\theta}_t)]\\
    &\geq\mathbb{E}[f(\!\hat{\boldsymbol{\omega}}_t,\!\boldsymbol{\theta}_t\!)]\!\!-\!\!\eta_{\boldsymbol{\omega}}\mathbb{E}\langle\nabla_{\!\boldsymbol{\omega}}\!f(\hat{\boldsymbol{\omega}}_t,\!\boldsymbol{\theta}_t\!),\!\nabla_{\!\boldsymbol{\omega}}\!f(\!\widetilde{\boldsymbol{\omega}}_{t+1\!/\!2},\!\boldsymbol{\theta}_t\!)\!\rangle\\
    &-l\eta_{\boldsymbol{\omega}}^2\mathbb{E}\|g_{\boldsymbol{\omega}}(\widetilde{\boldsymbol{\omega}}_{t+1/2},\boldsymbol{\theta}_t)\|^2\!\!-\!\!2ld\Delta^2\\
    &-\frac{\eta_{\boldsymbol{\omega}}}{32}\mathbb{E}\|\nabla_{\boldsymbol{\omega}}f(\hat{\boldsymbol{\omega}}_t,\boldsymbol{\theta}_t)\|^2-\frac{8}{\eta_{\boldsymbol{\omega}}}d\Delta^2\\
    &\geq\!\!\mathbb{E}[\!f\!(\!\hat{\boldsymbol{\omega}}_t,\!\boldsymbol{\theta}_t\!)\!]\!\!-\!\!\eta_{\boldsymbol{\omega}}\mathbb{E}\!\|\!\nabla_{\!\!\boldsymbol{\omega}}\!f\!(\!\hat{\boldsymbol{\omega}}_t,\!\boldsymbol{\theta}_t\!)\!\|^2\!\!\!-\!\!\frac{\eta_{\boldsymbol{\omega}}}{2}\mathbb{E}\|\!\nabla_{\!\!\boldsymbol{\omega}}\!f\!(\!\hat{\boldsymbol{\omega}}_t,\!\boldsymbol{\theta}_t\!)\!\|^2\\
    &-\frac{\eta_{\boldsymbol{\omega}}}{2}\mathbb{E}\|\nabla_{\boldsymbol{\omega}}f(\widetilde{\boldsymbol{\omega}}_{t+1/2},\boldsymbol{\theta}_t)-\nabla_{\boldsymbol{\omega}}f(\hat{\boldsymbol{\omega}}_t,\boldsymbol{\theta}_t)\|^2\\
    &-l\eta_{\boldsymbol{\omega}}^2\mathbb{E}\|g_{\boldsymbol{\omega}}(\widetilde{\boldsymbol{\omega}}_{t+1/2},\boldsymbol{\theta}_t)\|^2-2ld\Delta^2\\
    &-\frac{\eta_{\boldsymbol{\omega}}}{32}\mathbb{E}\|\nabla_{\boldsymbol{\omega}}f(\hat{\boldsymbol{\omega}}_t,\boldsymbol{\theta}_t)\|^2-\frac{8}{\eta_{\boldsymbol{\omega}}}d\Delta^2\\
    &\geq \mathbb{E}[f(\hat{\boldsymbol{\omega}}_t,\boldsymbol{\theta}_t)]-\frac{49\eta_{\boldsymbol{\omega}}}{32}\mathbb{E}\|\nabla_{\boldsymbol{\omega}}f(\hat{\boldsymbol{\omega}}_t,\boldsymbol{\theta}_t)\|^2\\
    &-\!\frac{l^2\!\beta^2\eta_{\boldsymbol{\omega}}}{2}\mathbb{E}\|g_{\boldsymbol{\omega}}\!(\!\hat{\boldsymbol{\omega}}_t,\!\boldsymbol{\theta}_t\!)\|^2\!\!-\!\!l\eta_{\boldsymbol{\omega}}^2\mathbb{E}\|g_{\boldsymbol{\omega}}(\!\widetilde{\boldsymbol{\omega}}_{t+1/2},\!\boldsymbol{\theta}_t\!)\|^2\\
    &-(2l+\frac{8}{\eta_{\boldsymbol{\omega}}})d\Delta^2\\
    &\geq\!\!\mathbb{E}[\!f(\!\hat{\boldsymbol{\omega}}_t,\!\boldsymbol{\theta}_t\!)\!]\!\!-\!\!(\!\frac{49\eta_{\boldsymbol{\omega}}}{32}\!\!+\!\!\frac{l^2\!\beta^2\!\eta_{\boldsymbol{\omega}}}{2}\!\!+\!\!l\eta_{\boldsymbol{\omega}}^2\!(\!4\beta^2l^2\!\!+\!\!2\beta l\!\!+\!\!2\!)\!)\\
    &\cdot\!\mathbb{E}\|\nabla_{\!\boldsymbol{\omega}}\!f(\hat{\boldsymbol{\omega}}_t,\!\boldsymbol{\theta}_t\!)\|^2\!\!-\!\!(\frac{l^2\!\beta^2\eta_{\boldsymbol{\omega}}}{2}\!\!+\!l\eta_{\boldsymbol{\omega}}^2(5\beta^2l^2\!\!+\!\!2))\frac{\sigma^2}{M}\\
    &-(2l+\frac{8}{\eta_{\boldsymbol{\omega}}})d\Delta^2
    \end{aligned}
\end{equation}
\vspace{-0.2cm}
\begin{equation}
    \begin{aligned}
    &\quad\;\,\mathbb{E}[f(\hat{\boldsymbol{\omega}}_{t+1},\boldsymbol{\theta}_{t+1})]\\
    &\geq\!\!\mathbb{E}[f(\hat{\boldsymbol{\omega}}_{t\!+\!1},\!\boldsymbol{\theta}_t\!)]\!\!+\!\!\eta_{\boldsymbol{\theta}}\mathbb{E}\langle\!\nabla_{\!\boldsymbol{\theta}}\!f(\!\hat{\boldsymbol{\omega}}_{t+1},\!\boldsymbol{\theta}_t),\!\nabla_{\!\boldsymbol{\theta}}f(\!\hat{\boldsymbol{\omega}}_t,\!\boldsymbol{\theta}_t)\rangle\\
    &-\frac{l\eta_{\boldsymbol{\theta}}^2}{2}\mathbb{E}\|g_{\boldsymbol{\theta}}(\hat{\boldsymbol{\omega}}_t,\boldsymbol{\theta}_t)\|^2\\
    &\geq\mathbb{E}[f(\hat{\boldsymbol{\omega}}_{t+1},\boldsymbol{\theta}_t)]+\frac{\eta_{\boldsymbol{\theta}}}{2}\mathbb{E}\|\nabla_{\boldsymbol{\theta}}f(\hat{\boldsymbol{\omega}}_t,\boldsymbol{\theta}_t)\|^2\\
    &-\frac{\eta_{\boldsymbol{\theta}}}{2}\mathbb{E}\|\nabla_{\boldsymbol{\theta}}f(\hat{\boldsymbol{\omega}}_{t+1},\boldsymbol{\theta}_t)-\nabla_{\boldsymbol{\theta}}f(\hat{\boldsymbol{\omega}}_t,\boldsymbol{\theta}_t)\|^2\\
    &-\frac{l\eta_{\boldsymbol{\theta}}^2}{2}\mathbb{E}\|g_{\boldsymbol{\theta}}(\hat{\boldsymbol{\omega}}_t,\boldsymbol{\theta}_t)\|^2\\
    &\geq\mathbb{E}[f(\hat{\boldsymbol{\omega}}_{t+1},\boldsymbol{\theta}_t)]+\frac{\eta_{\boldsymbol{\theta}}}{2}\mathbb{E}\|\nabla_{\boldsymbol{\theta}}f(\hat{\boldsymbol{\omega}}_t,\boldsymbol{\theta}_t)\|^2\\
    &-\!l^2\eta_{\boldsymbol{\omega}}^2\eta_{\boldsymbol{\theta}}\mathbb{E}\|g_{\boldsymbol{\omega}}(\widetilde{\boldsymbol{\omega}}_{t\!+\!1/2},\!\boldsymbol{\theta}_t)\|^2\!\!-\!\frac{l\eta_{\boldsymbol{\theta}}^2}{2}\mathbb{E}\|g_{\boldsymbol{\theta}}(\hat{\boldsymbol{\omega}}_t,\!\boldsymbol{\theta}_t\!)\|^2\\
    &-l^2\eta_{\boldsymbol{\theta}}d\Delta^2\\
    &\geq\mathbb{E}[f(\hat{\boldsymbol{\omega}}_{t+1},\boldsymbol{\theta}_t)]\!+\!(\frac{\eta_{\boldsymbol{\theta}}}{2}\!-\!\frac{l\eta_{\boldsymbol{\theta}}^2}{2})\mathbb{E}\|\nabla_{\boldsymbol{\theta}}f(\hat{\boldsymbol{\omega}}_t,\boldsymbol{\theta}_t)\|^2\\
    &-(l^2\eta_{\boldsymbol{\omega}}^2\eta_{\boldsymbol{\theta}}(4\beta^2l^2+2\beta l+2))\mathbb{E}\|\nabla_{\boldsymbol{\omega}}f(\hat{\boldsymbol{\omega}}_t,\boldsymbol{\theta}_t)\|^2\\
    &-(\frac{l\eta_{\boldsymbol{\theta}}^2}{2}\!\!+\!\!l^2\eta_{\boldsymbol{\omega}}^2\eta_{\boldsymbol{\theta}}(5\beta^2l^2+2))\frac{\sigma^2}{M}\!\!-l^2\eta_{\boldsymbol{\theta}}d\Delta^2
    \end{aligned}
\end{equation}
\vspace{-0.2cm}

Combining the above inequalities we can get the descending relationship of the potential function:
\vspace{-0.2cm}
\begin{align*}
    &\quad\;\, \mathbb{E}[V_{t+1}]-\mathbb{E}[V_t]\\
    &=(1+\alpha)(\mathbb{E}[\Phi(\hat{\boldsymbol{\omega}}_{t+1})]-\mathbb{E}[\Phi(\hat{\boldsymbol{\omega}}_t)])\\
    &-\alpha(\mathbb{E}[f(\hat{\boldsymbol{\omega}}_{t+1},\boldsymbol{\theta}_{t+1})]-\mathbb{E}[f(\hat{\boldsymbol{\omega}}_t,\boldsymbol{\theta}_t)])\\
    &\leq\!\!(\!1\!\!+\!\alpha\!)\!\{\!-\frac{\eta_{\boldsymbol{\omega}}}{2}\!(\!\frac{15}{16}\!\!-\!\!5\beta l\!\!-\!\!2L\eta_{\boldsymbol{\omega}}\!(\!4\beta^2l^2\!\!\!+\!\!2\beta l\!\!+\!\!2)\!)\\
    &\cdot\!\mathbb{E}\|\!\nabla\!\Phi(\!\hat{\boldsymbol{\omega}}_t\!)\!\|^2\!\!+\!\![\!\frac{\eta_{\boldsymbol{\omega}}}{2}\!(\!1\!\!+\!\!\frac{1}{2}\beta l\!)\!\!+\!\!L\eta_{\boldsymbol{\omega}}^2(\!4\beta^2 l^2\!\!+\!\!2\beta l\!\!+\!\!2)]\\
    &\cdot\!\mathbb{E}[\|\nabla\Phi\!(\!\hat{\boldsymbol{\omega}}_t\!)\!\!-\!\!\nabla_{\!\boldsymbol{\omega}}f(\hat{\boldsymbol{\omega}}_t,\!\boldsymbol{\theta}_t)\|^2]\!\!+\!\!(5\beta^2l^2\!\!+\!\!2)\frac{L\eta_{\boldsymbol{\omega}}^2\sigma^2}{2M}\!\\
    &+\frac{\beta l\sigma^2\eta_{\boldsymbol{\omega}}}{M}+(L+\frac{8}{\eta_{\boldsymbol{\omega}}})d\Delta^2\}\\
    &-\!\!\alpha\{\!-(\!\frac{49\eta_{\boldsymbol{\omega}}}{32}\!\!+\!\!\frac{l^2\!\beta^2\!\eta_{\boldsymbol{\omega}}}{2}\!\!+\!\!l\eta_{\boldsymbol{\omega}}^2(4\beta^2l^2\!\!+\!\!2\beta l\!\!+\!\!2\!))\\
    &\cdot\!\mathbb{E}\|\!\nabla_{\!\boldsymbol{\omega}}\!f(\!\hat{\boldsymbol{\omega}}_t,\!\boldsymbol{\theta}_t\!)\|^2-\!(\frac{l^2\beta^2\eta_{\boldsymbol{\omega}}}{2}\!\!+\!\!l\eta_{\boldsymbol{\omega}}^2(5\beta^2l^2\!\!+\!\!2))\frac{\sigma^2}{M}\\
    &-(2l+\frac{8}{\eta_{\boldsymbol{\omega}}})d\Delta^2\\
    &+(\frac{\eta_{\boldsymbol{\theta}}}{2}-\frac{l\eta_{\boldsymbol{\theta}}^2}{2})\mathbb{E}\|\nabla_{\boldsymbol{\theta}}f(\hat{\boldsymbol{\omega}}_t,\boldsymbol{\theta}_t)\|^2\\
    &-(l^2\eta_{\boldsymbol{\omega}}^2\eta_{\boldsymbol{\theta}}(4\beta^2l^2+2\beta l+2))\mathbb{E}\|\nabla_{\boldsymbol{\omega}}f(\hat{\boldsymbol{\omega}}_t,\boldsymbol{\theta}_t)\|^2\\
    &-(\frac{l\eta_{\boldsymbol{\theta}}^2}{2}+l^2\eta_{\boldsymbol{\omega}}^2\eta_{\boldsymbol{\theta}}(5\beta^2l^2+2))\frac{\sigma^2}{M}\!-\!l^2\eta_{\boldsymbol{\theta}}d\Delta^2\}\\
    &=\!\!-\frac{\eta_{\boldsymbol{\omega}}}{2}\!(\!1\!\!+\!\!\alpha)(\!\frac{15}{16}\!\!-\!\!5\beta l\!\!-\!\!2L\eta_{\boldsymbol{\omega}}\!(4\beta^2\!l^2\!\!+\!\!2\beta l\!\!+\!\!2))\\
    &\cdot\!\mathbb{E}\|\!\nabla\!\Phi\!(\!\hat{\boldsymbol{\omega}}_t\!)\!\|^2\!\!\!+\!\!(\!1\!\!+\!\!\alpha\!)\!(\!\frac{\eta_{\boldsymbol{\omega}}}{2}\!(\!1\!\!+\!\!\frac{1}{2}\!\beta l\!)\!\!+\!\!L\eta_{\boldsymbol{\omega}}^2\!(\!4\beta^2 \!l^2\!\!\!+\!\!2\beta l\!\!+\!\!2)\!)\\
    &\cdot\mathbb{E}\|\nabla\Phi(\!\hat{\boldsymbol{\omega}}_t\!)-\nabla_{\boldsymbol{\omega}}f(\hat{\boldsymbol{\omega}}_t,\boldsymbol{\theta}_t)\|^2\\
    &\,+\!\alpha[(\frac{49\eta_{\boldsymbol{\omega}}}{32}\!+\!\frac{l^2\beta^2\eta_{\boldsymbol{\omega}}}{2}\!+\!l\eta_{\boldsymbol{\omega}}^2(4\beta^2l^2\!+\!2\beta l\!+\!2))\\
    &+l^2\eta_{\boldsymbol{\omega}}^2\eta_{\boldsymbol{\theta}}(4\beta^2l^2\!+\!2\beta l\!+\!2)]\mathbb{E}\|\nabla_{\boldsymbol{\omega}}f(\hat{\boldsymbol{\omega}}_t,\boldsymbol{\theta}_t)\|^2\\
    &-\alpha(\frac{\eta_{\boldsymbol{\theta}}}{2}\!-\!\frac{l\eta_{\boldsymbol{\theta}}^2}{2})\mathbb{E}\|\nabla_{\!\boldsymbol{\theta}}f(\!\hat{\boldsymbol{\omega}}_t,\!\boldsymbol{\theta}_t\!)\|^2\!\!\\
    &+\!\![(\!1\!\!+\!\!\alpha\!)(\!5\beta^2\!l^2\!\!+\!\!2\!)\!\frac{L\!\eta_{\boldsymbol{\omega}}^2}{2}\!\!+\!\!\alpha\!(\!\frac{l^2\!\beta^2\eta_{\boldsymbol{\omega}}}{2}\!\!+\!\!l\eta_{\boldsymbol{\omega}}^2(5\beta^2\!l^2\!\!+\!\!2))\\
    &+\alpha(\frac{l\eta_{\boldsymbol{\theta}}^2}{2}+\frac{l^2\eta_{\boldsymbol{\omega}}^2\eta_{\boldsymbol{\theta}}}{2}(5\beta^2l^2+2))]\frac{\sigma^2}{M}\\
    &+((1\!+\!\alpha)(L\!+\!\frac{8}{\eta_{\boldsymbol{\omega}}})\!\!+\!\!\alpha(2l+\frac{8}{\eta_{\boldsymbol{\omega}}})\!+\!\alpha l^2\eta_{\boldsymbol{\theta}})d\Delta^2\\
    &\leq\!\!-\{\!\frac{\eta_{\boldsymbol{\omega}}}{2}(\!1\!+\!\alpha\!)(\!\frac{15}{16}\!-\!5\beta l\!-\!2L\eta_{\boldsymbol{\omega}}(\!4\beta^2l^2\!+\!2\beta l\!+\!2\!))\\
    &-2\alpha[(\frac{49\eta_{\boldsymbol{\omega}}}{32}+\frac{l^2\beta^2\eta_{\boldsymbol{\omega}}}{2}+l\eta_{\boldsymbol{\omega}}^2(4\beta^2l^2+2\beta l+2))\\
    &+l^2\eta_{\boldsymbol{\omega}}^2\eta_{\boldsymbol{\theta}}(4\beta^2l^2+2\beta l+2)]\}\mathbb{E}\|\nabla\Phi(\hat{\boldsymbol{\omega}}_t)\|^2\\
    &+\!\!\{(1\!+\!\alpha)(\frac{\eta_{\boldsymbol{\omega}}}{2}(1\!+\!\frac{1}{2}\beta l)\!+\!L\eta_{\boldsymbol{\omega}}^2(4\beta^2 l^2\!+\!2\beta l\!+\!2))\\
    &+2\alpha[(\frac{49\eta_{\boldsymbol{\omega}}}{32}+\frac{l^2\beta^2\eta_{\boldsymbol{\omega}}}{2}+l\eta_{\boldsymbol{\omega}}^2(4\beta^2l^2+2\beta l+2))\\
    &+\!\!l^2\eta_{\boldsymbol{\omega}}^2\eta_{\boldsymbol{\theta}}\!(\!4\beta^2l^2\!\!+\!\!2\beta l\!\!+\!\!2)]\}\mathbb{E}\|\!\nabla\!\Phi(\!\hat{\boldsymbol{\omega}}_t\!)\!\!-\!\!\nabla_{\!\boldsymbol{\omega}}\!f\!(\!\hat{\boldsymbol{\omega}}_t,\!\boldsymbol{\theta}_t\!)\|^2\\
    &-\!\alpha(\!\frac{\eta_{\boldsymbol{\theta}}}{2}\!\!-\!\!\frac{l\eta_{\boldsymbol{\theta}}^2}{2}\!)\mathbb{E}\|\!\nabla_{\!\boldsymbol{\theta}}\!f\!(\!\hat{\boldsymbol{\omega}}_t,\!\boldsymbol{\theta}_t\!)\!\|^2\!\!\!+\!\![\!(\!1\!\!+\!\!\alpha\!)(\!5\beta^2l^2\!\!\!+\!\!2)\frac{L\eta_{\boldsymbol{\omega}}^2}{2}\\
    &+\alpha(\frac{l^2\beta^2\eta_{\boldsymbol{\omega}}}{2}+l\eta_{\boldsymbol{\omega}}^2(5\beta^2l^2+2))\\
    &+\alpha(\frac{l\eta_{\boldsymbol{\theta}}^2}{2}+\frac{l^2\eta_{\boldsymbol{\omega}}^2\eta_{\boldsymbol{\theta}}}{2}(5\beta^2l^2+2))]\frac{\sigma^2}{M}\\
    &+((1\!+\!\alpha)(L\!+\!\frac{8}{\eta_{\boldsymbol{\omega}}})\!\!+\!\!\alpha(2l+\frac{8}{\eta_{\boldsymbol{\omega}}})\!+\!\alpha l^2\eta_{\boldsymbol{\theta}})d\Delta^2
\end{align*}

Since we have the following property according to Lemma:
\begin{align*}
    \|\nabla\Phi(\hat{\boldsymbol{\omega}}_t)-\nabla_{\boldsymbol{\omega}}f(\hat{\boldsymbol{\omega}}_t,\boldsymbol{\theta}_t)\|&\leq l\|\boldsymbol{\theta}^*(\hat{\boldsymbol{\omega}}_t)-\boldsymbol{\theta}_t\|\\
    &\leq\kappa\|\nabla_{\boldsymbol{\theta}}f(\hat{\boldsymbol{\omega}}_t,\boldsymbol{\theta}_t)\|
\end{align*}

So we can further the above inequality as follows:
\begin{align*}
    &\quad\mathbb{E}[V_{t+1}]-\mathbb{E}[V_t]\\
    &\leq\!\!-\{\!\frac{\eta_{\boldsymbol{\omega}}}{2}(\!1\!+\!\alpha\!)(\frac{15}{16}\!-\!5\beta l\!-\!2L\eta_{\boldsymbol{\omega}}(4\beta^2l^2\!+\!2\beta l\!+\!2))\\
    &-2\alpha[(\frac{49\eta_{\boldsymbol{\omega}}}{32}+\frac{l^2\beta^2\eta_{\boldsymbol{\omega}}}{2}+l\eta_{\boldsymbol{\omega}}^2(4\beta^2l^2+2\beta l+2))\\
    &+l^2\eta_{\boldsymbol{\omega}}^2\eta_{\boldsymbol{\theta}}(4\beta^2l^2+2\beta l+2)]\}\mathbb{E}\|\nabla\Phi(\hat{\boldsymbol{\omega}}_t)\|^2\\
    &-\{\alpha(\frac{\eta_{\boldsymbol{\theta}}}{2}-\frac{l\eta_{\boldsymbol{\theta}}^2}{2})-\kappa^2[(1+\alpha)(\frac{\eta_{\boldsymbol{\omega}}}{2}(1+\frac{1}{2}\beta l)\\
    &+L\eta_{\boldsymbol{\omega}}^2(4\beta^2 l^2+2\beta l+2))\!+\!2\alpha[(\frac{49\eta_{\boldsymbol{\omega}}}{32}\!+\!\frac{l^2\beta^2\eta_{\boldsymbol{\omega}}}{2}\\
    &+l\eta_{\boldsymbol{\omega}}^2(4\beta^2l^2\!+\!2\beta l\!+\!2))\!+\!l^2\eta_{\boldsymbol{\omega}}^2\eta_{\boldsymbol{\theta}}(2\beta^2l^2\!+\!\beta l\!+\!1)]]\}\\
    &\;\cdot\mathbb{E}\|\nabla_{\boldsymbol{\theta}}f(\hat{\boldsymbol{\omega}}_t,\boldsymbol{\theta}_t)\|^2+[(1+\alpha)(5\beta^2l^2+2)\frac{L\eta_{\boldsymbol{\omega}}^2}{2}\\
    &+\alpha(\frac{l^2\beta^2\eta_{\boldsymbol{\omega}}}{2}+l\eta_{\boldsymbol{\omega}}^2(5\beta^2l^2+2))\\
    &+\alpha(\frac{l\eta_{\boldsymbol{\theta}}^2}{2}+\frac{l^2\eta_{\boldsymbol{\omega}}^2\eta_{\boldsymbol{\theta}}}{2}(5\beta^2l^2+2))]\frac{\sigma^2}{M}\\
    &+((1\!+\!\alpha)(L\!+\!\frac{8}{\eta_{\boldsymbol{\omega}}})\!\!+\!\!\alpha(2l+\frac{8}{\eta_{\boldsymbol{\omega}}})\!+\!\alpha l^2\eta_{\boldsymbol{\theta}})d\Delta^2
\end{align*}

Then we require the parameters satisfy: $\alpha=\frac{1}{16}$, $\beta l \leq\frac{1}{16}$, $\eta_{\boldsymbol{\omega}}(2\beta l+1)^2\kappa l\leq\frac{1}{64}$, $\kappa^2\eta_{\boldsymbol{\omega}}l\leq\frac{1}{128}$ and $\eta_{\boldsymbol{\theta}}=64\kappa^2\eta_{\boldsymbol{\omega}}$

So the inequality can be further simplified as:
\begin{equation}
    \begin{aligned}
    &\quad\mathbb{E}[V_{t+1}]-\mathbb{E}[V_t]\\
    &\leq-\frac{\eta_{\boldsymbol{\omega}}}{16}\mathbb{E}\|\nabla\Phi(\hat{\boldsymbol{\omega}}_t)\|^2-\frac{7}{32}\eta_{\boldsymbol{\omega}}\kappa^2\mathbb{E}\|\nabla_{\boldsymbol{\theta}}f(\hat{\boldsymbol{\omega}}_t,\boldsymbol{\theta}_t)\|^2\\
    &+\frac{\eta_{\boldsymbol{\omega}}\kappa^4}{64}\frac{\sigma^2}{M}+(\frac{l^2\eta_{\boldsymbol{\theta}}}{16}+\frac{17}{\eta_{\boldsymbol{\omega}}})d\Delta^2
    \end{aligned}
\end{equation}

Telescoping the above inequality we can get:
\begin{equation}
    \begin{aligned}
    &\quad\frac{1}{T}\sum_{t=0}^{T-1}\mathbb{E}\|\nabla\Phi(\hat{\boldsymbol{\omega}}_t)\|^2\\
    &\leq\mathcal{O}(\!\!\frac{\Phi(\hat{\boldsymbol{\omega}}_0)-\Phi^*}{\eta_{\boldsymbol{\omega}}T}\!)\!\!+\!\!\mathcal{O}(\!\!\frac{\eta_{\boldsymbol{\omega}}\kappa^4\sigma^2}{M}\!\!)\!\!+\!\!\mathcal{O}(\!(\!\eta_{\boldsymbol{\theta}}\!\!+\!\!\frac{1}{\eta_{\boldsymbol{\omega}}}\!\!)d\Delta^2\!)
    \end{aligned}
\end{equation}

\end{proof}



\end{document}